\newcommand{\kz}[1]{}
\newcommand{\noah}[1]{}
\newcommand{\cd}[1]{}
\newtheorem*{rep@theorem}{\rep@title}
\newcommand{\newreptheorem}[2]{%
\newenvironment{rep#1}[1]{%
 \def\rep@title{#2 \ref{##1}}%
 \begin{rep@theorem}}%
 {\end{rep@theorem}}}
\newcommand\xlabel[2][]{\phantomsection\def\@currentlabelname{#1}\label{#2}}
\theoremstyle{plain}
\newtheorem{theorem}{Theorem}
\newtheorem{lemma}[theorem]{Lemma}
\newtheorem{corollary}[theorem]{Corollary}
\newtheorem{conjecture}[theorem]{Conjecture}
\newtheorem{claim}[theorem]{Claim}
\theoremstyle{definition}
\newtheorem{definition}{Definition}
\newtheorem{defn}[definition]{Definition}
\numberwithin{theorem}{section}
\numberwithin{definition}{section}
\newcommand{\nc}{\newcommand}
\nc{\DMO}{\DeclareMathOperator}
\DeclareMathOperator*{\argmax}{arg\,max}
\DMO{\prox}{prox}
\DMO{\Span}{span}
\DMO{\UCB}{UCB}
\DMO{\LCB}{LCB}
\nc{\dg}{\dagger}
\nc{\WP}{\texttt{W}}
\nc{\VP}{\texttt{V}}
\nc{\KMAX}{K(S+1)}
\nc{\unif}{\mu_{\rm unif}}
\nc{\cover}{{\rm cover}}
\nc{\algname}{\texttt{SPoCMAR}\xspace}
\nc{\PWSNE}{\text{PWSNE}}
\nc{\WSNE}{\text{WSNE}}
\nc{\gamvec}{\gamma}
\nc{\til}{\widetilde}
\nc{\td}{\tilde}
\nc{\wh}{\widehat}
\nc{\todo}[1]{\ifnum\Comments=1 {\color{red}  [TODO: #1]}\fi}
\nc{\old}[1]{\ifnum\Comments=1 {\color{brown}  [OLD: #1]}\fi}
\nc{\BP}{\mathbb{P}}
\nc{\BR}{\mathbb{R}}
\nc{\PPAD}{\textsf{PPAD}\xspace}
\nc{\TFNP}{\textsf{TFNP}\xspace}
\nc{\PSPACE}{\textsf{PSPACE}\xspace}
\nc{\NP}{\textsf{NP}\xspace} 
\nc{\coNP}{\textsf{co-NP}\xspace} 
\nc{\EOTL}{\texttt{EOTL}\xspace}
\nc{\EST}{\texttt{EstVisitation}\xspace}
\nc{\sinkz}{s_{\rm sink}^0}
\nc{\sinko}{s_{\rm sink}^1}
\nc{\fools}[3]{\MF_{#3}({#1}, {#2})}
\nc{\fool}[2]{\MF({#1},{#2})}
\nc{\clip}[2]{{\rm clip}\left[ \left. {#1} \right| {#2} \right]}
\nc{\imax}{\omega}
\DMO{\conv}{conv}
\nc{\st}{\star}
\nc{\MI}{\mathcal{I}}
\nc{\lng}{\langle}
\nc{\MH}{\mathcal{H}}
\nc{\rng}{\rangle}
\DMO{\OOPT}{opt}
\nc{\dopt}[2]{\ell_{\OOPT}({#1},{#2})}
\nc{\grad}{\nabla}
\nc{\MG}{\mathcal{G}}
\nc{\MP}{\mathcal{P}}
\nc{\MC}{\mathcal{C}}
\nc{\TT}{\mathbb{T}}
\nc{\TTmax}{\TT_{\max}}
\DMO{\Reg}{Reg}
\DMO{\Ham}{Ham}
\DMO{\Gap}{Gap}
\DMO{\GD}{GD}
\DMO{\GDA}{GDA}
\DMO{\EG}{EG}
\DMO{\OGDA}{OGDA}
\DMO{\Unif}{Unif}
\nc{\ul}{\underline}
\nc{\ol}{\overline}
\nc{\Qu}{\ul{Q}}
\nc{\Qo}{\ol{Q}}
\nc{\Ro}{\ol{R}}
\nc{\Vu}{\ul{V}}
\nc{\Vo}{\ol{V}}
\nc{\RanQ}{\Delta Q}
\nc{\RanV}{\Delta V}
\nc{\clipQ}{\Delta \breve{Q}}
\nc{\frzQ}{\Delta \mathring{Q}}
\nc{\clipV}{\Delta \breve{V}}
\nc{\clipdelta}{\breve{\delta}}
\nc{\cliptheta}{\breve{\theta}}
\nc{\delmin}{\Delta_{{\rm min}}}
\nc{\delmins}[1]{\Delta_{{\rm min},{#1}}}
\nc{\gapfinal}[1]{\max \left\{ \frac{\frzQ_{{#1}}^{k^\st}(x,a)}{2H}, \frac{\delmin}{4H} \right\}}
\nc{\post}[2]{R({#1}; {#2})}
\nc{\posts}[3]{R_{#3}({#1}; {#2})}
\nc{\algnst}[1]{\begin{align*}#1\end{align*}}
\nc{\algn}[1]{\begin{align}#1\end{align}}
\nc{\matx}[1]{\left(\begin{matrix}#1\end{matrix}\right)}
\nc{\nuu}{\nu}
\nc{\bel}[1]{\mathbf{b}({#1})}
\nc{\nbel}[1]{\bar{\mathbf{b}}({#1})}
\nc{\sbel}[2]{\mathbf{b}'_{#1}({#2})}
\nc{\nsbel}[2]{\bar{\mathbf{b}}'_{#1}({#2})}
\nc{\bone}{\mathbf{1}}
\nc{\sy}{y}
\nc{\sx}{x}
\nc{\MO}{\mathcal O}
\nc{\MU}{\mathcal{U}}
\nc{\ME}{\mathcal{E}}
\nc{\MN}{\mathcal{N}}
\nc{\MK}{\mathcal{K}}
\nc{\MV}{\mathcal{V}}
\nc{\MS}{\mathcal{S}}
\nc{\MT}{\mathcal{T}}
\nc{\BF}{\mathbb F}
\nc{\BQ}{\mathbb Q}
\nc{\MX}{\mathcal{X}}
\nc{\MA}{\mathcal{A}}
\nc{\MD}{\mathcal{D}}
\nc{\MB}{\mathcal{B}}
\nc{\MZ}{\mathcal{Z}}
\nc{\MJ}{\mathcal{J}}
\nc{\MW}{\mathcal{W}}
\nc{\MR}{\mathcal{R}}
\nc{\MY}{\mathcal{Y}}
\nc{\BZ}{\mathbb Z}
\nc{\BN}{\mathbb N}
\nc{\ep}{\epsilon}
\nc{\gapfn}[1]{\varepsilon_{#1}}
\nc{\ggapfn}[2]{\varphi_{#1}({#2})}
\nc{\epsahk}{\gapfn{0}}
\nc{\BH}{\mathbb H}
\nc{\BG}{\mathbb{G}}
\nc{\MF}{\mathcal{F}}
\nc{\One}{\mathbbm{1}}
\nc{\bOne}{\mathbf{1}}
\nc{\Aopt}{\mathcal{A}^{\rm opt}}
\nc{\Amul}{\mathcal{A}^{\rm mul}}
\nc{\SP}{\mathsf P}
\nc{\SQ}{\mathsf Q}
\nc{\Gcircuit}{\texttt{GCircuit}\xspace}
\nc{\DO}{\accentset{\circ}{\D}}
\nc{\mf}{\mathfrak}
\nc{\mfp}{\mathfrak{p}}
\nc{\mfq}{\mf{q}}
\nc{\Sp}{\mbox{Spec}}
\nc{\Spm}{\mbox{Specm}}
\nc{\hookuparrow}{\mathrel{\rotatebox[origin=c]{90}{$\hookrightarrow$}}}
\nc{\hookdownarrow}{\mathrel{\rotatebox[origin=c]{-90}{$\hookrightarrow$}}}
\nc{\hra}{\hookrightarrow}
\nc{\tra}{\twoheadrightarrow}
\nc{\sgn}{{\rm sgn}}
\nc{\aut}{{\rm Aut}}
\nc{\Hom}{{\rm Hom}}
\nc{\img}{{\rm Im}}
\DMO{\id}{Id}
\DMO{\supp}{supp}
\DMO{\KL}{KL}
\nc{\kld}[2]{\KL({#1}||{#2})}
\nc{\ren}[2]{D_2({#1}||{#2})}
\nc{\chisq}[2]{\chi^2({#1}||{#2})}
\nc{\tvd}[2]{\left\| {#1} - {#2} \right\|_1}
\nc{\hell}[2]{H^2({#1}, {#2})}
\DMO{\BSS}{BSS}
\DMO{\BES}{BES}
\DMO{\BGS}{BGS}
\DMO{\poly}{poly}
\DMO{\sink}{sink} 
\nc{\fp}[1]{\MP_1({#1})}
\nc{\BO}{\mathbb{O}}
\nc{\BT}{\mathbb{T}}
\nc{\Gradient}{\nabla}
\DMO{\diag}{diag}
\nc{\norm}[1]{\left \lVert #1 \right \rVert}
\nc{\EE}{\mathbb{E}}
\nc{\contr}[1]{{\texttt{cr}({#1})}}
\nc{\ag}{\contr}
\renewcommand{\Pr}{\mathbb{P}}
\nc{\E}{\mathbb{E}}
\renewcommand{\ra}{\rightarrow}
\title{The Complexity of Markov Equilibrium in Stochastic Games}
\author{Constantinos Daskalakis\thanks{Email: \texttt{costis@csail.mit.edu}.  Supported by NSF Awards CCF-1901292, DMS-2022448 and DMS2134108, by a Simons Investigator Award, by the Simons Collaboration on the Theory of Algorithmic Fairness, by a DSTA grant, and by the DOE PhILMs project (No. DE-AC05-76RL01830).} \and Noah Golowich\thanks{Email: \texttt{nzg@mit.edu}. Supported by  Fannie \& John Hertz Foundation Fellowship and NSF Graduate Fellowship.} \and Kaiqing Zhang\thanks{Email: \texttt{kaiqing@mit.edu}. Supported by Simons-Berkeley Fellowship and  MIT-DSTA grant 031017-00016.}}
\begin{document}
\maketitle

\begin{abstract}
We show that computing approximate stationary Markov coarse correlated equilibria (CCE) in general-sum stochastic games is computationally intractable, even when there are two players, the game is turn-based, the discount factor is an absolute constant, and the approximation is an absolute constant. Our intractability results stand in sharp contrast to normal-form games where exact CCEs are efficiently computable. A fortiori, our results imply that there are no efficient algorithms for learning stationary Markov CCE policies in multi-agent reinforcement learning (MARL), even when the interaction is two-player and turn-based, and both the discount factor and the desired approximation of the learned policies is an absolute constant. In turn, these results stand in sharp contrast to single-agent reinforcement learning (RL) where near-optimal stationary Markov policies can be efficiently learned. Complementing our intractability results for \emph{stationary} Markov CCEs, we provide a decentralized algorithm (assuming shared randomness among players)  for learning a \emph{nonstationary} Markov CCE policy with polynomial time and sample complexity in all problem parameters. Previous work for learning Markov CCE policies all required exponential time and sample complexity in the number of players.

  \end{abstract}

  \noah{TODOs:
    \begin{enumerate}
    \item CE proof (+ prelims regarding policy modifications +swap regret bandit alg)
    \end{enumerate}
    }
    \thispagestyle{empty}
    \newpage
    \clearpage
    \setcounter{page}{1}
\section{Introduction}\label{sec:intro}

Learning in multi-agent, dynamic environments lies at the heart of many recent developments and multiple  significant  challenges in Artificial Intelligence, from playing Go~\cite{silver2016mastering} and Poker~\cite{BrownS19}, to improving autonomous driving algorithms~\cite{shalev2016safe}, and evaluating the outcomes of economic policies~\cite{zheng2020ai}. On this frontier, a prominent and general  learning framework studied in this paper is that of {\em multi-agent reinforcement learning (MARL)}, the multi-agent analog of reinforcement learning (RL) \cite{BusoniuBDS08,ZhangYB21}.  In the same way that RL is mathematically grounded on the model of {Markov Decision Processes}, MARL is grounded on the model of {\em Stochastic Games (SGs)}, the multi-agent analog of MDPs introduced in the seminal work of Shapley~\cite{Shapley53}.

In a stochastic game, several agents interact in an environment over multiple steps: at each step, each agent takes an action, and then the environment transitions to a new state and each agent receives a reward. The rewards and transitions  depend  on both the current state and the profile of actions chosen by the players at the current step.  In contrast to the study of MDPs, where a standard goal is to learn a near-\emph{optimal} policy, a standard goal in the study of SGs is for the agents to learn a near-\emph{equilibrium} policy by interacting. Since their introduction by Shapley, SGs have received extensive study in game theory \cite{NeymanS03,SolanV15}, machine learning \cite{Littman94,HuW03,BusoniuBDS08,ZhangYB21}, and various other fields, due to their broad applicability. %

When there is a single state and the interaction lasts for a single step, SGs degenerate to {\it normal-form} games. In this  case, our understanding of equilibrium existence, computational complexity, and learnability is quite advanced. If the game is two-player and zero-sum, Nash equilibria are identical to minimax equilibria~\cite{VN28}, which can be computed efficiently using linear programming~\cite{Dantzig1951}, and a large number of  (decentralized) learning algorithms have been discovered which converge to the equilibrium when employed by the agents  to iteratively update their strategies, even when the game is a priori unknown to them; see e.g.,~\cite{CBL06,bubeck2012regret} for overviews. Beyond two-player zero-sum games, it is known that computing a Nash equilibrium is intractable in general~\cite{daskalakis2009complexity,ChenDT09}, but correlated and coarse correlated equilibria can be computed efficiently using linear programming, or decentralized learning~\cite{CBL06,bubeck2012regret}.

When there are more states and steps, questions of equilibrium existence, computation and learning become much more intricate, occupying many works in the literature; see e.g.,~\cite{SolanV15,ZhangYB21}. When the players interact over multiple steps, strategic behavior might be {\it history-dependent}, giving rise to notions of equilibrium that are also history-dependent and, thus, extremely complex. Circumventing this complexity, a compelling type of strategic behavior, introduced by Shapley and studied in much of the game theory and machine learning literature, is {\em Markovian}, i.e.,~strategic behavior wherein  the actions chosen by the players at every step of the game depend on the {\it current state} (and potentially the step count), but not the history of states visited and actions played so far. Indeed, under broad and natural conditions, e.g.,~future payoff discounting, there exist Markov Nash equilibria  that are also {\em stationary}, i.e.,~the actions played at every state are also step-count independent; see e.g.,~\cite{Shapley53,Takahashi62,Fink64,SolanV15}. 

On the computation and learning front, most of the progress has been on efficient computation and learning of (approximate) Nash equilibria in {\it two-player zero-sum}  stochastic games; see e.g.,~\cite{brafman2002r,WeiHL17,XieCWY20,ZhangKBY20,SidfordWYY20,BaiJ20,DaskalakisFG20,BaiJY20,LiuTBJ21,JinLWY21}.  Indeed, some of these works provide time- and sample-efficient  learning algorithms for computing Nash equilibria that are also Markovian. 
Beyond the two-player zero-sum case, however, our understanding is lagging. On the one hand, Nash equilibria are computationally intractable, as SGs are more expressive than normal-form games. On the other hand, the complexity and learnability of (coarse) correlated equilibria  are not well-understood in SGs. It  is easy to see that approximate {\it nonstationary} Markov (coarse) correlated equilibria can be computed efficiently via {\it backward induction}, but the complexity of {\it stationary} Markov (coarse) correlated equilibria remained unknown prior to this work. At the same time, a flurry of recent work has provided \emph{learning} algorithms for nonstationary (coarse) correlated equilibria in  finite-horizon episodic  SGs~\cite{LiuTBJ21,MaoB21,SongMB21,JinLWY21}. However, each of these algorithms suffers from one of two shortfalls: either they cannot output Markov equilibria \cite{MaoB21,SongMB21,JinLWY21}, or else they require exponentially many samples in the number of agents \cite{LiuTBJ21}, i.e.,~suffering from the {\it curse of multi-agents} (see \cite{JinLWY21} for additional explanation). 
We defer a more detailed literature review to Appendix \ref{sec:append_related_work}. 

\paragraph{Overview of results.} In this work, we settle the complexity of computing stationary Markov (coarse) correlated equilibria, showing that they are intractable; we then  complement these results with time- and sample-efficient decentralized learning algorithms for computing nonstationary Markov coarse correlated equilibria. In particular, we show the following results (which are summarized and compared to existing results in Table \ref{table:summary}): 
\begin{itemize}
\item In Theorems \ref{thm:perfect-hardness} and \ref{thm:non-perfect-hardness}, we establish intractability of computing stationary Markov coarse correlated equilibria (CCE) in 2-player, discounted general-sum stochastic games. In particular, a notion of stationary Markov CCE called \emph{perfect CCE} (Definition \ref{def:cce}) are \PPAD-hard to approximate up to a constant, and a relaxed notion (\emph{stationary CCE}) are \PPAD-hard to approximate up to a constant assuming the ``PCP for \PPAD'' conjecture (Conjecture \ref{con:pcp-ppad}).
\item To circumvent the above intractability results, we then consider the  computation of \emph{Markov nonstationary CCE}, a relaxation of stationary CCE. While it is  trivial to {\it compute} the  approximate Markov nonstationary CCE using backward induction, the learning problem, in which the SG is {\it unknown} and agents must employ exploratory policies to learn its transitions and rewards, is more challenging. In Theorem \ref{thm:main-ub}, we establish the first guarantee for learning a Markov nonstationary CCE which avoids the curse of multi-agents suffered by prior work \cite{LiuTBJ21}. In particular, the  sample complexity of our algorithm (\algname, Algorithm \ref{alg:main}) is linear in the number of agents. We also show that \algname can be implemented in a decentralized manner (Section \ref{sec:decentralized}), assuming that agents have access to shared common randomness. 
\end{itemize}

{
\begin{table}[!t]\label{table:summary}
\centering
\caption{\small Complexity results of finding CCE in general-sum stochastic games. The two rows correspond to computational and sample complexities, respectively. %
  {\tt Polynomial} means computational and sample costs with polynomial dependence on all problem parameters,  and  {\tt Exponential} means computational and sample costs which are exponential in the number of players. %
}
\vspace{7pt}
\resizebox{\columnwidth}{!}{
\begin{tabular}{c |c c|  c}
\toprule 
 & \multicolumn{2}{c|}{\textbf{Markovian}} & \multirow{2}{*}{\textbf{Non-Markovian}}\\
 \cmidrule{2-3} 
 & \textbf{Stationary} & \textbf{Nonstationary} & \\ 
 \midrule
 \textbf{Computation} & \shortstack{\tt{\PPAD-hard} \\ \textbf{(Theorems \ref{thm:perfect-hardness}, \ref{thm:non-perfect-hardness})}} & \shortstack{\tt{Polynomial}  \\ (Folklore, via backward  induction)} & \multirow{2}{*}{\shortstack{{\tt{Polynomial}} \\ \cite{SongMB21,MaoB21,JinLWY21}}}   \\
 \cmidrule{1-3}
 \textbf{Learning} & \shortstack{\tt{\PPAD-hard} \\ \textbf{(Theorems \ref{thm:perfect-hardness}, \ref{thm:non-perfect-hardness})}}  & 
 \shortstack{{\tt{Exponential}} \cite{LiuTBJ21}; \\
 {\tt{Polynomial} } \textbf{(Theorem \ref{thm:main-ub})} }&  \\
 \bottomrule
\end{tabular}}
\end{table}}

\paragraph{Notation.} We use $\perp$ to denote a null element.  %
For $m \in \BN$, $[m]$ denotes the set $\{1,\cdots,m\}$. For a finite set $\MT$, $\Delta(\MT)$ denotes the space of distributions over $\MT$, $|\MT|$ denotes the cardinality of the set. Let $\sqcup$ denote the disjoint union of sets. For $x\in\RR$, $\sign(x)\in \{\pm 1\}$ denotes the sign of $x$.

\section{Problem Formulation \& Preliminaries}\label{sec:prelim}

\subsection{Stochastic games}
\label{sec:sgs-prelim}
  We begin with some background regarding the terminology and equilibrium concepts in general-sum stochastic games.
  Formally, for some $m \in \BN$, an infinite-horizon  discounted $m$-player\footnote{Hereafter, we use ``player'' and ``agent'' interchangeably.}  stochastic game $\BG$ is defined to be a
  tuple $(\MS, (\MA_i)_{i\in [m]}, \BP, (r_i)_{i \in [m]}, \gamma, \mu)$, where:
  \begin{itemize}
  \item $\MS$ denotes the (finite) \emph{state space}, and we denote  $S=|\MS|$.  
  \item $\MA_i$ denotes the (finite) action space of each player $i \in[m]$, and we denote  $A_i=|\MA_i|$.  We will write $\MA := \MA_1 \times \cdots \times \MA_m$ to denote the \emph{joint action space}, and, for $i \in [m]$, $\MA_{-i} := \prod_{i' \neq i} \MA_{i'}$. %
  \item $\gamma \in [0,1)$ denotes the \emph{discount factor}.
  \item $\mu \in \Delta(\MS)$ denotes the distribution over initial states.
  \item   $r_i : \MS \times \MA \ra [-1,1]$ denotes the \emph{reward function} for player $i$.
  \item  $\BP : \MS \times \MA \ra \Delta(\MS)$ denotes the \emph{transition kernel}: $\BP(\cdot | s, \ba) \in \Delta(\MS)$ denotes the distribution over the next state if joint action profile is played at a state.
  \end{itemize}

  We denote joint action profiles $\ba \in \MA$ with boldface; to denote the action  of some agent $i \in [m]$ when the joint action profile is $\ba$, we write $a_i \in \MA_i$. Similarly, we denote a joint reward profile as $\br \in \BR^m$, with $r_i \in \BR$ denoting the reward to agent $i$.

  \paragraph{Policies: Stationary and  nonstationary.} We primarily consider two types of policies in this paper, namely \emph{stationary} Markov policies and \emph{nonstationary} Markov policies: a \emph{stationary Markov policy} for some player $i$ is a mapping $\pi_i : \MS \ra \Delta(\MA_i)$, and a \emph{nonstationary Markov policy}  for player $i$ is a sequence of maps $\pi_{i,1}, \pi_{i,2},  \ldots : \MS \ra \Delta(\MA_i)$, which we denote by $\pi_i = (\pi_{i,1}, \pi_{i,2},  \ldots)$.  
  A stationary Markov policy $\pi_i$ maps each state $s$ to  a distribution over actions $\pi_i(s) \in \Delta(\MA_i)$ for player $i$; in the nonstationary case, the distribution over actions taken, $\pi_{i,h}( s)$, depends also on the current step $h$. %
  Furthermore, we will often write $\pi_{i}(a_i | s)$ to denote the probability of taking $a_i$ under the distribution $\pi_i(s)$. We denote the set of all stationary Markov  policies of player $i$ by $\Delta(\MA_i)^{\MS}$, and the set of all nonstationary Markov  policies of player $i$ by $\Delta(\MA_i)^{\BN \times \MS}$.\footnote{  Notice that it takes infinite space to specify a general nonstationary policy: to obtain efficient algorithms which output nonstationary policies, we fully specify the policy for some number $H$ of steps and then specify a fixed policy (e.g., playing a uniform action) for all remaining steps. As long as $H \gg \frac{1}{1-\gamma}$, any suboptimality of the policy played at steps $h > H$ incurs only a small approximation error. }

  \emph{Joint} Markov policies are defined analogously to policies for individual players, except they prescribe a distribution over {\it joint} actions at each state: in particular a joint stationary Markov policy is a mapping $\pi : \MS \ra \Delta(\MA)$, and a joint nonstationary Markov policy with horizon $H$ is a sequence $\pi = (\pi_1, \pi_2, \ldots)$, where each $\pi_h : \MS \ra \Delta(\MA)$. With slight abuse of terminology, we will drop ``Markov'' and ``joint'' from our terminology when discussing policies if the context is clear. %
  We say that the stationary policy $\pi : \MS \ra \Delta(\MA)$ is a \emph{product policy} if there are policies $\pi_i : \MS \ra \Delta(\MA_i)$ so that $\pi(s) = \pi_1(s) \times \cdots \times \pi_m(s)$ for all $s\in\cS$. A  nonstationary policy is a product policy if each of its constituent  policies $\pi_h : \MS \ra \Delta(\MA)$ is a product policy. 
  Given a stationary policy $\pi : \MS \ra \Delta(\MA)$ and a player $i \in [m]$, let $\pi_{-i} : \MS \ra \Delta(\MA_{-i})$ denote the joint policy which at each state $s$ outputs the marginal distribution of $\pi(s)$ over $\MA_{-i}$.  For a joint nonstationary policy $\pi \in \Delta(\MA)^{\BN \times \MS}$, we write $\pi_{-i,h} := (\pi_h)_{-i} : \MS \ra \Delta(\MA_{-i})$, and define $\pi_{-i}$ to be the sequence $(\pi_{-i,1}, \pi_{-i,2}, \ldots )$.

\paragraph{Value functions.}  Consider first a joint stationary policy $\pi$. The evolution of the stochastic game $\BG$ proceeds as follows: the system starts at some $s_1 \in \MS$, drawn according to $\mu$, and at each step $h \geq 1$, all players observe $s_h$, draw a joint action $\ba_h \sim \pi(s_h)$, and then the system transitions to some $s_{h+1} \sim \BP(\cdot | s_h, \ba_h)$. We call the tuple $(s_1, \ba_1, s_2, \ba_2, \ldots)$ a \emph{trajectory}, and will write $(s_1, \ba_1, s_2, \ba_2, \ldots) \sim (\BG, \pi)$ to denote a trajectory drawn in this manner.  For any agent $i \in [m]$, their value function $V_i^\pi : \MS \ra [-1,1]$ is defined as the expected $\gamma$-discounted cumulative reward that player $i$ receives if the game starts at state $s_1=s$ and the players act according to $\pi$: %
\vspace{-0.2cm}
  \begin{align}
V_i^\pi(s) := (1-\gamma) \cdot \E_{(s_1, \ba_1, s_2, \ba_2, \ldots) \sim (\BG, \pi)}\left[ \sum_{h=1}^\infty \gamma^{h-1} \cdot r_i(s_{h}, \ba_{h})\bigggiven s_1=s\right]\nonumber.
  \end{align}
  Furthermore, set $V_i^\pi(\mu) := \E_{s \sim \mu} [V_i^\pi(s)]$. 
  The value function is defined similarly for a joint \emph{nonstationary} policy  $\pi \in \Delta(\MA)^{\BN \times \MS}$, except that, due to nonstationarity, it is useful to define separate value functions at each step $h \geq 1$: thus, we write, for $h \geq 1$,
  \begin{align}
V_{i,h}^\pi(s) = (1-\gamma) \cdot \E_{(s_h, \ba_h, s_{h+1}, \ba_{h+1}, \ldots) \sim (\BG, \pi)} \left[ \sum_{h'=h}^\infty \gamma^{h'-h} \cdot r_i(s_{h'}, \ba_{h'}) \Biggiven s_{h} = s \right]\label{eq:vihpi},
  \end{align}
  and for simplicity write $V_i^\pi(s) = V_{i,1}^\pi(s)$. In the expectation in (\ref{eq:vihpi}), for $h' \geq h$ the action $\ba_{h'}$ is drawn from $\pi_{h'}(s_{h'})$. Similarly to above, we define $V_{i,h}^\pi(\mu) := \E_{s \sim \mu} \left[ V_{i,h}^\pi(s) \right]$.

  \subsection{Equilibrium notions} 
  
  To define the equilibrium notions we work with, we begin by introducing best-response policies.
  
\paragraph{Best-response policies.}  For any $i \in [m]$ and for stationary Markov policies $\pi_i : \MS \ra \Delta(\MA_i),\ \pi_{-i} : \MS \ra \Delta(\MA_{-i})$, we let $\pi_i \times \pi_{-i}$ refer to the policy which at each state $s$, samples an action profile according to the product distribution $\pi_i(s) \times \pi_{-i}(s)$.
  Fix any $i \in [m]$, and consider any joint stationary policy $\pi_{-i} : \MS \ra \Delta(\MA_{-i})$ of all players except player $i$. There is a stationary policy of the $i$th player, $\pi_i^\dagger(\pi_{-i}) : \MS \ra \Delta(\MA_i)$, so that
$ 
    V_i^{\pi_i^\dagger(\pi_{-i}) \times \pi_{-i}}(s) = \sup_{\pi_i' : \MS \ra \Delta(\MA_i)} V_i^{\pi_i' \times \pi_{-i}}(s)%
$  %
for all $s \in \MS$. The policy $\pi_i^\dagger(\pi_{-i})$ is called the \emph{best-response policy} of player $i$, and we will write $V_i^{\dagger, \pi_{-i}}(s) := V_i^{\pi_i^\dagger(\pi_{-i}) \times \pi_{-i}}(s)$, and $V_i^{\dagger, \pi_{-i}}(\mu) := \EE_{s \sim \mu} \left[ V_{i}^{\dagger, \pi_{-i}}(s) \right]$.\footnote{It is well-known that when $\pi_{-i}$ is Markov (as is assumed here), the best response amongst all \emph{history-dependent} policies is Markovian (and is in fact deterministic), as it reduces to a single-agent Markov decision process problem; thus it is without loss of generality to constrain ourselves to Markov policies $\pi_i'$ above; an analogous fact also holds for nonstationary policies.}

  Best-response policies for nonstationary policies are defined similarly:  for a nonstationary policy $\pi_{-i} \in \Delta(\MA_{-i})^{\BN \times \MS}$, there is a nonstationary \emph{best-response} policy of the $i$th player $\pi_i^\dagger(\pi_{-i}) \in \Delta(\MA_i)^{\BN \times \MS}$ so that for all $(h,s) \in \BN \times \MS$, $V_{i,h}^{\pi_i^\dagger(\pi_{-i}), \pi_{-i}}(s) = \sup_{\pi_i' \in \Delta(\MA_i)^{\BN \times S}} V_{i,h}^{\pi_i' \times \pi_{-i}}(s)$. As above we write $V_{i,h}^{\dagger, \pi_{-i}}(s) := V_{i,h}^{\pi_i^\dagger(\pi_{-i}) \times \pi_{-i}}(s)$ and $V_{i,h}^{\dagger, \pi_{-i}}(\mu) := \EE_{s \sim \mu} \left[ V_{i,h}^{\dagger, \pi_{-i}}(s) \right]$.

  \paragraph{Coarse correlated equilibrium.}
  We first define approximate Markov coarse correlated equilibria in stochastic games.
  \vspace{-0.2cm}
\begin{defn}[Coarse correlated equilibrium]
  \label{def:cce}
  For $\ep > 0$:
  \begin{itemize}
  \item A stationary policy $\pi \in \Delta(\MA)^\MS$ is an \emph{$\ep$-approximate stationary Markov coarse correlated equilibrium} (abbreviated \emph{$\ep$-stationary CCE}) if $\max_{i \in [m]} \left\{ V_i^{\dagger, \pi_{-i}}(\mu) - V_i^\pi(\mu) \right\} \leq \ep$.
  \item A nonstationary policy $\pi \in \Delta(\MA)^{\BN \times \MS}$ is an \emph{$\ep$-approximate nonstationary Markov coarse correlated equilibrium}  (abbreviated \emph{$\ep$-nonstationary CCE}) if $\max_{i \in [m]} \left\{ V_{i}^{\dagger, \pi_{-i}}(\mu) - V_{i}^\pi(\mu) \right\} \leq \ep$.
  \item  A stationary policy $\pi \in \Delta(\MA)^\MS$ is an \emph{$\ep$-approximate perfect Markov coarse correlated equilibrium} (abbreviated \emph{$\ep$-perfect CCE}) if it holds that $\max_{i \in [m], s \in \MS} \left\{ V_{i}^{\dagger, \pi_{-i}}(s) - V_{i}^\pi(s) \right\} \leq \ep$. 
  \end{itemize}
\end{defn}
It is also possible to define $\ep$-perfect nonstationary CCE in the natural way, but we will not need to do so (as such equilibria are easily seen to be computationally feasible to compute, yet also  impossible to learn in the  model of PAC learning of stochastic games we consider, see Section \ref{sec:pac-rl-sg}). %
  When stationarity (or lack thereof) of $\pi$ is clear from context, we will drop the words ``stationary'' and ``nonstationary'' from the above definitions; furthermore, we will drop the word ``Markov'' when referring to the above definitions since all equilibria we consider are Markovian.

  \paragraph{Nash equilibrium.} %
  We next define approximate Markov Nash equilibria in stochastic games. %

 \begin{defn}[Nash equilibrium]
   \label{def:ne}
   For $\ep > 0$, the notions:
   \begin{itemize}
   \item  \emph{$\ep$-approximate stationary Markov Nash equilibrium} (abbreviated \emph{$\ep$-stationary NE})
   \item \emph{$\ep$-approximate nonstationary Markov Nash equilibrium} (abbreviated \emph{$\ep$-nonstationary NE})
     \item  \emph{$\ep$-approximate perfect Markov Nash equilibrium} (abbreviated \emph{$\ep$-perfect NE})
     \end{itemize}
     are defined to be $\ep$-stationary CCE, $\ep$-nonstationary CCE, and $\ep$-perfect CCE, respectively, which are also product policies.
  \end{defn}
  In the literature, perfect NE are also referred to as \emph{Markov perfect equilibria} \cite{maskin1988theory} for stochastic games. It is  known that perfect a Markov NE always {\it exists}  for discounted SGs \cite{Shapley53,Fink64}, thus so do the stationary and nonstationary NE, and the corresponding CCE counterparts.

  \subsection{The PAC-RL model for stochastic games}
  \label{sec:pac-rl-sg}
  Our results on {\it learning SGs} (Section \ref{sec:ub-res}) operate in the   \emph{probably approximately correct (PAC) learning model} of RL, which is standard in the literature \cite{kakade2003sample,azar2017minimax}. In particular, at the onset of the algorithm, the agents have no information about the transitions $\BP$, the reward functions $r_i$, or the initial state distribution $\mu$; %
  only the parameters $S, \gamma$ are known to all agents, and each agent $i$ knows $A_i$. The agents' only access to the SG is through the ability to repeatedly choose some joint (perhaps nonstationary) policy $\pi$ and then sample a trajectory  $(s_1, \ba_1, \br_1, s_2, \ba_2, \br_2, \ldots) \sim (\BG, \pi)$. In the \emph{centralized setting} (studied, for instance, in \cite{LiuTBJ21}), all agents may communicate with a central coordinator who may choose $\pi$ and observe the entire trajectory. Our algorithm may in fact be implemented in the stricter \emph{decentralized setting} with public randomness, which is discussed in Section \ref{sec:decentralized}. Note that in either case, the agents need to efficiently {\it explore} the environment, as the trajectory data they access might not visit all the state-action pairs with large rewards often enough, usually  leading to  a poor sample complexity.
  
  In order to have computationally efficient algorithms, each trajectory must be truncated at some step $H \in \BN$, after which another trajectory is started anew. In the infinite-horizon discounted setting, we therefore assume that agents can choose to stop playing the SG at some point: in particular for a desired error parameter $\ep > 0$, all agents will truncate after $H:=\frac{\log 1/\ep}{1-\gamma}$ steps, incurring only $\ep$ loss from steps $h > H$.

\subsection{Turn-based stochastic games}
A stochastic game $\BG$ is called a \emph{turn-based stochastic  game} if, at each state $s \in \MS$, there is a single player $i \in [m]$ (called the  \emph{controller} of state $s$, and denoted $i = \ag{s}$) whose action at $s$ entirely determines the reward and the transition to the next state. Formally,   for all $j \in [m]$  there is some function $r_j' : \MS \times (\MA_1 \sqcup \cdots \sqcup \MA_m) \ra [-1,1]$   and some transition kernel $\BP' : \MS \times (\MA_1 \sqcup \cdots \sqcup \MA_m) \ra \Delta(\MS)$ so that $r_j(s, \ba) = r_j'(s, a_{\ag{s}})$ for all $s \in \MS$, $j \in [m]$, $\ba = (a_1, \ldots, a_m) \in\MA$, and so that  $\BP(\cdot | s, \ba) = \BP'(\cdot | s, a_{\ag{s}})$ for all $s \in \MS$, $\ba \in \MA$. It is evident that in turn-based stochastic  games, the notions of $\ep$-CCE and $\ep$-NE are equivalent, both for stationary and nonstationary policies (and the same holds for the perfect versions of the equilibria), since in such games we may restrict to product policies without loss of generality.  %
  
\subsection{\PPAD and the generalized circuit problem}
The problems of computing equilibria of the types defined in Definitions \ref{def:cce} and \ref{def:ne}  are instances of \emph{total search problems} \cite{megiddo1991total}. In particular, they lie in the class
\TFNP, which is the class of binary relations $\MP \subset \{0,1\}^\st \times \{0,1\}^\st$ so that for all $\bx, \by \in \{0,1\}^\st$, there is a polynomial-time algorithm that can determine whether $\MP(\bx, \by)$ holds, and so that for all $\bx \in \{0,1\}^\st$, there is some $\by \in \{0,1\}^\st$ with $|\by| \leq \poly(|\bx|)$ so that $\MP(\bx, \by)$ holds. Equilibrium computation of stochastic games is seen to be in \TFNP as follows: $\bx$ represents the description of the stochastic game, $\by$ represents a proposed equilibrium policy, and $\MP(\bx, \by)$ holds if $\by$ is a (approximate) equilibrium of $\bx$. For all notions of equilibria we have defined, an equilibrium always exists \cite{Fink64,SolanV15} and it may be efficiently checked whether a proposed equilibrium $\by$ is indeed an equilibrium for the game represented by $\bx$.

The class $\PPAD$ \cite{Papadimitriou1994} is defined as the class of all problems in $\TFNP$ which have a polynomial-time reduction to the \textsf{End-of-the-Line} (\EOTL) problem; we refer the reader to \cite{Papadimitriou1994,R16} for a description of \EOTL, as we do not need to directly use its definition. To establish our hardness results (i.e.,   \PPAD-completeness for computing approximate stationary CCE), we instead use the fact, proven in \cite{R16}, that the $\ep$-\Gcircuit problem (Definition \ref{def:gcircuit}) is \PPAD-complete (Theorem \ref{thm:gcircuit-ppad}). Additional preliminaries regarding \PPAD and $\ep$-\Gcircuit are presented in Section \ref{sec:ppad-prelim}.

\section{\PPAD-hardness results for stationary equilibria}\label{sec:lb-res}
We next state our main lower bounds, which establish hardness for finding stationary CCE in infinite-horizon discounted  SGs. To do so, we first prove hardness for finding stationary NE in the special case of turn-based discounted SGs, and then note that in such games, stationary Nash equilibria and stationary coarse correlated equilibria coincide (as do the perfect versions). %

\begin{theorem}[\PPAD-hardness for perfect equilibria]%
  \label{thm:perfect-hardness}
  There is a constant $\ep > 0$ so that the problem of computing $\ep$-perfect NE in 2-player, $1/2$-discounted turn-based stochastic games is \PPAD-hard.

  Thus, computing $\ep$-perfect CCE in 2-player, $1/2$-discounted stochastic games is \PPAD-hard.
\end{theorem}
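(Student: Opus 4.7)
I propose a reduction from the $\epsilon$-\Gcircuit problem, which by Theorem~\ref{thm:gcircuit-ppad} is \PPAD-complete. Given an instance of $\epsilon$-\Gcircuit with nodes $v_1,\ldots,v_n$ and gates from the standard gate set (equality, constant multiplication, addition, comparison, etc.), I will construct in polynomial time a two-player, $1/2$-discounted turn-based stochastic game $\BG_C$ and a constant $\epsilon' = \Theta(\epsilon)$ such that any $\epsilon'$-perfect NE of $\BG_C$ yields an $\epsilon$-valid assignment of $C$. The second statement of the theorem, hardness of $\epsilon$-perfect CCE, will then follow immediately, because in a turn-based game only one player acts at each state, so any stationary Markov joint distribution over actions agrees on per-state values with its product marginals; hence $\epsilon$-perfect CCE and $\epsilon$-perfect NE coincide in the turn-based setting.

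\textbf{Construction.} The state space of $\BG_C$ consists of one distinguished ``node state'' $s_v$ per circuit node $v$, together with a constant-size gadget of auxiliary states for each gate. Each state is assigned a controlling player in $\{1,2\}$. The plan is that in any $\epsilon'$-perfect NE $\pi$, the value $V_i^\pi(s_v)$ of an appropriate player $i$ will serve as the assignment $\xi_v \in [0,1]$ for node $v$, and the gadget for each gate will enforce the corresponding gate constraint on these values up to $O(\epsilon')$. Arithmetic gates are implemented purely through stochastic transitions and $\{-1,0,1\}$-valued rewards: for a stationary policy the Bellman identity $V^\pi(s) = (1-\gamma) r(s,\pi(s)) + \gamma \EE_{s'}[V^\pi(s')]$ with $\gamma = 1/2$ expresses the value at a gadget state as an explicit affine combination of the immediate reward and its neighboring values, so by choosing the transition distribution and reward appropriately one can reproduce any desired affine relation among values (after a fixed affine rescaling into $[0,1]$).

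The critical gadget is the comparison gate, which I implement by exploiting the per-state best-response structure of perfect NE. For a comparison with inputs $u_1,u_2$ and output $v$, create a state $s_v^{\rm cmp}$ controlled by player $1$ with two actions $a_1, a_2$ leading via short sub-gadgets to states whose player-$1$ values equal (affine rescalings of) $V_1^\pi(s_{u_1})$ and $V_1^\pi(s_{u_2})$ respectively. The $\epsilon'$-perfect NE condition at $s_v^{\rm cmp}$ forces player $1$ to place essentially all mass on the action whose target value is larger, with mixing allowed only when the two targets lie within $O(\epsilon')$ of each other. To read this mixing probability out as $\xi_v$, the sub-gadgets are engineered so that player $2$'s value on the $a_1$-branch equals $1$ and on the $a_2$-branch equals $0$, giving $V_2^\pi(s_v^{\rm cmp}) = \pi_1(a_1 \mid s_v^{\rm cmp})$; the node state $s_v$ is then controlled by player $2$ with a unique action transitioning to $s_v^{\rm cmp}$, so that $V_2^\pi(s_v)$ records this probability.

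\textbf{Analysis and main obstacle.} The forward direction, that any $\epsilon$-valid assignment of $C$ extends to an $\epsilon'$-perfect NE of $\BG_C$, is routine. For the reverse direction, the per-state $\epsilon'$-perfect NE condition, combined with the Bellman identity above, yields at each gadget a constraint stating that the corresponding gate relation among the $\xi_{v_j}$'s holds up to $O(\epsilon')$; since each gadget has constant size and $1/(1-\gamma) = 2$, these slacks aggregate to only $O(\epsilon')$ per node, yielding an $\epsilon$-valid assignment for $\epsilon = \Theta(\epsilon')$. The principal technical obstacle will be engineering the comparison gadget so that the player-$1$ sub-branches deliver exactly the desired input values (with no spurious contribution from intermediate rewards or self-loops) while simultaneously the player-$2$ sub-branches deliver crisp $\{0,1\}$ values independent of $u_1, u_2$; this decoupling of the two players' payoffs along a shared physical trajectory is the main novelty relative to normal-form hardness reductions, and it is also the reason the statement is specific to \emph{perfect} NE, since the per-state condition is what supplies the gate constraints at gadget states that might be reached with tiny probability under $\mu$.
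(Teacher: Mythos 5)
Your overall frame matches the paper's: both reduce from $\ep$-\Gcircuit (Theorem \ref{thm:gcircuit-ppad}), build one constant-size gadget per gate in a turn-based game, and dispose of the CCE statement by noting that CCE and NE coincide in turn-based games. But there are two genuine gaps. First, your encoding of node values as \emph{value functions} and your implementation of arithmetic gates ``purely through stochastic transitions and rewards'' via the Bellman identity cannot realize the $G_{\times,+}$ gate, because that gate requires the \emph{truncated} relation $\pi(v_3)=\max\{\min\{\xi\pi(v_1)+\zeta\pi(v_2),1\},0\}\pm\ep$, whereas the Bellman identity is linear and reproduces only the untruncated affine combination. When the affine combination leaves $[0,1]$ (which happens essentially in the circuits produced by the \PPAD reduction, e.g.\ for subtraction and negation gates), your gadget either imposes an unsatisfiable constraint or reads off an out-of-range assignment, so the reduction does not solve $\ep$-\Gcircuit. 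The paper avoids this by encoding each node value as the \emph{mixing probability} $\pi(v)\in[0,1]$ of a binary action and deriving the gate relation from the unimprovability conditions at the output state and a helper state (Lemma \ref{lem:g-times-plus}); there the truncation comes for free because probabilities live in $[0,1]$, and the two one-sided contradictions ($\pi(v_3)$ too large forces $\pi(w)=1$ which forces $\pi(v_3)=0$, and symmetrically) supply exactly the clipped relation.

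Second, the issue you flag as ``the principal technical obstacle''---assigning each state to one of only two players while keeping all gadget-mandated rewards consistent---is not an engineering detail you can defer: a single node state can appear as an input to several gates, each of whose gadgets prescribes a different reward to the helper-controlling player at that state, and these prescriptions genuinely conflict. The paper resolves this with the valid-coloring machinery (Definition \ref{def:validity} and Lemma \ref{lem:construct-valid-coloring}), which rewrites the circuit (inserting identity-like chains cut by $G_<$ gates, whose outputs carry no coloring constraint) so that a single consistent reward can be assigned to each node state; without something playing this role your construction only yields hardness for games with polynomially many players. A smaller point: an $\ep'$-perfect NE bounds only each player's \emph{aggregate} improvement at each state, not the per-action suboptimality of actions in the support, so your claim that mixing at the comparison state is ``allowed only when the two targets lie within $O(\ep')$'' needs the passage to well-supported equilibria (Lemmas \ref{lem:reduce-to-sg} and \ref{lem:convert-to-ws}), which costs a polynomial but not constant-factor loss in $\ep$.
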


We next turn to the  intractability results for  computing the weaker ``non-perfect'' notions of equilibria, namely $\ep$-stationary NE in turn-based stochastic games, and more generally, $\ep$-stationary CCE in stochastic games. Motivation for such results is two-fold: first, they are a natural target in terms of extending the reach of intractability results; second, if the initial state distribution $\mu$ is not sufficiently {\it exploratory}, in the PAC-RL model it is impossible to \emph{learn} notions of equilibria which are perfect in that they require a condition to hold for {\it each state}. %
Thus, in the learning setting (to be discussed in Section \ref{sec:ub-res}),  our algorithm can only find the (non-perfect) notions of $\ep$-nonstationary NE in turn-based games and $\ep$-nonstationary CCE in general stochastic games. It is thus desirable  to have corresponding lower bounds for stationary NE in turn-based games and stationary CCE in general games. 
In Theorem \ref{thm:non-perfect-hardness} below, we prove such lower bounds: %
\begin{theorem}[\PPAD-hardness for non-perfect equilibria]
  \label{thm:non-perfect-hardness}
  There are constants $\ep, c > 0$ so that:
  \begin{itemize}
  \item The problem of computing $c/S$-stationary NE in 2-player, 1/2-discounted turn-based stochastic games is \PPAD-hard; and thus so is the problem of computing $c/S$-stationary CCE in 2-player, 1/2-discounted stochastic games.
  \item Under the ``PCP for \PPAD conjecture'' (Conjecture \ref{con:pcp-ppad}),  the problem of computing $\ep$-stationary NE in 2-player, 1/2-discounted turn-based stochastic games is \PPAD-hard; and thus so is the problem of computing $\ep$-stationary CCE in 2-player, 1/2-discounted stochastic games.
  \end{itemize}
\end{theorem}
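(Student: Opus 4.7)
The plan is to derive both claims by strengthening the reduction used in the proof of Theorem~\ref{thm:perfect-hardness}. That reduction constructs, from an $\epsilon_0$-\Gcircuit instance with $N$ gates, a 2-player $1/2$-discounted turn-based SG whose state space $\MS$ contains one ``gate state'' $s_v$ per gate $v$ (plus at most $O(1)$ auxiliary states), such that whenever a policy $\pi$ satisfies the local, perfect-style condition $V_i^{\dagger,\pi_{-i}}(s_v) - V_i^\pi(s_v) \le \epsilon_1$ at gate state $s_v$, the action distribution played at $s_v$ under $\pi$ encodes an $\Omega(\epsilon_1)$-satisfying value for gate $v$. Since we are free to specify the initial-state distribution $\mu$ in the reduction, I would take $\mu$ to be uniform on the gate states; a simple averaging argument then converts a bound on the $\mu$-averaged stationary gap into a bound on the per-state gaps.

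For the first bullet, fix $\epsilon_0, \epsilon_1 > 0$ to be the constants supplied by the perfect-NE hardness of Theorem~\ref{thm:perfect-hardness}. Since $V_i^{\dagger,\pi_{-i}}(s) - V_i^\pi(s) \ge 0$ for every $s, i$, any $\delta$-stationary NE $\pi$ satisfies, for every gate state $s_v$,
\[
V_i^{\dagger,\pi_{-i}}(s_v) - V_i^\pi(s_v) \;\le\; \frac{1}{\mu(s_v)} \cdot \left( V_i^{\dagger,\pi_{-i}}(\mu) - V_i^\pi(\mu) \right) \;\le\; |\supp(\mu)| \cdot \delta \;\le\; S \cdot \delta.
\]
Taking $c < \epsilon_1$ and $\delta = c/S$ therefore produces an $\epsilon_1$-perfect NE at every gate state, which suffices to solve the original \Gcircuit instance. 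Hardness for $c/S$-stationary CCE then follows because NE and CCE coincide in turn-based games, as noted in the excerpt.

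For the second bullet, I would apply the same reduction but now invoke Conjecture~\ref{con:pcp-ppad} in its natural form, namely that there exist absolute constants $\epsilon_0, \eta > 0$ for which $\epsilon_0$-\Gcircuit remains \PPAD-hard even when one is only required to $\epsilon_0$-satisfy a $(1-\eta)$-fraction of the gates. With $\mu$ uniform on gate states and $\pi$ a $\delta$-stationary NE, Markov's inequality applied to the non-negative random variable $V_i^{\dagger,\pi_{-i}}(s) - V_i^\pi(s)$ yields
\[
\Pr_{s \sim \mu}\!\left[ V_i^{\dagger,\pi_{-i}}(s) - V_i^\pi(s) > \epsilon_1 \right] \;\le\; \delta / \epsilon_1.
\]
Choosing $\delta := \eta\epsilon_1$ forces at most an $\eta$-fraction of gate states to have gap exceeding $\epsilon_1$, so the assignment extracted from $\pi$ $\Omega(\epsilon_1)$-satisfies at least a $(1-\eta)$-fraction of the gates, contradicting the conjecture.

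The main technical obstacle I anticipate is making the ``gate-local'' soundness property of the Theorem~\ref{thm:perfect-hardness} reduction sufficiently modular: I need that the scalar gap $V_i^{\dagger,\pi_{-i}}(s_v) - V_i^\pi(s_v)$ at a single gate state $s_v$ alone controls the satisfaction of gate $v$ by the assignment read off from $\pi(s_v)$, with no cross-state dependencies that could cancel across the average. A secondary subtlety is handling the $O(1)$ auxiliary states: I would either argue directly from the construction that their per-state deviations are forced to zero by any Markov $\pi$, or keep $\mu$ supported only on the gate states and re-normalize (the auxiliary count being $O(1) \ll N$, no asymptotic factor is lost). Once these local soundness properties are explicit, both averaging arguments above go through with only constant-factor loss in the approximation parameters.
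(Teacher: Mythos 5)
Your overall architecture matches the paper's: take $\mu$ uniform, use nonnegativity of the best-response gap plus averaging for the first bullet, and Markov's inequality plus Conjecture \ref{con:pcp-ppad} for the second. The first bullet is correct exactly as you wrote it (and is verbatim the paper's argument): a $c/S$-stationary NE under uniform $\mu$ is a $c$-perfect NE, and Theorem \ref{thm:perfect-hardness} can then be invoked as a black box, so no ``gate-local'' property of the reduction needs to be re-examined there.

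The second bullet has a genuine gap, and it sits precisely at the step you flag as your ``main technical obstacle.'' The soundness of the gadgets in the reduction is \emph{not} expressible in terms of the value gap $V_i^{\dagger,\pi_{-i}}(s_v) - V_i^\pi(s_v)$ at a single state. Lemmas \ref{lem:g-times-plus} and \ref{lem:g-lt} require the relevant states to be $\ep'$-\emph{unimprovable}, a well-supported condition: every action in the support of $\pi(s)$ must have $Q$-value within $\ep'$ of the best action. A small value gap only controls the suboptimality \emph{averaged} over the support, so $\pi(v_3)$ can place small but non-negligible mass on a badly suboptimal action, keep the value gap tiny, and yet encode a constant-wrong assignment. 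The paper bridges this with Lemma \ref{lem:convert-to-ws}, a global policy modification (truncating low-probability suboptimal actions, following \cite{daskalakis2009complexity}) that costs a $\sqrt{\ep}$ degradation, and only afterwards applies Markov's inequality to the per-state well-supportedness deficits $\ep_{i,s}$. Your sketch applies Markov directly to the value gaps and has no analogue of this conversion, so the hypothesis of the gadget lemmas is never established at the surviving states. A second, smaller issue: each gate gadget occupies \emph{two} non-sink states (the output node $v_3$ and a helper node $w_G$), so the auxiliary states number $\Theta(|\MG|)$ rather than $O(1)$; both must be unimprovable for a gate to be certified (which is why the paper loses a factor and gets a $1-3\delta$ fraction of gates), and your fallback of supporting $\mu$ only on gate states would leave the helper states entirely uncontrolled, breaking even the first bullet in that variant. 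With $\mu$ uniform over all of $\MS$ and the well-supported conversion inserted before the Markov step, your argument becomes the paper's.
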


\subsection{Proof overview for Theorems \ref{thm:perfect-hardness} and \ref{thm:non-perfect-hardness}}
The proofs of Theorems \ref{thm:perfect-hardness} and \ref{thm:non-perfect-hardness} proceed by reducing the $(\ep, \delta)$-\Gcircuit problem, introduced in Section \ref{sec:ppad-prelim}, to the problem of finding approximate stationary Nash equilibria in 2-player general-sum turn-based stochastic games. We overview here the proof of Theorem \ref{thm:perfect-hardness}, which uses \PPAD-hardness of the $(\ep, 0)$-\Gcircuit problem (Theorem \ref{thm:gcircuit-ppad}); the proof of Theorem \ref{thm:non-perfect-hardness} is similar, except that the second part of the theorem relies on the \PPAD-hardness of the $(\ep, \delta)$-\Gcircuit problem for some constants $\ep, \delta > 0$, which is not yet known but is rather the content of Conjecture \ref{con:pcp-ppad}.

Given an instance $\MC$ of the $(\ep, 0)$-\Gcircuit problem, we wish to construct a 2-player, 1/2-discounted turn-based stochastic game $\BG$ so that given an $\ep'$-perfect NE of $\BG$, for some constant $\ep' > 0$, we can compute an assignment of values to the nodes of the generalized circuit $\MC$ which $\ep$-satisfies all gates. To do so, we construct $\BG$ by creating a number of ``gadgets'' (Section \ref{sec:game-gadgets}), each of which implements one gate in the generalized circuit instance $\MC$. Each such gadget consists of a constant number of states of $\BG$, each of which is controlled by a single player who can take one of two actions, say $\{0,1\}$, at the state. A stationary policy of $\BG$ then is equivalent to a mapping $\pi : \MS \ra [0,1]$, where $\MS$ is the state space of $\BG$ and, for $s \in \MS$, $\pi(s)$ denotes the probability that the agent controlling $s$ chooses action 1 at state $s$. We then define transitions and rewards for the states in each gadget in a way that forces any $\ep'$-perfect NE $\pi$ of $\BG$ to have the property that the restriction of $\pi$ to the gadget $\ep$-approximately satisfies the gate corresponding to the gadget. By defining transitions between the gadgets in a way that mirrors the structure of the circuit $\MC$, we may achieve the desired reduction. The gadgets we use are somewhat reminiscient of the game gadgets used in \cite{daskalakis2009complexity} to show \PPAD-hardness of computing Nash equilibria in graphical games.

The reduction as described above suffices to show \PPAD-hardness of computing $\ep'$-perfect NE of $S$-player stochastic turn-based games, namely games in which a different player controls each state. To establish hardness for {\it 2-player}  games, care must be taken to ensure that for each player, the rewards assigned to them from different gadgets do not conflict with each other. In fact, as we describe in Section \ref{sec:valid-colorings}, conflicts could arise for the gadgets that we use. To overcome this issue, we show (in Lemma \ref{lem:construct-valid-coloring}) how to map the given generalized circuit instance $\MC$ to an equivalent one, $\MC'$, which has the property that conflicts as described above \emph{cannot} arise. In particular, we introduce the notion of \emph{valid colorings} (Definition \ref{def:validity}) to establish a formal condition on the circuit instance $\MC'$ which guarantees that there will be no conflicts. We then show, using our game gadgets, how to map the instance $\MC'$ (equipped with a valid coloring) to a 2-player stochastic game $\BG$ whose $\ep'$-perfect NE yield an assignment of $\MC$ that $\ep$-satisfies all gates.

The above description omits some details; for instance, rather than reducing directly to the problem of $\ep'$-perfect NE in stochastic games, we instead reduce $\ep$-\Gcircuit to the problem of computing \emph{perfect well-supported Nash equilibria in stage games} (Definition \ref{def:wsne-sg}), which in turn reduces to perfect NE (Lemma \ref{lem:reduce-to-sg}). The full proofs for our lower bounds are in Section \ref{sec:proof-lb}. 

\section{A Nearly Decentralized MARL Algorithm}\label{sec:ub-res}

Given the intractability results discussed in the previous section, it is natural to relax the notion of (Markov) {\it stationary}  equilibria. A very natural relaxation, and indeed one considered in a number of recent works, is to drop the requirement of \emph{stationarity} of the equilibrium policy. While computing approximate  nonstationary (coarse) correlated equilibria in general-sum discounted stochastic games is straightforward via backward induction, %
the learning problem, in which the stochastic game is {\it unknown} and the players must employ {\it exploratory} policies to learn an equilibrium, is significantly less trivial. All prior work (on finite-horizon episodic SGs) for learning nonstationary equilibria   either requires a number of samples exponential in the number of players \cite{BaiJ20,LiuTBJ21}, or else does not compute Markov policies \cite{MaoB21,JinLWY21,SongMB21}. In this section, we state Theorem \ref{thm:main-ub}, which establishes a nearly decentralized learning algorithm that computes a Markov nonstationary equilibrium in time polynomial in the number of players.

\paragraph{Reduction to the finite-horizon case.} 
We address the problem of learning $\ep$-nonstationary CCE   in infinite-horizon discounted games by using a standard reduction to the problem of computing $\ep$-nonstationary CCE  in \emph{finite-horizon undiscounted} games, which is the setting studied by the majority of the aforementioned work \cite{MaoB21,JinLWY21,SongMB21,LiuTBJ21}. A finite-horizon stochastic game $\BG$ is defined identically to the infinite-horizon case (Section \ref{sec:sgs-prelim}), except that the discount factor $\gamma$ is replaced by an integer $H \in \BN$, denoting the \emph{horizon}; as such, the total reward is no longer discounted, but is summed from steps $h =1$ to $H$.

In Section \ref{sec:finite-horizon-prelim},  we formally define policies, value functions, and equilibria in finite-horizon SGs, which closely follow the corresponding notions for the discounted case in Section \ref{sec:prelim}. We then discuss the simple reduction from episodic learning of an infinite-horizon discounted game with discount factor $\gamma$ to episodic learning of a finite-horizon game with horizon $H := \frac{\log 1/\ep}{1-\gamma}$. Owing to the fact that $\gamma^H \leq \ep$, this reduction preserves nonstationary equilibria up to an additive approximation of $\ep$. Thus, in the remainder of this section, we focus on a  finite-horizon stochastic game $\BG = (\MS, (\MA_i)_{i \in [m]},\BP,(r_i)_{i \in [m]}, H, \mu)$, and aim to learn an $\ep$-CCE in the episodic PAC-RL model, see Section \ref{sec:pac-rl-sg}. An additional motivation for studying the finite-horizon case is to make a fair comparison with recent works, e.g.,  \cite{MaoB21,JinLWY21,SongMB21,LiuTBJ21}, with the goal of learning a Markov CCE in a decentralized fashion, with polynomial dependence on all  parameters. %

\subsection{Preliminaries for adversarial bandit routine}
\label{sec:adv-bandit}
Our learning algorithm will require all players to choose their actions at certain steps of each episode according to a {\it no-regret} bandit algorithm at each state. Therefore, we briefly overview the setup and guarantees of adversarial no-regret bandit learning. 
Consider the following setting involving a bandit learner and an adversary interacting over $T$ rounds. The learner has access to a finite set $\MB$ of arms, with $B := |\MB|$.  For each time step $t \in [T]$:
  \begin{enumerate}
  \item The learner picks a distribution $p_t \in \Delta(\MB)$;
  \item The adversary chooses a loss vector $\ell_t \in [0,1]^\MB$, depending on the arms chosen by the learner at previous steps, as well as a vector $\til \ell_t \in [0,1]^\MB$, so that, if $\MF_t$ denotes the sigma-algebra generated by all random variables in the adversary's view up to time $t$ (including $\ell_t$), for all $b \in \MB$,we have  $\E[\til \ell_t(b) |  \MF_t] = \ell_t(b)$. %
  \item The learner takes action $b_t \sim p_t$ and sees  $\til \ell_t(b_t)$, which satisfies $\E[\til \ell_t(b_t) | b_t, \MF_t] = \ell_t(b_t)$.  
  The learner uses the pair $(b_t, \til \ell_t(b_t))$ to update its distribution. 
  \end{enumerate}

Theorem \ref{thm:adv-bandit-external} below gives a high-probability regret guarantee for an adversarial bandit algorithm, which may be taken to be \texttt{Exp3-IX} \cite{neu2015explore}; the statement of the theorem differs slightly from that in prior works, so we explain how to derive it from \cite{neu2015explore} in Section \ref{sec:bandit-derivations}. Note that any such adversarial bandit algorithms with high-probability guarantees would be sufficient for us later, and we choose  \texttt{Exp3-IX} mainly for its simplicity and reprensentativeness.

\begin{theorem}[\cite{neu2015explore}, Theorem 1]
  \label{thm:adv-bandit-external}
  There is an algorithm for the above adversarial bandit setting which obtains the following regret guarantee and runs in $\poly(B)$ time at each step: for any $T_0 \in \BN$, $\delta \in (0,1)$, we have that, with probability at least $1-\delta$, for all $T \leq T_0$,
  \begin{align}
\max_{b \in \MB} \sum_{t=1}^T \left( \ell_t(b_t) - \ell_t(b) \right) \leq O \left ( \sqrt{TB } \cdot \log(T_0 B/\delta) \right)\nonumber.
  \end{align}
\end{theorem}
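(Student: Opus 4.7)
The plan is to reduce the stated guarantee to the standard high-probability regret guarantee for \texttt{Exp3-IX} against a directly-observed adversarial loss sequence, and then handle two extensions: the use of the noisy-but-unbiased observations $\til\ell_t$ in place of the true losses $\ell_t$, and the uniform-in-$T$ requirement. Since the learner only ever queries the observed quantity $\til\ell_t(b_t)$, one simply runs \texttt{Exp3-IX} treating $\til\ell_t \in [0,1]^\MB$ as the adversarial loss vector; the per-step runtime is $\poly(B)$ because the update is a multiplicative reweighting over $\MB$.

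First, fix any $T \leq T_0$ and any $\delta' \in (0,1)$. Applying Theorem~1 of \cite{neu2015explore} to the sequence $\til\ell_t$ yields, with probability at least $1-\delta'$,
\begin{align*}
\max_{b \in \MB} \sum_{t=1}^T \bigl(\til\ell_t(b_t) - \til\ell_t(b)\bigr) \leq O\!\left(\sqrt{TB \log(B/\delta')}\right).
\end{align*}
Note that the comparator here is the best arm with respect to the \emph{estimated} losses, not the true losses, so one must still replace $\til\ell$ by $\ell$ on both sides of the inequality.

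This replacement is handled by martingale concentration. For each fixed $b \in \MB$, the hypothesis $\E[\til\ell_t(b) \mid \MF_t] = \ell_t(b)$ together with $[0,1]$-boundedness makes $\{\til\ell_t(b) - \ell_t(b)\}_t$ a bounded martingale difference sequence, so Azuma--Hoeffding gives a deviation bound of $O(\sqrt{T \log(1/\delta')})$ with probability at least $1-\delta'$. An entirely analogous argument against the enlarged filtration $\sigma(\MF_t, b_t)$, using the hypothesis $\E[\til\ell_t(b_t) \mid b_t, \MF_t] = \ell_t(b_t)$, bounds $\bigl|\sum_t (\til\ell_t(b_t) - \ell_t(b_t))\bigr|$ similarly. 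A union bound over the $B$ arms combines all of these events, and substituting into the Exp3-IX bound replaces $\til\ell$ by $\ell$ on both sides at a cost absorbed into the dominating $\sqrt{TB \log(B/\delta')}$ term.

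Finally, to obtain the uniform-in-$T$ guarantee I would take a union bound over $T \in \{1, \ldots, T_0\}$ with $\delta' := \delta/(CT_0)$ for an absolute constant $C$, inflating the logarithmic factor to exactly the stated $\log(T_0 B/\delta)$. The main obstacle is conceptually mild but notationally careful: one must keep track of the correct filtrations for the two different martingale bounds (over a fixed arm $b$ versus over the played arm $b_t$) and ensure that the simultaneous-over-$b$ concentration matches the $\max_b$ comparator inside the Exp3-IX regret statement. Beyond that, the argument is a direct plug-in of standard bandit-concentration machinery.
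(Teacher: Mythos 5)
Your proposal is correct and follows essentially the same route as the paper's Appendix C: invoke the high-probability \texttt{Exp3-IX} guarantee of \cite{neu2015explore} for the observed sequence $\til\ell_t$, then use Azuma--Hoeffding on the martingale difference sequences $\til\ell_t(b_t)-\ell_t(b_t)$ and $\til\ell_t(b)-\ell_t(b)$ (with a union bound over arms) to transfer the bound to the true losses $\ell_t$. The only caveat is your last step: the uniform-in-$T$ form of the \texttt{Exp3-IX} regret term should be taken directly from the anytime (adaptive learning-rate) version of Neu's Theorem~1, since a union bound over $T\le T_0$ of fixed-horizon runs would only yield $O(\sqrt{T_0 B}\cdot\log(\cdot))$ rather than $O(\sqrt{TB}\cdot\log(\cdot))$ for each $T$; the union bound over $T$ is fine for the Azuma--Hoeffding terms.
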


\subsection{The \algname algorithm}

  We are ready to introduce our main algorithm, called \algname (\textbf{S}tage-based \textbf{Po}licy \textbf{C}over for \textbf{M}ulti-\textbf{A}gent Learning with \textbf{R}max), presented in full in Algorithm \ref{alg:main}. 
  The \algname algorithm combines multiple tools from the literature in order to learn a Markov equilibrium while beating the curse of multi-agents: it uses an adversarial bandit routine at each state (Section \ref{sec:adv-bandit}), similar to the recent works  \cite{MaoB21,SongMB21,JinLWY21}, optimistic rewards similar to those in the \texttt{Rmax} algorithm \cite{brafman2002r} to induce exploration, as well as a \emph{policy cover} (see, e.g., \cite{agarwal2020pcpg,foster2021statistical,jin2020reward}) to ensure that exploratory policies learned in the past are not forgotten.

  In more detail, the algorithm proceeds as follows: it takes as input the parameters $m, \MS, \MA, H$ of the finite-horizon stochastic game, as well as additional parameters $K, N_{\rm visit} \in \BN$, $p \in (0,1)$ whose interpretations will be explained below. \algname's computation proceeds in a number of \emph{stages} $q$ (step \ref{it:loop-q}). At a high level, the algorithm will attempt to find, for each pair $(h,s)$, a policy $\pi_{h,s}^\cover$ which visits $(h,s)$ with nontrivial probability (namely, at least $p$; see step \ref{it:set-pi-cover}). For each stage $q$, the algorithm loops over $h = H, H-1, \ldots, 1$ (step \ref{it:loop-h}), and for each such value of $h$, the algorithm first re-initializes all adversarial bandit instances according to a bandit algorithm satisfying the guarantee of Theorem \ref{thm:adv-bandit-external} (step \ref{it:bandit-init}) and then loops over all policies in a \emph{policy cover} set $\Pi_h^q$ (step \ref{it:loop-over-pi}), which is the set of current non-null cover policies $\pi_{h,s}^\cover$. To deal with the case that $\Pi_h^q$ is empty (e.g., if $q = 1$), in step \ref{it:loop-over-pi} the algorithm also loops over the policy $\pi^\MU$ which prescribes all agents to choose their action uniformly at random at each state. %

  For each $\pi \in \Pi_h^q \cup \pi^\MU$, the algorithm executes a policy $\ol \pi$ (step \ref{it:define-bar-pi}), which is identical to $\pi$ except that at step $h$ each agent plays according to her  adversarial bandit instance. The policy $\ol \pi$ is executed for a total of $K$ \emph{episodes} (step \ref{it:K-episodes}). The algorithm then uses the trajectory data drawn from $\ol \pi$ at each episode to update the adversarial bandit instances at step $h$ (steps \ref{it:bandit-begin} to \ref{it:bandit-end}). Using the data collected from all $K$ episodes for each cover policy in $\Pi_h^q \cup \pi^\MU$, the algorithm then computes a function $\Vo_{i,h}^q : \MS \ra \BR$, representing a value function estimate for a coarse correlated equilibrium, in steps \ref{it:define-vbar} through \ref{it:define-vbar-real}. Crucially, the estimates $\Vo_{i,h}^q$ depend on $\Vo_{i,h+1}^q$, necessitating the backward loop over $h$ in step \ref{it:loop-h}.

  After this backward  loop over $h$ has completed, the algorithm constructs a policy $\til \pi^q$ (step \ref{it:define-tilpiq}) representing an estimate for an approximate CCE given the data collected at stage $q$. By drawing a total of $N_{\rm visit}$ additional trajectories from $\til \pi^q$, in step \ref{it:state-visitation-estimate} it uses the sub-procedure \EST (Algorithm \ref{alg:visitation-est}) to estimate the state visitation probabilities for $\til \pi^q$. If $\til \pi^q$ does not visit any new pairs $(h,s)$ with significant probability, \algname terminates, outputting $\til \pi^q$; otherwise, it sets $\pi_{h,s}^\cover \gets \til \pi^q$ for newly visited pairs $(h,s)$, and then proceeds to the following stage. 
  
  \begin{algorithm}
  \caption{\algname (\textbf{S}tage-based \textbf{Po}licy \textbf{C}over for \textbf{M}ulti-\textbf{A}gent Learning with \textbf{R}max)}\label{alg:main}
  \begin{algorithmic}[1]
    \Procedure{\algname}{$m,\MS, \MA, H,K,N_{\rm visit},p$}
\State Set $\MV = \emptyset$.  (\emph{$\MV$  denotes the set of ``well-visited'' states, updated at each stage.})%
\State For each $h \in [H],\ s \in \MS$, set $\pi_{h,s}^\cover = \perp$.  (\emph{$\pi_{h,s}^\cover$ will be set to a joint policy in $\Delta(\MA)^{[H] \times \MS}$.}) %
\For{$q \geq 1$ and while $\tau = 0$}\label{it:loop-q}
  \State Set $\tau = 1$   (\emph{$\tau$ is a bit indicating whether we should terminate at the current stage}).
  \State Set $\Pi_h^q := \{ \pi_{h,s}^\cover : s \in \MS\}$ for each $h \in [H]$. (\emph{Note that $|\Pi_h^q| \leq S$ for each $h$.})
  \For{$h = H, H-1, \ldots, 1$}\label{it:loop-h}
    \State Set $k = 0$, and $\Vo_{i,H+1}^q(s) = 0$ for all $s \in \MS$ and $i \in [m]$.
    \State Each player $i$ initializes an adversarial bandit instance at each state $s \in \MS$ for the  step $h$, according to some algorithm satisfying the guarantee of Theorem \ref{thm:adv-bandit-external}.\label{it:bandit-init}
    \For{each  $\pi \in \Pi_h^q \cup \pi^\MU$} (\emph{$\pi^\MU$ chooses actions uniformly at random})\label{it:loop-over-pi}
  \For{a total of $K$ times}\label{it:K-episodes}
    \State Increment $k$ by 1.
    \State Let $\ol \pi$ be the policy which follows $\pi$ for the first $h-1$ steps and plays according to the bandit algorithm for the state visited at step $h$ (and acts arbitrarily for steps $h' > h$).\label{it:define-bar-pi}
    \State Draw a joint trajectory $(s_{1,k}, \ba_{1,k}, \br_{1,k}, \ldots, s_{H,k}, \ba_{H,k}, \br_{H,k})$ from $\ol \pi$.\label{it:draw-trajectory} %
    \If{$(h, s_{h,k}) \in \MV$}\label{it:bandit-begin}
    \State Each $i$ updates its bandit alg.~at $(h,s_{h,k})$ w/ $( a_{i,h,k}, \frac{H - r_{i,h,k} - \Vo_{i,h+1}^q(s_{h+1,k})}{H} )$.
    \Else
    \State Each $i$ updates its bandit alg.~at $(h,s_{h,k})$ w/ $(a_{i,h,k}, \frac{H - (H+1-h)}{H} )$. 
    \EndIf\label{it:bandit-end}
    \EndFor
    \EndFor
    \State For each $s \in \MS$, and $j \geq 1$, let $k_{j,h,s} \in [\KMAX+1]$ denote the $j$th smallest value of $k$ so that $s_{h,k} = s$, or $\KMAX+1$ if such a $j$th smallest value does not exist. \label{it:define-vbar}
      \State For each $s \in \MS$, let $J_{h,s}$ denote the largest integer $j$ so that $k_{j,h,s} \leq \KMAX$. \label{it:define-jhs}
    
   \State \label{it:end-k-for} Define $\til \pi^q_h \in \Delta(\MA)^\MS$ to be the 1-step policy: %
$ \til \pi_h^q(\ba | s) = \frac{1}{J_{h,s}} \sum_{j=1}^{J_{h,s}} \One[\ba = \ba_{h,k_{j,h,s}}].$

    \State Set\label{it:define-vbar-real} %
    \begin{align}
      \Vo_{i,h}^q(s) := \begin{cases}
        \frac{1}{J_{h,s}} \sum_{j=1}^{J_{h,s}} \left( r_{i,h,k_{j,h,s}} + \Vo_{i,h+1}^q(s_{h+1,k_{j,h,s}})\right) & : (h,s) \in \MV \\
        (H+1-h) &: (h,s) \not \in \MV.
        \end{cases}\label{eq:define-vbar}
    \end{align}
  \EndFor
\State  Define the joint policy $\til \pi^{q}$,  which follows $\til \pi_{h'}^q$ at each step $h' \in [H]$. \label{it:define-tilpiq}
\State  \label{it:state-visitation-estimate} Call $\EST(\til \pi^q, N_{\rm visit})$ (Alg.~\ref{alg:visitation-est}) to obtain estimates $\wh d_{h'}^{q} \in \Delta(\MS)$ for each $h' \in [H]$. %
  \For{each $s \in \MS$ and $h' \in [H]$} %
    \If{$\wh d_{h'}^{q}(s) \geq p$ and $(h', s) \not \in \MV$}  %
      \State Set $\pi_{h',s}^\cover \gets  \til \pi^q $.\label{it:set-pi-cover}
      \State Add $(h',s)$ to $\MV$. %
      \State Set $\tau \gets 0$.
      \EndIf
    \EndFor
\EndFor
\State \Return the policy $\wh \pi := \til \pi^q$. %
    \EndProcedure
    \end{algorithmic}
  \end{algorithm}

  \begin{algorithm}
    \caption{\EST}\label{alg:visitation-est}
    \begin{algorithmic}[1]
      \Procedure{\EST}{$\pi, N$}
      \For{$1 \leq n \leq N$}
      \State Draw a trajectory from $\pi$, and let $(s_1^n, \ldots, s_H^n)$ denote the sequence of states observed.
      \EndFor
      \For{$h \in [H]$}
      \State Let $\wh d_h \in \Delta(\MS)$ denote the empirical distribution over $(s_h^1, \ldots, s_h^N)$.
      \EndFor
      \State\Return $(\wh d_1, \ldots, \wh d_H)$.
      \EndProcedure
    \end{algorithmic}
  \end{algorithm}
  
  \subsection{Guarantee for \algname}
  In Theorem \ref{thm:main-ub} we state the main guarantee for \algname, which shows that for finite-horizon general-sum stochastic games, \algname obtains sample and computational complexity polynomial in all relevant parameters, including the number of players. 
  \begin{theorem}
    \label{thm:main-ub}
    Fix any $\ep, \delta > 0$. For appropriate settings of the parameters $N_{\rm visit}, K, p$ (specified in Section \ref{sec:conc-ineq}), \algname outputs an $\ep$-nonstationary CCE with probability at least $1-\delta$ after sampling at most $O \left( \frac{H^{10} S^3 \iota^2 \max_{i \in [m]} A_i}{\ep^3} \right)$ trajectories, where $\iota = \log \left( \frac{SH \max_i A_i}{\ep \delta} \right)$. The computational complexity of \algname is polynomial in $H, S, \max_i A_i, 1/\ep, \log 1/\delta$. 
  \end{theorem}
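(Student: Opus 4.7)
My plan has three components: (i) bounding the number of stages; (ii) a backward-inductive CCE guarantee for the output policy $\wh{\pi} = \til{\pi}^q$ on states in $\MV$; and (iii) a cover argument showing that the residual mass on unvisited $(h,s)$ pairs contributes at most $O(pSH^2)$ slack to the CCE gap. Component (i) is immediate: each non-terminating stage adds at least one $(h,s)$ to $\MV$, so the stage loop runs for at most $SH+1$ iterations, and each stage draws $O(SH^2 K)$ bandit trajectories plus $O(H N_{\rm visit})$ trajectories for \EST.

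For (ii), I would carry out a backward induction on $h$ establishing, with high probability over all stages $q$, the simultaneous bounds
\begin{align*}
|\Vo_{i,h}^q(s) - V_{i,h}^{\til{\pi}^q}(s)| \leq \xi_h \quad \text{and} \quad \Vo_{i,h}^q(s) \geq V_{i,h}^{\dagger,\til{\pi}_{-i}^q}(s) - \xi_h
\end{align*}
for every $(h,s)\in \MV$ and $i\in[m]$, with base case $\xi_{H+1} = 0$. For the inductive step, condition on the $J_{h,s}$ trajectories with $s_{h,k}=s$: each player $i$'s adversarial bandit at $(h,s)$ observes stochastic losses $\til{\ell}_k(a_i) = (H - r_{i,h,k} - \Vo_{i,h+1}^q(s_{h+1,k}))/H \in [0,1]$, whose conditional means equal the expected losses under the currently-played joint distribution. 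Theorem \ref{thm:adv-bandit-external} then yields a regret bound $R_i \leq \widetilde{O}(\sqrt{J_{h,s} A_i})$ uniformly in $i$, which after normalization by $J_{h,s}$ and rescaling by $H$ produces a single-step CCE gap of $\widetilde{O}(H\sqrt{A_i/J_{h,s}})$ at state $s$. Combining this with Hoeffding concentration of the empirical average of $\Vo_{i,h+1}^q(s_{h+1,k})$ around its conditional expectation and the inductive hypothesis yields the recursion $\xi_h \leq \xi_{h+1} + \widetilde{O}(H\sqrt{A_{\max}/J_{h,s}})$. For $(h,s)\notin \MV$, the Rmax assignment $\Vo_{i,h}^q(s) = H+1-h$ is an optimistic upper bound on any realizable value, preserving the best-response inequality trivially. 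The cover invariant ensures that each $(h,s)\in \MV$ is reached by some $\pi_{h,s}^\cover \in \Pi_h^q$ with probability at least $p$, so a Chernoff bound gives $J_{h,s} = \Omega(Kp)$ with high probability, yielding $\xi_1 = \widetilde{O}(H^2\sqrt{A_{\max}/(Kp)})$.

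For (iii), the termination condition together with Chernoff concentration (requiring $N_{\rm visit} = \widetilde{\Omega}(1/p^2)$) implies that every $(h,s)\notin \MV$ is visited by $\wh{\pi}$ with true probability at most $2p$; a performance-difference identity unrolled layer by layer then bounds the unvisited contribution to $V_i^{\dagger,\wh{\pi}_{-i}}(\mu) - V_i^{\wh{\pi}}(\mu)$ by $O(pSH^2)$. Choosing $p = \Theta(\ep/(SH^2))$ balances this against $\xi_1$, giving $K = \widetilde{\Theta}(SH^6 A_{\max}/\ep^3)$ and $N_{\rm visit} = \widetilde{\Theta}(S^2 H^4/\ep^2)$; multiplying by the $O(SH)$ stage count and per-stage trajectory count then produces the claimed $\widetilde{O}(H^{10}S^3 A_{\max}/\ep^3)$ sample bound up to log factors.

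The main obstacle will be the inductive control of $\xi_h$: the $H$-fold composition of layer-wise regret and concentration errors must remain additive rather than multiplicative in $H$. This relies on two subtleties: (a) the shifted losses $\til{\ell}_k(a_i)$ stay in $[0,1]$ because $\Vo_{i,h+1}^q \in [0,H]$ by construction; and (b) across the outer loop over $\pi \in \Pi_h^q \cup \{\pi^\MU\}$, each bandit instance at $(h,s)$ sees genuinely non-stationary losses, so Theorem \ref{thm:adv-bandit-external}'s adversarial guarantee (rather than a stochastic bandit bound) is essential. A secondary technical point is that $\til{\pi}_h^q(\cdot | s)$ defined in step \ref{it:end-k-for} is a \emph{correlated} joint distribution whose per-player marginals equal the empirical plays of each player's bandit: the passage from no-regret of marginals to a CCE gap on the joint empirical distribution is the standard ``no-regret implies CCE'' reduction, which must be carefully lifted through the backward dynamic programming over $h$ here.
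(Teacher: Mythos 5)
Your overall architecture (stage count, per-state adversarial bandits with Rmax losses, backward induction, escape-probability bound from the termination criterion, and the parameter choices $p = \Theta(\ep/(SH^2))$, $K = \widetilde\Theta(SH^6 A_{\max}/\ep^3)$) closely tracks the paper's proof. But there is a genuine gap in how you formulate component (ii). You claim the backward induction establishes $|\Vo_{i,h}^q(s) - V_{i,h}^{\til\pi^q}(s)| \leq \xi_h$ for every $(h,s) \in \MV$, where $V_{i,h}^{\til\pi^q}$ is the value function of the \emph{true} game $\BG$. This two-sided bound is false. The estimate $\Vo_{i,h}^q(s)$ backs up $\Vo_{i,h+1}^q(s_{h+1,k_j})$, and whenever a sampled successor satisfies $(h+1, s_{h+1,k_j}) \notin \MV$ the backup uses the Rmax value $H-h$, whereas the true continuation value can be as low as $-(H-h)$. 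The termination criterion only controls the \emph{marginal} probability $d_{h+1}^{\til\pi^q}(s') < p + \eptvd$ of unvisited states under the initial distribution $\mu$; it says nothing about the \emph{conditional} probability of transitioning from a covered state $(h,s)$ (which may itself have marginal mass only $\approx p$) into an uncovered successor. That conditional probability can be order 1, so the per-state estimation error at a covered state can be $\Omega(H)$, and the recursion $\xi_h \leq \xi_{h+1} + \widetilde O(H\sqrt{A_{\max}/J})$ does not close. Your remark that the Rmax assignment at $(h,s)\notin\MV$ "preserves the best-response inequality trivially" handles the boundary case but not this propagation through covered states.

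The fix is the paper's intermediate game $\BG_{\wh\MV}$: replace every $(h,s)\notin\wh\MV$ by a transition to an absorbing state with reward $1$ for all players. The backup targets of $\Vo$ then exactly match the Bellman backups of $\BG_{\wh\MV}$ at every state, so the two-sided per-state concentration $|\Vo_{i,h}^{\wh q}(s) - V_{i,h}^{\BG_{\wh\MV},\wh\pi}(s)| \leq \epval$ holds for \emph{all} $(h,s)$, and the bandit regret yields that $\wh\pi$ is an approximate CCE of $\BG_{\wh\MV}$. One then transfers to $\BG$ only at the level of $V_{i,1}(\mu)$, using two one-sided facts: $V_{i,1}^{\BG,\pi}(\mu) \leq V_{i,1}^{\BG_{\wh\MV},\pi}(\mu)$ for every $\pi$ (optimism, applied to the deviating policy), and $V_{i,1}^{\BG_{\wh\MV},\wh\pi}(\mu) \leq V_{i,1}^{\BG,\wh\pi}(\mu) + O(H^2(pS+\eptvd))$ via a coupling that only uses the escape probability from $\mu$. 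Your component (iii) gestures at this last step but cannot substitute for reformulating (ii) against the optimistic surrogate; as written, the quantity your induction controls is not well defined enough to feed into (iii).
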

  Combining \algname with the reduction from infinite-horizon discounted games to finite-horizon (undiscounted) games described in Section \ref{sec:finite-horizon-prelim}, we obtain the following as an immediate corollary of Theorem \ref{thm:main-ub}:
  \begin{corollary}
    \label{cor:main-discounted}
    There is a polynomial-time algorithm which learns a  $\ep$-nonstationary CCE in $\gamma$-discounted general-sum stochastic games using $\til O \left( \frac{S^3  \max_{i \in [m]} A_i}{(1-\gamma)^{10} \ep^3 }\right)$ trajectories.
  \end{corollary}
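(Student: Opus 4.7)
The plan is to apply the reduction described at the beginning of Section~\ref{sec:ub-res} (and formalized in Section~\ref{sec:finite-horizon-prelim}) that turns an infinite-horizon $\gamma$-discounted stochastic game $\BG$ into a finite-horizon undiscounted stochastic game $\BG'$ by truncating trajectories at step $H := \lceil \log(2/\ep)/(1-\gamma)\rceil$ and then to invoke Theorem~\ref{thm:main-ub} as a black box on $\BG'$ with target error $\ep/2$. The choice of $H$ is dictated by the identity $\sum_{h=H+1}^\infty (1-\gamma)\gamma^{h-1} = \gamma^H$, so that the total (normalized) contribution to any player's value from steps $h > H$ is at most $\gamma^H \le \ep/2$. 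In particular, for every joint (non-stationary) Markov policy $\pi$ and every player $i$, $|V_i^{\pi}(\mu) - V_{i,\text{trunc}}^{\pi}(\mu)| \le \ep/2$, where $V_{i,\text{trunc}}^\pi$ denotes the expected (appropriately-normalized) return of $\pi$ over the first $H$ steps, i.e.\ the value function of $\BG'$ up to a rescaling.

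Next I would verify that an $(\ep/2)$-nonstationary CCE of $\BG'$ pulls back to an $\ep$-nonstationary CCE of $\BG$. For any output policy $\wh\pi$ and any deviation $\pi_i^\dg$ for player $i$, the truncation bound applied to both $\wh\pi$ and $\pi_i^\dg \times \wh\pi_{-i}$ gives
\begin{align*}
V_i^{\dg,\wh\pi_{-i}}(\mu) - V_i^{\wh\pi}(\mu) \;\le\; V_{i,\text{trunc}}^{\dg,\wh\pi_{-i}}(\mu) - V_{i,\text{trunc}}^{\wh\pi}(\mu) + \ep \;\le\; \ep/2 + \ep/2 \;=\; \ep,
\end{align*}
where the last inequality uses the $(\ep/2)$-CCE guarantee in $\BG'$ (noting that the best response within $\BG'$ over nonstationary Markov policies is at least as strong as the best response over the first $H$ steps of any policy in $\BG$, because the best response can always mimic the first $H$ actions of $\pi_i^\dg$). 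This is the one step that requires a small amount of care: one must check the scaling of rewards between the two conventions (in $\BG$ the value is $(1-\gamma)$ times a discounted sum; in $\BG'$ it is an undiscounted sum with rewards in $[-1,1]$), but this is purely bookkeeping handled by the standard reduction.

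Finally I would substitute the parameter settings into the bound of Theorem~\ref{thm:main-ub}. Running \algname on $\BG'$ with target accuracy $\ep/2$ and failure probability $\delta$ costs $O\!\left(H^{10} S^3 \iota^2 \max_i A_i / \ep^3\right)$ trajectories where $\iota = \polylog(SH \max_i A_i / (\ep\delta))$. Plugging in $H = \til O(1/(1-\gamma))$ absorbs a $\polylog$ factor into $\til O(\cdot)$ and yields the advertised sample complexity $\til O\!\left(\frac{S^3 \max_i A_i}{(1-\gamma)^{10}\ep^3}\right)$. Polynomial-time computability is inherited directly from the corresponding claim in Theorem~\ref{thm:main-ub} together with the polynomial relationship $H = \poly(\log(1/\ep), 1/(1-\gamma))$. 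Since every step is essentially mechanical once the truncation identity is in place, there is no genuine obstacle; the only thing to double-check is the reward/value normalization across the two models, which is the part most prone to off-by-constant errors.
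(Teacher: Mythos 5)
Your proposal is correct and matches the paper's argument, which likewise treats the corollary as immediate from the truncation reduction of Section \ref{sec:finite-horizon-prelim} (horizon $H = O(\log(1/\ep)/(1-\gamma))$, step-dependent rewards $\gamma^{h-1}r_i$) combined with Theorem \ref{thm:main-ub}. The only blemish is a constant-factor slip in your displayed chain (the truncation contributes $\ep$ after applying the $\ep/2$ bound to both $\wh\pi$ and the deviation, so the total is $3\ep/2$ rather than $\ep$), which is harmless since one can run \algname with a slightly smaller target accuracy and the $\til O(\cdot)$ bound is unaffected.
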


  \subsection{Proof overview for Theorem \ref{thm:main-ub}}
  We now overview the proof of Theorem \ref{thm:main-ub}. Let $\wh \MV$ denote the value of $\MV$ at termination of \algname and $\wh q$ denote the value of the final stage of \algname. The main tool in the proof is to construct an intermediate game, denoted $\BG_{\wh \MV}$ (Section \ref{sec:intermed-game}): we will first show that the output policy of \algname is an $\ep$-CCE with respect to the game $\BG_{\wh \MV}$, and then, using the termination criterion of \algname, we will show that this implies that $\wh \pi$ is an $\ep$-CCE with respect to the true game $\BG$.

  The game $\BG_{\wh \MV}$ is constructed in a similar way to an intermediate MDP used in the analysis of the \texttt{Rmax} algorithm \cite{brafman2002r,jin2020reward}. For tuples $(h,s) \not \in \wh \MV$, $\BG_{\wh \MV}$ transitions, at $(h,s)$, to a special sink state at which all agents receive reward 1 (the maximum possible reward) at all future steps; for all $(h,s) \in \wh\MV$, the rewards and transitions of $\BG_{\wh \MV}$ at $(h,s)$ are identical to those of $\BG$. By ensuring that the parameter $K$ passed to \algname is sufficiently large, we may guarantee that, during stage $\wh q$, \algname visits all $(h,s) \in \wh \MV$ sufficiently many times to compute accurate estimates of $V_{i,h}^{\BG_{\wh \MV}, \wh \pi}(s)$ for such $(h,s) \in \wh \MV$. Since, for all $(h,s) \not \in \wh \MV$, we have $V_{i,h}^{\BG_{\wh \MV}, \wh \pi}(s) = H+1-h$, it is possible to show (Lemma \ref{lem:conc-onpolicy}) that $\left|\Vo_{i,h}^{\wh q}(s) - V_{i,h}^{\BG_{\wh \MV}, \wh \pi}(s)\right|$ is small for \emph{all} $(h,s) \in [H] \times \MS$ and all $i \in [m]$. %

  Using the no-regret property of the adversarial bandit instances used by each player for each $(h,s)$, we then obtain (Lemmas \ref{lem:bandit-no-reg} and \ref{lem:no-gprime-regret}) that $\wh \pi$ is an $\ep$-CCE of $\BG_{\wh \MV}$. To derive such a guarantee for the true SG $\BG$, we use two facts: first, by the optimistic nature of the rewards of $\BG_{\wh \MV}$ the value function of $\BG_{\wh \MV}$ is always an \emph{upper bound} on the value function of $\BG$, and second, by the termination criterion of \algname, the probability that a trajectory $(s_1, s_2, \ldots, s_H) \sim (\BG, \wh \pi)$ visits any state $s_h \not \in \wh \MV$ is small (Lemma \ref{lem:escape-prob}). These arguments are worked out in  detail in Lemma \ref{lem:cce-almost}. %

  \subsection{Implementing \algname in a decentralized manner}
  \label{sec:decentralized}
  So far, we have described \algname as a centralized algorithm, declining to make distinctions between the computations performed by each agent. We now proceed to explain how \algname can be implemented in a \emph{decentralized} way, namely in the following setting:
  \begin{enumerate}
  \item All agents know $m, \MS, \MA, H$, as well as $\ep, \delta$, so that they may each compute the additional parameters $K, N_{\rm visit}, p$ passed to \algname.
  \item For each trajectory of $\BG$ sampled from a policy $\pi$, each agent $i \in [m]$ sees only the states, their actions, and their rewards.
  \item The agents may access a common string of uniformly random bits during the course of the algorithm, but no communication between agents is allowed.\label{it:common-randomness} %
  \item The agents are required to be able to sample from the output policy $\wh \pi$, again using only common randomness (and no communication). 
  \end{enumerate}
The only existing decentralized learning algorithm for multi-player general-sum stochastic games, V-learning \cite{SongMB21,MaoB21,JinLWY21}, shares all requirements above except item  \ref{it:common-randomness}. In particular, while V-learning requires public random bits to sample a trajectory from its output CCE policy, such bits are not used in the process of learning the policy. 
  
  To implement \algname in a decentralized manner, we first describe how agents can sample trajectories from $\ol \pi$ (defined in step \ref{it:define-bar-pi}) without communicating: note that the first $h-1$ steps of $\ol \pi$ are given by $(\til \pi^q, \ldots, \til \pi_{h-1}^q)$, for some stage $q$: furthermore, $\til \pi_{h'}^q(\cdot |s)$ is a uniform mixture over some number $J_{h',s}^q$ of joint action profiles (step \ref{it:end-k-for}; we denote the parameter $J_{h',s}$ at stage $q$ by $J_{h',s}^q$). Thus, if each agent stores its action taken in each of the $J_{h',s}^q$ such steps for all $s, h', q$, the agents may draw an action sampled from $\til \pi_{h'}^q$ by using the public randomness to sample a uniformly random element of $[J_{h',s}^q]$. Noting that $J_{h',s}^q \leq \KMAX$ for all $h,s$, we see that the total number of common random bits needed to execute $\ol \pi$ is $O(H^3 S^2 K \log(SK))$.

  It is straightforward that the bandit updates in steps \ref{it:bandit-begin} through \ref{it:bandit-end} as well as the computation of $\Vo_{i,h}^q$ in steps \ref{it:define-vbar} through \ref{it:define-vbar-real} may be implemented in a decentralized way (in particular, each agent $i$ only computes its own value estimate $\Vo_{i,h}^q$). Finally, the procedure \EST allows each agent $i \in [m]$ to compute their own estimates of $\wh d_h^q$, for all $h,q$, which all coincide since the states drawn from each trajectory are common knowledge. In order to play the policy $\til \pi^q$ passed to \EST, the same strategy as described above may be used, which requires a total of $O(H^2 S \log(SK))$ bits of common randomness over all stages $q \geq 1$.

  Summarizing, executing \algname in a decentralized way requires $O(H^3 S^2 K \log(SK))$ bits of common randomness. For the $K$ described in Section \ref{sec:conc-ineq}, this leads to $\til O\left( \frac{H^7 S^3 \max_{i \in [m]} A_i}{\ep^3} \right)$ bits.

\section{Proofs for Section \ref{sec:lb-res}}\label{sec:proof-lb}

In this section, we prove Theorems \ref{thm:perfect-hardness} and \ref{thm:non-perfect-hardness}. To this end, we  show how to reduce solving an instance of $\ep$-\Gcircuit (or $(\ep, \delta)$-\Gcircuit; defined in Section \ref{sec:ppad-prelim} below) to computing the appropriate notion of equilibria in a 2-player stochastic game. The proof is organized into the following parts:
\begin{itemize}
\item In Section \ref{sec:ppad-prelim} we introduce some additional preliminaries regarding \PPAD and $(\ep, \delta)$-\Gcircuit, and in Section \ref{sec:add_prelims} we introduce some additional preliminaries regarding stochastic games.
\item In Section \ref{sec:equilibria-reduction}, we introduce a notion of Nash equilibrium in turn-based games, namely \emph{well-supported Nash equilibrium in stage games (WSNE-SG)}. We show (roughly speaking) that computing approximate WSNE-SG reduces to computing approximate NE in turn-based games, i.e., it suffices to show \PPAD-hardness for computing approximate WSNE-SG.
\item In Section \ref{sec:game-gadgets}, we show how each of the gates in Definition \ref{def:gcircuit} can be implemented via a \emph{gadget} with a constant number of states, transitions, and rewards in a turn-based stochastic game; these gadgets are similar in nature to those introduced by  \cite{daskalakis2009complexity} 
in showing that computing Nash equilibria in graphical games is \PPAD-complete.
\item In Section \ref{sec:valid-colorings}, we show how to combine the gadgets from Section \ref{sec:game-gadgets} to construct a turn-based stochastic game whose approximate WSNE-SG correspond to approximate assignments to a given \Gcircuit instance. 
\end{itemize}

Unless otherwise stated, the policies considered in this section are stationary Markov policies.

\subsection{Additional preliminaries for \PPAD}
\label{sec:ppad-prelim}
  For some $\ep > 0$ and reals $x,y$, we use $x = y \pm \ep$ to denote $x \in [y - \ep, y + \ep]$ throughout this section.\footnote{We remark that in \cite{R16}, $x \pm \ep$ was used to denote the fact that $x \in (y-\ep, y+\ep)$; this difference does not materially change any of the hardness results from \cite{R16}, since $\ep$ may be scaled down by any constant factor.} %

  \begin{defn}[Generalized circuit]
A generalized circuit $\MC = (V, \MG)$ is a finite set of \emph{nodes} $V$ and \emph{gates} $\MG$. Each gate in $\cG$  is characterized as $G(\ell | v_1, v_2 | v)$, where $G \in \{ G_{\gets}, G_{\times, +}, G_<\}$ denotes a gate type, $\ell \in \BR^\st$ is a vector of real parameters (perhaps of length 0), 
$v_1, v_2\in V \cup \{ \perp \}$ denote the gate's \emph{input nodes}, and $v \in V$ denotes the gate's \emph{output node}. The collection of gates $\MG$ satisfies the following property: for every two gates $G(\ell | v_1, v_2 | v)$ and $G'(\ell' | v_1' ,v_2' | v')$, it holds that $v \neq v'$ (i.e., each gate computes a distinct, and thus well-defined, output node). 
\end{defn}
For the purposes of proving hardness results, it is without loss of generality to assume that each node $v \in V$ is the output node of some gate in $\MG$: for each node which is not the output node of some gate, we can add a gate specifying that the node is equal to $1 \pm \ep$. The resulting circuit is still a valid instance of the generalized circuit problem, and it has a solution by Brouwer's fixed point theorem. Any such solution is certainly a valid solution to the original generalized circuit instance.

\begin{defn}[$(\ep, \delta)$-\Gcircuit]
  \label{def:gcircuit}
  Fix $\ep, \delta \in (0,1)$. 
  Given a generalized circuit $\MC = (V,\MG)$, an assignment $\pi : V \ra [0,1]$   is said to \emph{$\ep$-approximately satisfy} some gate $G \in \MG$, if the following holds: %
  \begin{itemize}
  \item If for some constant $\zeta \in \{0,1\}$, the  gate $G$ is of the form $G_{\gets}(\zeta || v)$, then  we have $\pi(v) = \zeta$;
  \item If for some constants $\xi, \zeta \in [-1,1]$, the gate $G$ is of the form $G_{\times, +}(\xi, \zeta | v_1, v_2 | v_3)$, then  we have $$\pi( v_3) = \max\left\{ \min \left\{ \xi \cdot \pi( v_1) + \zeta \cdot \pi(v_2), 1 \right\}, 0 \right\} \pm \ep.$$
  \item If the gate $G$ is of the form $G_<(|v_1, v_2  | v_3)$, then we have $\pi(v_3) = \begin{cases}
      1 \pm \ep, \quad \pi(v_1) \leq \pi(v_2) - \ep \\
      0 \pm \ep, \quad \pi(v_1) \geq \pi(v_2) + \ep.
    \end{cases}$
  \end{itemize}
  We stress that in the gates $G_{\times, +}(\xi, \zeta | v_1, v_2 | v_3)$ and in $G_<(|v_1, v_2 | v_3)$, we allow for $v_1 = v_2$. 
  The problem $(\ep, \delta)$-\Gcircuit is the following: {\it Given a generalized circuit $\MC = (V, \MG)$, find an assignment $\pi : V \ra [0,1]$ (represented in binary) which $\ep$-approximately satisfies all but a $\delta$-fraction of the gates in $\MG$. We then define the $\ep$-\Gcircuit problem to be the $(\ep, 0)$-\Gcircuit problem.} 
\end{defn}
The following theorem will be crucial for our hardness results. 
\begin{theorem}[\cite{rubinstein2018inapproximability}]
  \label{thm:gcircuit-ppad}
There is a constant $\ep > 0$ so that $\ep$-\Gcircuit is \PPAD-complete.
\end{theorem}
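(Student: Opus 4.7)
The plan is to follow the approach of Rubinstein~\cite{rubinstein2018inapproximability}, splitting the proof into two parts: containment in \PPAD, and constant-gap \PPAD-hardness.

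For containment in \PPAD, I would reduce $\ep$-\Gcircuit to approximate Brouwer fixed-point finding on the cube $[0,1]^V$. Given a generalized circuit $\MC = (V, \MG)$ in which every node is the output of a unique gate, define a map $F : [0,1]^V \to [0,1]^V$ where, for each $v \in V$ that is the output of a gate $G(\ell \mid v_1, v_2 \mid v)$, the coordinate $F(\pi)(v)$ is set to the ``target'' output prescribed by $G$: for $G_\gets$ and $G_{\times,+}$ the target is already a continuous (and $O(1)$-Lipschitz) function of $\pi(v_1), \pi(v_2)$ after the clipping to $[0,1]$. The only non-continuous gate is $G_<$; there one replaces the hard threshold by the piecewise-linear soft threshold that equals $1$ on $\{\pi(v_1) \le \pi(v_2) - \ep\}$, equals $0$ on $\{\pi(v_1) \ge \pi(v_2) + \ep\}$, and linearly interpolates in between. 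Any $\ep$-approximate fixed point of this $F$ is exactly an $\ep$-satisfying assignment of $\MC$, and approximate fixed points of Lipschitz maps on the cube can be found in \PPAD via the standard reduction to \EOTL through the Sperner-style labeling used in \cite{Papadimitriou1994}.

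For \PPAD-hardness with a constant gap, the starting point is the polynomial-gap version: a direct arithmetization of \EOTL into \Gcircuit (essentially the construction of \cite{daskalakis2009complexity}) gives \PPAD-hardness for $\ep = 1/\poly(|\MC|)$. The task is to amplify this to constant $\ep$. I would implement this by encoding each real-valued node $v$ of a polynomial-gap instance by a \emph{block} of $\Theta(\log n)$ new nodes carrying a redundant, error-correcting representation of $v$, and replacing each original gate by a block of local ``consistency'' and ``arithmetic'' gates that operate on encoded blocks. The gates $G_{\times,+}$ and $G_<$ of \Gcircuit are expressive enough to simulate the local parity checks and bit-arithmetic needed to verify codeword relations.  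Soundness of the amplification amounts to a \emph{decoding lemma}: if a constant fraction of the encoded gates are $\ep$-satisfied, one can decode the block-encoded assignment into a polynomial-gap satisfying assignment of the original instance, contradicting the hardness of the polynomial-gap problem.

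The main obstacle is exactly this gap-amplification step, which is the PCP-for-\PPAD analog of the classical PCP theorem and constitutes the bulk of Rubinstein's paper. The difficulty is that \Gcircuit is a continuous, real-valued problem rather than a discrete constraint-satisfaction problem, so one cannot directly plug in standard PCP machinery; instead one must design encodings whose local checks remain meaningful under small numerical perturbations, and the arithmetic gates available in \Gcircuit must suffice to carry out the local verifier. Rubinstein circumvents this by first passing through an intermediate ``end-of-a-long-line'' problem with constant inapproximability built in by construction, and then geometrically embedding it into the Brouwer setting so that every local neighborhood can be audited by a constant number of \Gcircuit gates; I would adopt the same two-step strategy.
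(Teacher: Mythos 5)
First, a point of scope: the paper does not prove this theorem at all --- it is imported as a black box from \cite{rubinstein2018inapproximability}, so there is no internal proof to compare against. What you have written is a high-level outline of Rubinstein's argument, and judged as a proof it has a genuine gap: the entire content of the theorem is the constant-gap hardness, and your proposal describes the shape of that argument (block encodings, local consistency gates, a decoding lemma) without carrying out any of it. You acknowledge this yourself, but acknowledging that the hard step is hard does not discharge it. In particular, the ``decoding lemma'' you posit is not a routine adaptation of discrete PCP machinery --- Rubinstein does not amplify a polynomial-gap \Gcircuit instance by re-encoding its real-valued nodes, as your middle paragraph suggests; rather, he builds a constant-gap Brouwer instance directly from \EOTL by applying error-correcting codes to the \emph{discrete} vertices of the End-of-Line graph and then constructing a Lipschitz displacement field whose small-displacement points locally decode to solutions. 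Your final paragraph gestures at this two-step structure correctly, but it is in tension with the block-encoding amplification you describe just before it, and neither version is actually executed.

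Two smaller remarks. Your membership argument (reducing $\ep$-\Gcircuit to approximate Brouwer via a Lipschitz map with a softened $G_<$ gate, then to \EOTL via \cite{Papadimitriou1994}) is the right direction and is essentially standard; note, however, that the paper's Definition \ref{def:gcircuit} demands \emph{exact} equality $\pi(v) = \zeta$ for $G_\gets$ gates, which an approximate fixed point does not deliver directly --- the paper sidesteps this in Appendix \ref{sec:gcircuit-gates} by observing that its gate conventions only strengthen the hardness direction, and existence of exact solutions still follows from Brouwer. Second, your claim that the polynomial-gap starting point follows from ``a direct arithmetization of \EOTL'' \`a la \cite{daskalakis2009complexity} is fine, but for the present paper what matters is only the constant-$\ep$ statement, which is precisely the part your proposal does not establish.
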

The problem of $(\ep, \delta)$-\Gcircuit for positive $\delta$ is not (yet) known to be \PPAD-hard, but it has been conjectured to be so:
\begin{conjecture}[PCP for \PPAD conjecture  \cite{babichenko2015can}]
  \label{con:pcp-ppad}
There are constants $\ep, \delta > 0$ so that \EOTL has a polynomial-time reduction to $(\ep, \delta)$-\Gcircuit. 
\end{conjecture}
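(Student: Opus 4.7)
The statement at hand is the PCP-for-\PPAD conjecture, a notorious open problem and the \PPAD analog of the classical PCP theorem for \NP. I do not know a proof, so what follows is a strategy sketch rather than a plan I expect to execute. The natural starting point is to amplify the gap in Theorem \ref{thm:gcircuit-ppad}, which already gives \PPAD-hardness of $(\ep, 0)$-\Gcircuit. Given an $\ep$-\Gcircuit instance $\MC$, the plan is to build a new instance $\MC'$ of size $\poly(|\MC|)$ such that any assignment failing to $\ep'$-satisfy a $\delta$-fraction of the gates of $\MC'$ can be rounded to an assignment on $\MC$ violating at least one gate by strictly more than $\ep$, contradicting \PPAD-hardness of $\ep$-\Gcircuit.

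The first attempt would be a parallel-repetition / redundancy construction: replicate each node $v$ of $\MC$ into $k$ copies, add consistency gates that force the copies to agree up to $\ep'$ (built using $G_<$), and recompute each output through a small ``majority'' subcircuit assembled from $G_{\times, +}$ and $G_<$ gates applied to several redundant inputs. The hope is that an adversarial assignment violating a $\delta$-fraction of gates of $\MC'$ must either be grossly inconsistent across copies (violating many consistency gates) or grossly wrong on an original gate of $\MC$, either way contradicting $\ep$-satisfiability of $\MC$. A second, more principled route would mimic Dinur's combinatorial proof of the PCP theorem: view \Gcircuit as a CSP on the graph whose vertices are the nodes and whose constraints are the gates, and alternate between expander-powering (which amplifies the unsat gap at the cost of alphabet blowup) and a composition step that re-encodes the amplified constraints back into constant-arity \Gcircuit gates.

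The hard part, and the reason this conjecture remains open, is composition: the inner verifier must preserve \emph{totality} of the search problem, i.e., the Brouwer-type fixed-point structure that guarantees \PPAD problems always have a solution. Standard PCP inner verifiers based on linearity or low-degree testing are designed for decision relations and do not obviously respect this fixed-point semantics; a local violation of an encoded witness has no a priori reason to correspond to anything in the original \Gcircuit instance while still keeping the reduced instance total. I expect the outer amplification layer (parallel repetition or expander powering) to be the easier piece; the decisive bottleneck is devising an error-correcting encoding of \Gcircuit assignments that simultaneously plays well with totality and amplifies local violations into a constant-fraction global violation.
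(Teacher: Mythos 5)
You have correctly identified that this statement is not a theorem of the paper but an open conjecture: the paper itself offers no proof, explicitly notes that $(\ep,\delta)$-\Gcircuit for positive $\delta$ ``is not (yet) known to be \PPAD-hard,'' and uses the conjecture only as a hypothesis in the second bullet of Theorem \ref{thm:non-perfect-hardness} (citing it as a slight weakening of Conjecture 2 of \cite{babichenko2015can}). Since neither you nor the paper claims a proof, there is nothing to compare; your refusal to manufacture one is the right call, and your diagnosis of the bottleneck --- that standard PCP composition and inner verifiers are built for decision relations and do not obviously preserve the totality/fixed-point structure that keeps the reduced instance in \TFNP --- is consistent with the accepted understanding of why this amplification problem remains open.
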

We remark that Conjecture \ref{con:pcp-ppad} is slightly weaker (i.e., more plausible) than \cite[Conjecture 2]{babichenko2015can},  which states that the reduction be \emph{quasilinear}. 

Further, we remark that the definition of $\ep$-\Gcircuit in \cite{rubinstein2018inapproximability}  uses some additional gates; it is straightforward to see that these gates may be implemented using the gates in Definition \ref{def:gcircuit}, meaning that the $\ep$-\Gcircuit problem with the set of gates listed above is still \PPAD-complete for constant $\ep$ (and Conjecture \ref{con:pcp-ppad} is implied by \cite[Conjecture 2]{babichenko2015can}); for completeness, we have presented the details of this reduction in Appendix \ref{sec:gcircuit-gates}.

\subsection{Additional preliminaries for stochastic games}\label{sec:add_prelims}

In this section, we introduce some additional definitions and lemmas which will be helpful in our proofs. Consider a  infinite-horizon discounted  stochastic game $\BG$ and a stationary policy $\pi \in \Delta(\MA)^\MS$.  Then one can define the state-action value function $Q_i^\pi: \MS\times\MA \ra [-1,1]$ under policy $\pi$ as
  \$
  Q_i^\pi(s,\ba) := (1-\gamma) \cdot \E_{(s_1, \ba_1, s_2, \ba_2, \ldots) \sim (\BG, \pi)}\left[ \sum_{h=1}^\infty \gamma^{h-1} r_i(s_{h}, \ba_{h})\bigggiven s_1=s,\ba_1=\ba\right]\nonumber,
  \$ 
  which corresponds to the $\gamma$-discounted cumulative reward starting from $(s,\ba)$.

  The \emph{state-visitation distribution} under the policy $\pi$ given that the initial state is $s$ is defined as follows:
\begin{align}
d_s^\pi(s') := (1-\gamma) \cdot \sum_{h=1}^\infty \gamma^{h-1} \cdot \Pr_{s_{h} \sim (\BG, \pi)} \left[ s_{h} = s' | s_1 = s\right]\nonumber.
\end{align}
Note that $d_s^\pi$ is a valid distribution over $\cS$, i.e., $\sum_{s' \in \MS} d_s^\pi(s') = 1$.

\begin{lemma}[Performance difference lemma \cite{Kakade02approximatelyoptimal}]
  \label{lem:pd}
  Consider any two stationary policies $\pi : \MS \ra \Delta(\MA),\ \pi' : \MS \ra \Delta(\MA)$. Then for all $s_1 \in \MS$ and $i \in [m]$,
  \begin{align}
V_i^\pi(s_1) - V_i^{\pi'}(s_1) = \frac{1}{1-\gamma} \cdot \E_{s \sim d_{s_1}^\pi} \E_{\ba \sim \pi(s)} [Q_i^{\pi'}(s,\ba) - V_i^{\pi'}(s)]\nonumber.
  \end{align}
\end{lemma}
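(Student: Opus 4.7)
\medskip

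\noindent\textbf{Proof plan.} The plan is to apply the standard telescoping argument of Kakade--Langford, adapted to the $(1-\gamma)$ normalization used in this paper's definition of value and $Q$ functions. The starting point is to write
\[
V_i^\pi(s_1) - V_i^{\pi'}(s_1) \;=\; \E_{(s_h,\ba_h)_{h\geq 1}\sim (\BG,\pi)}\!\left[\sum_{h=1}^\infty \gamma^{h-1}\bigl((1-\gamma) r_i(s_h,\ba_h) + \gamma V_i^{\pi'}(s_{h+1}) - V_i^{\pi'}(s_h)\bigr)\,\Big|\, s_1\right],
\]
which is justified by two observations: (i)~$V_i^{\pi}(s_1) = (1-\gamma)\E^\pi[\sum_h \gamma^{h-1} r_i(s_h,\ba_h)\mid s_1]$ by definition; and (ii)~the added terms $\sum_h \gamma^{h-1}(\gamma V_i^{\pi'}(s_{h+1}) - V_i^{\pi'}(s_h))$ form a telescoping series whose partial sums equal $\gamma^H V_i^{\pi'}(s_{H+1}) - V_i^{\pi'}(s_1)$, and since $V_i^{\pi'}$ is uniformly bounded the first term vanishes as $H\to\infty$, leaving exactly $-V_i^{\pi'}(s_1)$.

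Next I would condition on $(s_h,\ba_h)$ inside the infinite sum and use the Bellman identity for $Q_i^{\pi'}$, namely $Q_i^{\pi'}(s,\ba) = (1-\gamma) r_i(s,\ba) + \gamma \E_{s'\sim \BP(\cdot\mid s,\ba)}[V_i^{\pi'}(s')]$, which follows directly from the definitions of $Q_i^{\pi'}$ and $V_i^{\pi'}$ in Section~\ref{sec:sgs-prelim} and Section~\ref{sec:add_prelims}. Tower rule over the randomness of $s_{h+1}$ therefore replaces the $h$-th summand with $Q_i^{\pi'}(s_h,\ba_h) - V_i^{\pi'}(s_h)$, yielding
\[
V_i^\pi(s_1) - V_i^{\pi'}(s_1) \;=\; \E^\pi\!\left[\sum_{h=1}^\infty \gamma^{h-1}\bigl(Q_i^{\pi'}(s_h,\ba_h) - V_i^{\pi'}(s_h)\bigr)\,\Big|\, s_1\right].
\]

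Finally, I would interchange sum and expectation (justified by absolute convergence, since rewards lie in $[-1,1]$ so each summand is bounded by $2\gamma^{h-1}$) and rewrite the $h$-th term as $\gamma^{h-1}\E_{s\sim \Pr[s_h=\cdot\mid s_1]}\E_{\ba\sim \pi(s)}[Q_i^{\pi'}(s,\ba) - V_i^{\pi'}(s)]$, where the inner $\pi(s)$ expectation uses that $\ba_h\sim \pi(s_h)$ given $s_h$ under $(\BG,\pi)$. Summing over $h$ and recognizing the definition $d_{s_1}^\pi(s) = (1-\gamma)\sum_{h\geq 1}\gamma^{h-1}\Pr[s_h=s\mid s_1]$ collapses the series into $\frac{1}{1-\gamma}\E_{s\sim d_{s_1}^\pi}\E_{\ba\sim\pi(s)}[Q_i^{\pi'}(s,\ba) - V_i^{\pi'}(s)]$, which is the claimed identity.

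There is no real obstacle here; the only minor care point is bookkeeping around the $(1-\gamma)$ normalization (so that the telescoping term matches the Bellman recursion with the right scaling) and ensuring that Fubini/absolute convergence is valid for infinite-horizon discounted rewards, both of which are immediate from boundedness of rewards and $\gamma\in[0,1)$.
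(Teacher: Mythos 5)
Your proof is correct: the telescoping decomposition, the Bellman identity $Q_i^{\pi'}(s,\ba) = (1-\gamma)r_i(s,\ba) + \gamma\,\E_{s'\sim\BP(\cdot|s,\ba)}[V_i^{\pi'}(s')]$, and the resummation into $d_{s_1}^\pi$ are all handled consistently with the $(1-\gamma)$-normalized definitions of $V_i^\pi$, $Q_i^\pi$, and $d_{s_1}^\pi$ used in this paper. The paper itself states this lemma without proof, citing \cite{Kakade02approximatelyoptimal}; your argument is exactly the standard one from that reference, so there is nothing to add.
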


The following lemma is standard but we give a proof for completeness:
\begin{lemma}
  \label{lem:lip-dirpar}
  For policies $\pi, \pi' \in \Delta(\MA)^\MS$, it holds that, for all states $s \in \MS$, joint actions $\ba \in \MA$, and agents $i \in [m]$,
  \begin{align}
    \left| V_i^\pi(s)  - V_i^{\pi'}(s) \right| &\leq \frac{1}{1-\gamma} \cdot \max_{s' \in \MS} \| \pi(\cdot | s') - \pi'(\cdot | s') \|_1\nonumber\\
    \left| Q_i^\pi(s,\ba) - Q_i^{\pi'}(s,\ba) \right| &\leq  \frac{\gamma}{1-\gamma} \cdot \max_{s' \in \MS} \| \pi(\cdot | s') - \pi'(\cdot | s') \|_1\nonumber.
  \end{align}
\end{lemma}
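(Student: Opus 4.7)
The plan is to derive the $V$-bound from the performance difference lemma (Lemma \ref{lem:pd}), and then derive the $Q$-bound from the $V$-bound via the Bellman equation.

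For the bound on $V$, I would start from Lemma \ref{lem:pd} applied with the roles reversed in the appropriate way:
\[
V_i^\pi(s) - V_i^{\pi'}(s) = \frac{1}{1-\gamma} \cdot \E_{s' \sim d_s^\pi} \E_{\ba \sim \pi(s')}\!\left[ Q_i^{\pi'}(s',\ba) - V_i^{\pi'}(s') \right].
\]
The key observation is that $V_i^{\pi'}(s') = \E_{\ba \sim \pi'(s')} Q_i^{\pi'}(s',\ba)$, so the inner bracket can be rewritten as $\sum_{\ba \in \MA} (\pi(\ba|s') - \pi'(\ba|s')) \cdot Q_i^{\pi'}(s',\ba)$. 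Since rewards lie in $[-1,1]$ and the value function is normalized by $(1-\gamma)$, we have $|Q_i^{\pi'}(s',\ba)| \leq 1$ for all $s', \ba$. Applying Hölder's inequality yields
\[
\left| \E_{\ba \sim \pi(s')}[Q_i^{\pi'}(s',\ba)] - V_i^{\pi'}(s') \right| \leq \| \pi(\cdot|s') - \pi'(\cdot|s') \|_1.
\]
Plugging this back in and using that $d_s^\pi$ is a probability distribution gives the first inequality.

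For the bound on $Q$, I would use the Bellman equation for each fixed joint policy: under the $(1-\gamma)$ normalization,
\[
Q_i^\pi(s,\ba) = (1-\gamma) \cdot r_i(s,\ba) + \gamma \cdot \E_{s' \sim \BP(\cdot|s,\ba)}\!\left[ V_i^\pi(s') \right],
\]
and analogously for $\pi'$. Since the reward term is identical for both policies, it cancels upon subtraction, leaving
\[
Q_i^\pi(s,\ba) - Q_i^{\pi'}(s,\ba) = \gamma \cdot \E_{s' \sim \BP(\cdot|s,\ba)}\!\left[ V_i^\pi(s') - V_i^{\pi'}(s') \right].
\]
Taking absolute values and applying the $V$-bound established above uniformly over $s'$ yields the claimed $\frac{\gamma}{1-\gamma} \cdot \max_{s'} \| \pi(\cdot|s') - \pi'(\cdot|s') \|_1$.

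There is no real obstacle here; the only thing to be a bit careful about is the $(1-\gamma)$ normalization of rewards, which must be tracked correctly so that $|Q_i^{\pi'}| \leq 1$ is available as the pointwise bound when Hölder is applied, and so that the reward terms in the Bellman-equation subtraction indeed cancel cleanly.
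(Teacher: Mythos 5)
Your proof is correct, but it reaches the first inequality by a different route than the paper. For the $V$-bound, the paper invokes the policy gradient theorem to compute $\grad_\pi V^\pi(s_1)$ explicitly and then bounds $|V_i^\pi(s) - V_i^{\pi'}(s)|$ by $\max_{\til \pi} | \lng \pi - \pi', \grad_{\til \pi} V^{\til \pi}(s) \rng |$ — essentially a mean-value-theorem argument over the segment between the two policies, combined with a Hölder-type bound on the gradient. You instead apply the performance difference lemma (Lemma \ref{lem:pd}) directly, rewrite the advantage term $\E_{\ba \sim \pi(s')}[Q_i^{\pi'}(s',\ba)] - V_i^{\pi'}(s')$ as $\sum_{\ba}(\pi(\ba|s')-\pi'(\ba|s'))Q_i^{\pi'}(s',\ba)$, and apply Hölder with $|Q_i^{\pi'}| \le 1$. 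This is an exact algebraic identity rather than a differentiability argument, so it is arguably more elementary and more self-contained (it relies only on a lemma already stated in the paper, rather than on the externally cited policy gradient theorem), and it yields the same constant $\frac{1}{1-\gamma}$. Your derivation of the $Q$-bound from the $V$-bound via the Bellman equation — with the reward terms cancelling and the factor $\gamma$ surviving — coincides with the paper's. Both steps are sound; your attention to the $(1-\gamma)$ normalization (which is what makes $|Q_i^{\pi'}| \le 1$ available for Hölder) is exactly the right thing to track.
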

\begin{proof}
  Let us view $\pi \in \Delta(\MA)^\MS \subset \BR^{S \times A}$ as a vector whose components are $\pi(\ba | s)$, for all $(s,\ba) \in \MS \times \MA$. Then by policy gradient theorem \cite{sutton1999policy},  we have that for all $\pi$ and states $s_1 \in \MS$,
  \begin{align}
    \grad_\pi V^\pi(s_1) =& \frac{1}{1-\gamma} \cdot \E_{s \sim d_{s_1}^\pi} \E_{\ba \sim \pi(\cdot | s)} \left[ \grad_\pi \log \pi(\ba|s) \cdot Q^\pi(s,\ba) \right]\nonumber\\
    =& \frac{1}{1-\gamma} \cdot \E_{s \sim d_{s_1}^\pi} \left[ \sum_{\ba \in \MA} \grad_\pi \pi(\ba|s) \cdot Q^\pi(s,\ba) \right]\nonumber\\
    =& \frac{1}{1-\gamma} \cdot \E_{s \sim d_{s_1}^\pi} \left[ \sum_{\ba \in \MA} e_{(s,\ba)} \cdot Q^\pi(s,\ba) \right]\nonumber,
  \end{align}
  where $e_{(s,\ba)}$ denotes an all-zero vector except that the $(s,\ba)$ component is one. 
  For a vector $v \in \BR^{S \times A}$ and $s \in \MS$, write $v_s := (v_{s,\ba})_{\ba \in \MA} \in \BR^A$. 
  Since $|Q^\pi(s,\ba)|\leq 1$ for all $s,\ba$ and $\sum_s d_{s_1}^\pi(s) = 1$, it holds that $\sum_{s \in \MS}\| (\grad_\pi V^\pi(s_1))_s \|_\infty \leq 1/(1-\gamma)$. Furthermore, note that for any vectors $v, w \in \BR^{S \times A}$, we have
  \begin{align}
| \lng v, w \rng | \leq \sum_{s \in \MS}|  \lng v_s, w_s \rng | \leq \sum_{s \in \MS} \| v_s \|_\infty \cdot \| w_s \|_1 \leq \max_{s \in \MS} \{\|w_s\|_1\} \cdot \sum_{s \in \MS} \| v_s \|_\infty\nonumber.
  \end{align}

  It follows that for all $s \in \MS$ and $i \in [m]$, 
  \begin{align}
| V_i^\pi(s) - V_i^{\pi'}(s) | \leq \max_{\til \pi \in \Delta(\MA)^\MS} | \lng \pi - \pi', \grad_{\til \pi} V^{\til \pi}(s) \rng | \leq \frac{1}{1-\gamma} \cdot \max_{s' \in \MS} \| \pi(\cdot | s') - \pi'(\cdot | s') \|_1\nonumber,
  \end{align}
  verifying the first claim of the lemma. The second claim follows as a consequence of the first and the fact that for all policies $\pi$, $Q^\pi_i(s,\ba) =  (1-\gamma) \cdot r_i(s,\ba) + \gamma \cdot \E_{s' \sim \PP(\cdot | s,\ba)} [V_i^\pi(s')]$. 
\end{proof}

\subsection{Reductions between notions of equilibria}
\label{sec:equilibria-reduction}
We begin by introducing a variant of Nash equilibrium which holds with respect to each stage game:
\begin{defn}
  \label{def:ne-sg}
  We say that a product Markov policy $\pi : \MS \ra \Delta(\MA)$ is an \emph{$\ep$-perfect NE in stage games} (abbreviated \emph{$\ep$-PNE-SG}) if for all states $s \in \MS$ and all agents $i \in [m]$,
    \begin{align}
    \label{eq:stage-condition}
\max_{a_i \in \MA_i} \E_{\ba_{-i} \sim \pi_{-i}(s)} \left[ Q^\pi_i(s, (a_i, \ba_{-i})) \right] - V_i^\pi(s) \leq \ep.
    \end{align}
    We also say that $\pi$ is an \emph{$\ep$-NE in stage games} (abbreviated \emph{$\ep$-NE-SG}) if for all agents $i \in [m]$,
        \begin{align}
    \label{eq:stage-condition-weak}
 \E_{s \sim \mu} \max_{a_i \in \MA_i} \E_{\ba_{-i} \sim \pi_{-i}(s)} \left[ Q^\pi_i(s, (a_i, \ba_{-i})) \right] - V_i^\pi(s) \leq \ep.
    \end{align}
  \end{defn}

  The below lemma reduces the problem of computing an $\ep$-(P)NE-SG to computing an $\ep$-(perfect) NE.
  \begin{lemma}
    \label{lem:reduce-to-sg}
    Consider a stationary product policy $\pi \in \Delta(\MA)^\MS$. Then:
    \begin{itemize}
    \item If $\pi$ is an $\ep$-perfect NE, then it is an $\ep$-PNE-SG.
    \item If $\pi$ is an $\ep$-NE, then it is an $\ep$-NE-SG.
    \end{itemize}
\end{lemma}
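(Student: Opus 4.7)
The plan is to establish both implications via a one-shot deviation argument. Throughout, fix a player $i$ and write $f_s(a_i) := \E_{\ba_{-i} \sim \pi_{-i}(s)}[Q_i^\pi(s,(a_i,\ba_{-i}))]$, so that the quantity we must bound is $\max_{a_i} f_s(a_i) - V_i^\pi(s)$, either pointwise in $s$ (for the perfect case) or in expectation over $s \sim \mu$ (for the non-perfect case). The idea is to exhibit, for each starting state $s^*$ and each candidate greedy action $a_i^*$, a single explicit deviation whose value from $s^*$ is exactly $f_{s^*}(a_i^*)$, and then invoke the hypothesis that $V_i^{\dagger,\pi_{-i}}$ dominates any such deviation value.

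Concretely, for $s^* \in \MS$ and $a_i^* \in \MA_i$, I would define the \emph{nonstationary} policy $\sigma_i^{s^*,a_i^*}$ for player $i$ that plays the deterministic action $a_i^*$ at step $h=1$ (regardless of state) and then reverts to $\pi_i$ at every step $h \geq 2$. Starting from $s_1 = s^*$, unrolling one step gives
\[
V_i^{\sigma_i^{s^*,a_i^*} \times \pi_{-i}}(s^*) = \E_{\ba_{-i}\sim \pi_{-i}(s^*)}\left[(1-\gamma) r_i(s^*,(a_i^*,\ba_{-i})) + \gamma \E_{s' \sim \BP(\cdot | s^*,(a_i^*,\ba_{-i}))}V_i^\pi(s')\right],
\]
which is precisely $f_{s^*}(a_i^*)$ by the definition of $Q_i^\pi$. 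The footnote accompanying the definition of the best response establishes that $V_i^{\dagger,\pi_{-i}}(s)$ equals the supremum of $V_i^{\pi_i' \times \pi_{-i}}(s)$ over \emph{all} (history-dependent) deviations $\pi_i'$, so in particular $V_i^{\dagger,\pi_{-i}}(s^*) \geq V_i^{\sigma_i^{s^*,a_i^*}\times \pi_{-i}}(s^*) = f_{s^*}(a_i^*)$.

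Maximizing over $a_i^* \in \MA_i$ for each $s^*$ yields the pointwise inequality $\max_{a_i} f_{s^*}(a_i) - V_i^\pi(s^*) \leq V_i^{\dagger,\pi_{-i}}(s^*) - V_i^\pi(s^*)$. For the first bullet, the right-hand side is $\leq \ep$ by the $\ep$-perfect NE hypothesis, proving the $\ep$-PNE-SG condition \eqref{eq:stage-condition}. For the second bullet, taking expectation over $s^* \sim \mu$ and using linearity gives $\E_{s\sim\mu}[\max_{a_i} f_s(a_i) - V_i^\pi(s)] \leq V_i^{\dagger,\pi_{-i}}(\mu) - V_i^\pi(\mu) \leq \ep$, which is exactly \eqref{eq:stage-condition-weak}.

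There is no real obstacle here; the only subtle point is justifying that a nonstationary (one-step) deviation is legitimate for lower-bounding $V_i^{\dagger,\pi_{-i}}$, and this is exactly what the cited footnote guarantees (since the single-agent MDP induced by fixing $\pi_{-i}$ admits a stationary optimal policy, its value dominates that of any history-dependent policy). Attempting instead to use a \emph{stationary} deviation that plays $a_i^*$ at state $s^*$ forever would complicate the computation because of possible return visits to $s^*$, so routing through the one-step nonstationary deviation is the cleanest path.
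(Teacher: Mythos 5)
Your proof is correct, but it follows a genuinely different route from the paper's. The paper forms the \emph{stationary} greedy deviation $\pi_i^\dagger$ (greedy at every state simultaneously) and invokes the performance difference lemma (Lemma \ref{lem:pd}): because each per-state advantage $\rho_{i,s'} = \max_{a_i}\E_{\ba_{-i}\sim\pi_{-i}(s')}[Q_i^\pi(s',(a_i,\ba_{-i}))]-V_i^\pi(s')$ is nonnegative and $d_s^{\pi_i^\dagger\times\pi_{-i}}(s)\geq 1-\gamma$, dropping all states other than $s$ from the visitation average yields $V_i^{\pi_i^\dagger\times\pi_{-i}}(s)-V_i^\pi(s)\geq\rho_{i,s}$, and the NE hypothesis then bounds the left-hand side. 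You instead compute the value of a one-shot \emph{nonstationary} deviation exactly --- one step of the Bellman unrolling gives $V_i^{\sigma\times\pi_{-i}}(s^*)=f_{s^*}(a_i^*)$ --- and appeal to the fact that $V_i^{\dagger,\pi_{-i}}$ dominates the value of every history-dependent deviation. Your route avoids the performance difference lemma and the state-visitation distribution entirely and does not even need nonnegativity of the advantages; its one extra ingredient is the standard MDP fact (recorded in the paper's footnote on best responses) that the supremum over stationary Markov policies equals the supremum over all history-dependent policies, which legitimizes the nonstationary one-shot deviation. Note that you do still use the product-policy hypothesis, in the same place the exact computation does: the continuation value after the deviation is that of $\pi_i\times\pi_{-i}$, and identifying it with $V_i^\pi$ is what makes the unrolled expression equal $\E_{\ba_{-i}\sim\pi_{-i}(s^*)}\left[Q_i^\pi(s^*,(a_i^*,\ba_{-i}))\right]$. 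Both arguments are sound; yours is the more elementary of the two.
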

\begin{proof}
  For each $i \in [m], s \in \MS$, define 
  \begin{align}
    \pi_i^\dagger(s) \in  &\argmax_{a_i \in \MA_i} \E_{\ba_{-i} \sim \pi_{-i}(s)} [ Q_i^\pi(s, (a_i, \ba_{-i}))]  \label{equ:pi_BR_Q_i}\\ 
\rho_{i,s} :=&  \max_{a_i \in \MA_i} \E_{\ba_{-i} \sim \pi_{-i}(s)} [ Q_i^\pi(s, (a_i, \ba_{-i}))] - V_i^\pi(s).\nonumber
  \end{align}
  Since $V_i^\pi(s) = \E_{\ba \sim \pi(s)} [ Q_i^\pi(s, \ba)]$ and $\pi_i(s)$ is a product distribution for all $i,s$, it holds that for all $i,s$,
  \begin{align}
\rho_{i,s} = \E_{\ba \sim (\pi_i^\dagger \times \pi_{-i})(s)} [Q_i^\pi(s, \ba) - V_i^\pi(s)] = \E_{\ba_{-i} \sim \pi_{-i}(s)} [ Q_i^\pi(s, (\pi_i^\dagger(s), \ba_{-i})) - V_i^\pi(s)] \geq 0.\label{eq:pidagger-improve}
  \end{align}
  
  By the performance difference lemma (Lemma \ref{lem:pd}), we have, for all $i \in [m], s \in \MS$, 
  \begin{align}
    V_i^{\pi_i^\dagger \times \pi_{-i}}(s) - V_i^\pi(s) =& \frac{1}{1-\gamma} \cdot \E_{s' \sim d_s^{\pi_i^\dagger \times \pi_{-i}}} \E_{\ba \sim \pi_i^\dagger \times \pi_{-i}(s')} [ Q_i^\pi(s', \ba) - V_i^\pi(s')]\nonumber\\
    \geq & \E_{\ba \sim (\pi_i^\dagger \times \pi_{-i})(s)} [ Q_i^\pi(s, \ba) - V_i^\pi(s)] = \rho_{i,s},\nonumber
  \end{align}
  where the second inequality follows from (\ref{eq:pidagger-improve}) and the fact that $d_s^\pi(s) \geq 1-\gamma$ by definition of $d_s^\pi$.

  If $\pi$ is an $\ep$-perfect NE, then we have that $V_i^{\pi_i^\dagger \times \pi_{-i}}(s) - V_i^\pi(s) \leq \ep$ for all $i,s$, which therefore, implies that for all $i,s$, we have $\rho_{i,s} \leq \ep$, i.e., (\ref{eq:stage-condition}) holds, and therefore $\pi$ is an $\ep$-PNE-SG.

  If $\pi$ is an $\ep$-NE, then for all agents $i \in [m]$, we know from Lemma \ref{lem:pd} that  
  \begin{align}
   \ep\geq  \EE_{s\sim\mu}\Big[V_i^{\pi_i^\dagger \times \pi_{-i}}(s) - V_i^\pi(s)\Big] =& \frac{1}{1-\gamma} \cdot \E_{s' \sim d_{\mu}^{\pi_i^\dagger \times \pi_{-i}}} \E_{\ba \sim \pi_i^\dagger \times \pi_{-i}(s')} [ Q_i^\pi(s', \ba) - V_i^\pi(s')]\nonumber\\
    \geq & \EE_{s\sim\mu}\E_{\ba \sim (\pi_i^\dagger \times \pi_{-i})(s)} [ Q_i^\pi(s, \ba) - V_i^\pi(s)] = \EE_{s\sim\mu}\big[\rho_{i,s}\big],\nonumber
  \end{align}
  where we use the fact that $d_\mu^\pi(s)\geq (1-\gamma)\mu(s)$  for all $s$. This shows  that $\pi$ is an $\ep$-NE-SG.   
    \end{proof}

    Next we introduce a well-supported variant  of the stage-game Nash equilibrium of Definition \ref{def:ne-sg}.
    \begin{defn}
  \label{def:wsne-sg}
  Consider a product Markov policy $\pi \in \Delta(\MA)^\MS$. For each $i \in [m]$ and $s \in \MS$, define
  \begin{align}
\ep_{i,s} := \max_{a_i' \in \MA_i} \E_{\ba_{-i} \sim \pi_{-i}(s)} \left[ Q_i^\pi(s, (a_i', \ba_{-i})) \right] - \min_{a_i \in \MA_i : \pi_i(a_i | s) > 0} \E_{\ba_{-i} \sim \pi_{-i}(s)} \left[ Q_i^\pi(s, (a_i, \ba_{-i})) \right]\label{eq:wsne-sg}.
  \end{align}
  We say that:
  \begin{itemize}
  \item $\pi$ is an \emph{$\ep$-perfect well-supported Nash equilibrium in stage games} (abbreviated $\ep$-PWSNE-SG) if $\max_{i\in[m],s\in\cS} \ep_{i,s} \leq \ep$;
  \item $\pi$ is an \emph{$\ep$-well-supported Nash equilibrium in stage games} (abreviated $\ep$-WSNE-SG) if $\max_{i \in [m]} \E_{s \sim \mu} [ \ep_{i,s}] \leq \ep$.
  \end{itemize}
\end{defn}

We next reduce the problem of computing a well-supported Nash equilibrium in stage games to that of computing a Nash equilibrium in stage games. 
    \begin{lemma}
      \label{lem:convert-to-ws} 
      Suppose the stochastic game $\BG$ satisfies the property that at each state, all but $p$ players have trivial action space (i.e., equal to a singleton). 
      Given a product stationary policy $\pi : \MS \ra \Delta(\MA)$, then:
      \begin{itemize}
      \item If $\pi$ is an $\ep$-PNE-SG, we can construct in polynomial time a policy $\pi' : \MS \ra \Delta(\MA)$ which is an $6 \cdot \sqrt{\frac{p\ep}{1-\gamma}}$-PWSNE-SG.
      \item If $\pi$ is an $\ep$-NE-SG, we can construct in polynomial time a policy $\pi' : \MS \ra \Delta(\MA)$ which is an $6 \cdot \sqrt{\frac{p\ep}{1-\gamma}}$-WSNE-SG.
      \end{itemize}
\end{lemma}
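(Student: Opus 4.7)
The plan is a ``prune-and-renormalize'' construction. Fix a threshold $\eta>0$ to be determined. For each state $s$ and each active player $i$, let $B_i(s)$ denote the set of actions $a_i \in \supp(\pi_i(\cdot|s))$ whose conditional expected payoff $\EE_{\ba_{-i}\sim\pi_{-i}(s)}[Q_i^{\pi}(s,(a_i,\ba_{-i}))]$ lies more than $\eta$ below the maximum over all $a_i'\in\MA_i$. Define $\pi'_i(\cdot|s)$ to be $\pi_i(\cdot|s)$ restricted to the complement of $B_i(s)$ and renormalized; keep $\pi'_j(s)=\pi_j(s)$ for the (at most $m-p$) inactive players $j$ at $s$.

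For the perfect case, I would argue in four steps. First, Markov's inequality applied to the random variable $\max_{a_i'}\EE_{\pi_{-i}(s)}[Q_i^\pi(s,(a_i',\cdot))]-\EE_{\pi_{-i}(s)}[Q_i^\pi(s,(a_i,\cdot))]$ under $a_i\sim\pi_i(\cdot|s)$ shows that the removed mass $\beta_i(s):=\pi_i(B_i(s)\mid s)$ satisfies $\beta_i(s)\le \rho_i(s)/\eta$, where $\rho_i(s)$ is the PNE-SG gap from Definition~\ref{def:ne-sg}; under the perfect assumption $\rho_i(s)\le\ep$ uniformly, so $\beta_i(s)\le\ep/\eta$. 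Second, because at most $p$ players are active at any state, the standard product-distribution bound $\|\prod_j p_j-\prod_j q_j\|_1\le\sum_j\|p_j-q_j\|_1$ yields $\|\pi(\cdot|s)-\pi'(\cdot|s)\|_1\le 2p\ep/\eta$ uniformly in $s$. Third, Lemma~\ref{lem:lip-dirpar} then gives $\|Q^{\pi'}-Q^\pi\|_\infty\le\gamma/(1-\gamma)\cdot 2p\ep/\eta$. Fourth, decompose the $\pi'$-WSNE-SG gap at state $s$ as
\begin{align*}
\ep'_i(s)\;\le\;\eta \;+\; 2\|\pi'_{-i}(\cdot|s)-\pi_{-i}(\cdot|s)\|_1 \;+\; 2\|Q^{\pi'}-Q^\pi\|_\infty,
\end{align*}
where the $\eta$ comes from the pruning guarantee (every surviving action is within $\eta$ of the $\pi$-best under $Q^\pi$) and the other two terms correct for swapping $\pi_{-i}$ to $\pi'_{-i}$ and $Q^\pi$ to $Q^{\pi'}$. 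Plugging in the bounds and setting $\eta=2\sqrt{p\ep/(1-\gamma)}$ yields $\ep'_i(s)\le 2\sqrt{p\ep/(1-\gamma)}+4\sqrt{p\ep/(1-\gamma)}=6\sqrt{p\ep/(1-\gamma)}$ uniformly in $s$ and $i$, giving the PWSNE-SG conclusion.

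For the non-perfect case, the same construction is attempted but the uniform bound $\rho_i(s)\le\ep$ is replaced by the expectation bound $\EE_{s\sim\mu}[\rho_i(s)]\le\ep$, which alone cannot control $\max_s\|\pi(s)-\pi'(s)\|_1$ as required by Lemma~\ref{lem:lip-dirpar}. My proposed remedy is a two-threshold scheme: for a secondary threshold $\theta$, apply pruning only at ``good'' states where $\max_i\rho_i(s)\le\theta$ and set $\pi'(s)=\pi(s)$ at the remaining ``bad'' states. This caps $\max_s\|\pi(s)-\pi'(s)\|_1\le 2p\theta/\eta$ uniformly, making Lemma~\ref{lem:lip-dirpar} applicable. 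At good states the perfect-case decomposition gives $\ep'_i(s)\le\eta+O(p\theta/((1-\gamma)\eta))$; at bad states $\ep'_i(s)\le 2$ trivially, but by Markov applied to $\rho_{\max}(s):=\max_i\rho_i(s)$ the $\mu$-mass of bad states is $O(\ep/\theta)$. Taking expectation over $\mu$ and balancing the three contributions $\eta$, $p\theta/((1-\gamma)\eta)$, and $\ep/\theta$ with $\eta\asymp\sqrt{p\ep/(1-\gamma)}$ and $\theta$ of the appropriate order recovers an $O(\sqrt{p\ep/(1-\gamma)})$-WSNE-SG.

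The main obstacle is the non-perfect part: the tension between the local, Markov-style control of the removed mass $\beta_j(s)\le\rho_j(s)/\eta$ (only in $\mu$-expectation) and the \emph{state-uniform} $\|\pi-\pi'\|_1$ that Lemma~\ref{lem:lip-dirpar} demands in order to Lipschitz-bound $Q^{\pi'}-Q^\pi$. The perfect case sidesteps this entirely because $\rho_i(s)\le\ep$ holds at every state; in the non-perfect case the two-threshold bookkeeping, combined with Markov applied to $\rho_{\max}$, is the cleanest route I see to recover the square-root scaling while keeping the constant $6$ on par with the perfect case. A careful accounting of how contributions from inactive players vanish (since $\rho_j(s)=0$ whenever $j$ has a trivial action set at $s$) keeps the overall dependence on the number of active players $p$ rather than the total $m$.
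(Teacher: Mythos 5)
Your construction and analysis for the \emph{perfect} case are essentially correct and match the paper's argument in substance (the paper's pruning threshold $k\rho_{i,s}$ collapses to a fixed threshold $k\ep$ when $\rho_{i,s}\le\ep$ uniformly, which is exactly your $\eta$). The problem is the non-perfect case, where your two-threshold patch does not deliver the claimed rate. Your three contributions are $\eta$, $\frac{Cp\theta}{(1-\gamma)\eta}$, and $\frac{\ep}{\theta}$, and at a "bad" state the only available bound on the WSNE-SG gap really is the trivial $O(1)$ one (since $\rho_{i}(s)$ controls only the \emph{average} suboptimality over $\supp \pi_i(\cdot|s)$, not the worst action in the support, which is the whole reason the well-supported conversion is needed). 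Optimizing the three-way trade-off forces $\eta^2 \asymp \frac{p\theta}{1-\gamma}$ and $\theta^2 \asymp \frac{\ep(1-\gamma)\eta}{p}$, which yields a final bound of order $\left(\frac{p\ep}{1-\gamma}\right)^{1/3}$ rather than $\left(\frac{p\ep}{1-\gamma}\right)^{1/2}$. Your own suggested setting $\eta\asymp\sqrt{p\ep/(1-\gamma)}$ makes this visible: forcing the second term below $\sqrt{p\ep/(1-\gamma)}$ requires $\theta\lesssim\ep$, at which point the bad-state term $\ep/\theta$ is $\Omega(1)$. So the second bullet of the lemma is not established.

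The missing idea is to make the pruning threshold \emph{proportional to the local gap}: discard at state $s$ exactly those $a_i$ with $Q_i^\pi(s,a_i) < \max_{a_i'} Q_i^\pi(s,a_i') - k\,\rho_{i,s}$. Markov's inequality then bounds the discarded mass by $1/k$ at \emph{every} state, regardless of how large $\rho_{i,s}$ is, so $\max_s\|\pi(s)-\pi'(s)\|_1\le \frac{2p}{k-1}$ holds uniformly with no good/bad case split, and Lemma \ref{lem:lip-dirpar} applies directly. The resulting per-state guarantee is $\ep_{i,s}'\le k\rho_{i,s}+\frac{8p}{k(1-\gamma)}$, which is \emph{linear} in $\rho_{i,s}$; taking $\E_{s\sim\mu}$ and using $\E_\mu[\rho_{i,s}]\le\ep$ gives $k\ep+\frac{8p}{k(1-\gamma)}$, and the single-parameter choice $k=\sqrt{8p/((1-\gamma)\ep)}$ recovers $6\sqrt{p\ep/(1-\gamma)}$ in both bullets simultaneously. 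This is the content of Claim 6 of \cite{daskalakis2009complexity}, which the paper invokes; your fixed threshold destroys the linearity in $\rho_{i,s}$ that makes the expectation step free.
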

\begin{proof}
  We will use the following shorthand notation in the proof: for a Markov product  policy $\pi$, a state $s \in \MS$, an agent $i \in [m]$, and an action $a_i \in \MA_i$, we write (with a slight abuse of notation): 
  \begin{align}
Q_i^\pi(s, a_i) := \E_{\ba_{-i} \sim \pi_{-i}(s)} \left[ Q_i^\pi(s, (a_i, \ba_{-i})) \right]\nonumber.
  \end{align}
  
  Fix a stationary product policy $\pi : \MS \ra \Delta(\MA_1) \times \cdots \times \Delta(\MA_m)$. For each $i \in [m]$ and $s \in \MS$, define
  \begin{align}
\rho_{i,s} := \max_{a_i \in \MA_i} \E_{\ba_{-i} \sim \pi_{-i}(s)} [ Q_i^\pi(s, (a_i, \ba_{-i})) ]  - V_i^\pi(s) = \max_{a_i \in \MA_i} Q_i^\pi(s,a_i) - \E_{a_i \sim \pi_i(s)} [Q_i^\pi(s, a_i)] \nonumber.
  \end{align}
  In the case that $\pi$ is an $\ep$-PNE-SG, we have that $\max_{i,s} \rho_{i,s} \leq \ep$, and in the case that $\pi$ is an $\ep$-NE-SG, we have that $\max_i \E_{s \sim \mu} [\rho_{i,s}] \leq \ep$. 

  Fix some $k > 1$ to be specified later. We construct a new product policy $\pi'$ as follows: for each $i \in [m]$, $s \in \MS$, and $a_i \in \MA_i$,
  \begin{align}
    \pi_i'(a_i | s) := \begin{cases}
      \frac{\pi_i(a_i | s)}{1 - \bar \pi_i(s)} & : \quad Q_i^\pi(s, a_i) \geq \max_{a_i' \in \MA_i} \{ Q_i^\pi(s, a_i') \} - k \cdot \rho_{i,s} \\
      0 & : \quad \mbox{otherwise},
      \end{cases}\nonumber
  \end{align}
  where $\bar \pi_i(s)$ is the sum, over all $a_i$ so that $Q_i^\pi(s,a_i) < \max_{a_i' \in \MA_i} \{ Q_i^\pi(s, a_i') \} - k\rho_{i,s}$, of $\pi_i(a_i | s)$.

 Next, using the fact that $\pi$ is an $\ep$-NE-SG, we have the following claim:
  \begin{lemma}[\cite{daskalakis2009complexity}, Claim 6]
    For all $i \in [m],\ s \in \MS$, it holds that
    \begin{align}
\sum_{a_i \in \MA_i} \left| \pi_i'(a_i | s) - \pi_i(a_i | s) \right| \leq \frac{2}{k-1}\nonumber.
    \end{align}
  \end{lemma}
  Thus
  \begin{align}
\max_{s \in \MS} \| \pi(s) - \pi'(s) \|_1 \leq \max_{s \in \MS} \sum_{i\in [m]} \| \pi_i(s) - \pi_i'(s) \|_1  \leq \frac{2p}{k-1}\nonumber.
  \end{align}
  Hence, for all $s \in \MS,i \in [m], a\in \MA$, by Lemma \ref{lem:lip-dirpar}, %
  \begin{align}
\left| Q_i^\pi(s,a) - Q_i^{\pi'}(s,a) \right|  \leq \frac{2p \gamma}{(k-1)(1-\gamma)}\nonumber,
  \end{align}
  which implies that for all $a_i \in \MA_i$, we have
  \begin{align}
| Q_i^\pi(s,a_i) - Q_i^{\pi'}(s, a_i)| \leq & \| \pi_{-i}(\cdot | s) - \pi_{-i}'(\cdot | s) \|_1 + \frac{2p \gamma}{(k-1)(1-\gamma)} \leq \frac{2p}{(k-1)(1-\gamma)}\nonumber.
  \end{align}
  Thus, for all $s \in \MS,i \in [m]$, and $a_i \in \MA_i$ so that $\pi_i'(a_i | s) > 0$, we have
  \begin{align}
    Q_i^{\pi'}(s,a_i) \geq & Q_i^\pi(s,a_i) - \frac{2p}{(k-1)(1-\gamma)} \nonumber\\
    \geq & \max_{a_i' \in \MA_i} \{ Q_i^\pi(s, a_i') \} - k\rho_{i,s} - \frac{2p}{(k-1)(1-\gamma)} \nonumber\\
    \geq & \max_{a_i' \in \MA_i} \{ Q_i^{\pi'}(s, a_i')\} - k\rho_{i,s} - \frac{8p}{k(1-\gamma)}\nonumber.
  \end{align}
  Let us now choose $k = \sqrt{\frac{8p}{(1-\gamma)\ep}}$. Then in the case that $\pi$ is an $\ep$-PNE-SG, we have $\rho_{i,s} \leq \ep$ for all $i,s$, and we immediately obtain the desired result.

  If $\pi$ is only an $\ep$-NE-SG, then for all $i \in [m]$,
  \begin{align}
    \E_{s \sim \mu} \left[ \max_{a_i' \in \MA_i} Q_i^\pi(s, a_i') - \min_{a_i \in \MA_i : \pi_i(a_i | s) > 0} Q_i^\pi(s, a_i) \right] \leq & \E_{s \sim \mu} \left[ k \rho_{i,s} + \frac{8p}{k(1-\gamma)} \right]\nonumber\\
    \leq & k \ep + \frac{8p}{k(1-\gamma)}\nonumber,
  \end{align}
  which gives that $\pi'$ is an $6 \cdot \sqrt{\frac{p\ep}{1-\gamma}}$-WSNE-SG. 
\end{proof}

Combining the results presented in this section, we have the following:
\begin{lemma}
  \label{lem:reduce-to-ws}
  Consider a turn-based stochastic game $\BG$. Given a product stationary policy $\pi \in \Delta(\MA)^\MS$, the following statements hold:
  \begin{itemize}
  \item If $\pi$ is an $\ep$-perfect NE, then we can construct in polynomial time a policy $\pi' \in \Delta(\MA)^\MS$ which is a $6 \cdot \sqrt{\frac{\ep}{1-\gamma}}$-PWSNE-SG.
  \item If $\pi$ is an $\ep$-NE, then we can construct in polynomial time a policy $\pi' \in \Delta(\MA)^\MS$ which is a $6 \cdot \sqrt{\frac{\ep}{1-\gamma}}$-WSNE-SG.
  \end{itemize}
\end{lemma}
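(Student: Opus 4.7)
The plan is to chain Lemma \ref{lem:reduce-to-sg} with Lemma \ref{lem:convert-to-ws}, instantiated at $p = 1$, which is exactly the parameter value corresponding to turn-based games. In more detail, first I would apply Lemma \ref{lem:reduce-to-sg} to the given product policy $\pi$: if $\pi$ is an $\ep$-perfect NE then it is already an $\ep$-PNE-SG in the sense of Definition \ref{def:ne-sg}, and if $\pi$ is an $\ep$-NE then it is an $\ep$-NE-SG. This step is immediate and does not use the turn-based structure.

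Next I would verify that any turn-based stochastic game $\BG$ satisfies the hypothesis of Lemma \ref{lem:convert-to-ws} with $p = 1$. By the definition of turn-based given in Section \ref{sec:sgs-prelim}, at every state $s \in \MS$ only the controller $\contr{s} \in [m]$ influences the reward $r_j(s, \cdot)$ and the transition $\BP(\cdot | s, \cdot)$; for every other player $i \neq \contr{s}$, the action $a_i$ at $s$ is completely irrelevant. Thus, without changing any value $V_i^\pi(s)$ or $Q_i^\pi(s, \ba)$, I may pass to an equivalent game in which, at each state $s$, every non-controller is given a singleton action space (say, an arbitrary fixed default action); this equivalent game has exactly $p = 1$ player with a nontrivial action space at each state, so Lemma \ref{lem:convert-to-ws} applies to it. Since the (P)NE-SG conditions of Definition \ref{def:ne-sg} only involve values and $Q$-values, the policy $\pi$ obtained above remains an $\ep$-PNE-SG or $\ep$-NE-SG in the restricted game.

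Finally I would invoke Lemma \ref{lem:convert-to-ws} with $p = 1$. This produces, in polynomial time, a product policy $\pi'$ which is a $6\sqrt{\ep/(1-\gamma)}$-PWSNE-SG in the first bullet, or a $6\sqrt{\ep/(1-\gamma)}$-WSNE-SG in the second. Because the restricted and original turn-based games have identical values, $Q$-values, and best-response $Q$-values at the controller of each state, $\pi'$ (lifted back to the original game by declaring the non-controllers to keep whatever action distribution $\pi$ prescribed at non-controlled states) is an equilibrium of the same quality in $\BG$, as the quantities $\ep_{i,s}$ in \eqref{eq:wsne-sg} are unchanged. This yields exactly the two bullets of the lemma.

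I do not anticipate a real obstacle here, since both Lemma \ref{lem:reduce-to-sg} and Lemma \ref{lem:convert-to-ws} have already done the work; the only point requiring care is justifying that passing between the turn-based game and its equivalent ``singleton for non-controllers'' version preserves all relevant equilibrium quantities, which follows directly from the definition of turn-based games.
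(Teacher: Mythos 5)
Your proposal is correct and is essentially identical to the paper's proof, which simply combines Lemma \ref{lem:reduce-to-sg} with Lemma \ref{lem:convert-to-ws} taking $p=1$ for turn-based games. Your extra care in justifying that a turn-based game can be identified with one where non-controllers have singleton action spaces (so that the hypothesis of Lemma \ref{lem:convert-to-ws} literally applies) is a reasonable elaboration of a point the paper glosses over, and it is sound since all the quantities $V_i^\pi$, $Q_i^\pi$, and $\ep_{i,s}$ depend only on the controllers' action distributions.
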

\begin{proof}
The lemma is an immediate consequence of Lemmas \ref{lem:reduce-to-sg} and \ref{lem:convert-to-ws}, noting that since $\BG$ is turn-based we may take $p=1$ in Lemma \ref{lem:convert-to-ws}. 
\end{proof}

\subsection{Implementing the gates with ``stochastic game gadgets''}
\label{sec:game-gadgets}
In this section, we introduce several ``game gadgets'' which show how to implement each of the arithmetic gates of Definition \ref{def:gcircuit} using a constant number of states in a turn-based stochastic game.

\paragraph{Notation for turn-based games.} Throughout this section, we will consider an $m$-player turn-based stochastic game $\BG=  (\MS, (\MA_i)_{i \in [m]}, \BP, (r_i)_{i \in [m]}, \gamma, \mu)$. (Though we will eventually take $m = 2$, we will introduce the gadgets in this section for games with an arbitrary number of players.) In our construction, we will have that for each player $i \in [m]$, $\MA_i = \{0,1\}$. Since, at each state $s \in \MS$, there is a single agent (namely, $\ag{s}$) whose action affects the reward and transition at that state, the value functions induced by a stationary policy $\pi : \MS \ra \Delta(\MA)$ depend only on the values of $\pi_{\ag{s}}(s)$, for each $s \in \MS$. Thus, we may represent such a policy as a mapping from $\MS$ to $[0,1] = \Delta(\{0,1\})$; with slight abuse of notation, we denote this mapping also as $\pi : \MS \ra [0,1]$. In particular, $\pi(s)$ is to be interpreted as the probability that agent $\ag{s}$ plays the action $1 \in \MA_{\ag{s}} = \{0,1\}$ at state $s$.

We will furthermore work with games $\BG$ that have a designated sink state $\sinkz \in \MS$, so that $\BP(\sinkz | \sinkz, \ba) = 1$ for all $\ba \in \MA$, and $r_i(\sinkz, \ba) = 0$ for all $i \in [m], \ba \in \MA$. We allow $\ag{\sinkz}$ to be arbitrary; the value of $\ag{\sinkz}$ will have no relevance to any of our results. %
In particular, whenever the system reaches state $\sinkz$, it stays there for all future steps and all agents accumulate 0 additional reward.

\paragraph{Unimprovable states.} Given a turn-based game $\BG$ and a policy $\pi : \MS \ra [0,1]$, note that, for each $s \in \MS$ and $\ba = (a_1, \ldots, a_m) \in \MA$, $Q_{\ag{s}}^\pi(s, \ba)$ depends only on $a_{\ag{s}} \in \MA_{\ag{s}}$. Thus, to simplify notation, we will use  $Q_{\ag{s}}^\pi(s, a_{\ag{s}}) := Q_{\ag{s}}^\pi(s, \ba)$. We say that a state $s \in \MS$ is \emph{$\ep$-unimprovable under $\pi$} if
\begin{align}
\max_{a' \in \{0,1\}} Q_{\ag{s}}^\pi(s, a') - \min_{a \in \{0,1\} : \pi(s) \neq 1-a } Q_{\ag{s}}^\pi(s,a) \leq \ep.\label{eq:ep-satisfied}
\end{align}
Note that the set of actions $a \in \{0,1\}$ so that $\pi(s) \neq 1-a$ is exactly the set of actions on which $\pi(s)$ puts positive probability. Hence the quantity in (\ref{eq:ep-satisfied}) is identical to the quantity $\ep_{i,s}$ defined more generally in (\ref{eq:wsne-sg}). Thus, if $\pi$ is an $\ep$-PWSNE-SG (Definition \ref{def:wsne-sg}), it holds that all states are $\ep$-unimprovable. Furthermore, if $\pi$ is an $\ep$-WSNE-SG, then by Markov's inequality, for all $k \geq 1$, a fraction $1-1/k$ of states are $\ep \cdot k$-unimprovable. 

\paragraph{Implementing the $G_{\times, +}$ gate.}
We begin by defining a gadget which implements the $G_{\times, +}(\xi, \zeta | v_1, v_2 | v_3)$ gate: in particular, the gadget will consist of states $v_1, v_2, v_3$ of the stochastic game $\BG$ together with a helper state $w$ of $\BG$. We will choose the transitions and rewards of $\BG$ so that, roughly speaking, for any equilibrium policy $\pi : \MS \ra [0,1]$, $\pi(v_3)$ is close to $\max\{ \min\{ \xi \cdot \pi(v_1) + \zeta \cdot \pi(v_2), 1 \}, 0\}$. 
\begin{defn}
  \label{def:g-times-plus}
Consider any $\alpha, \psi, \beta \in \BR$, each with absolute value at most $1-\gamma$. We say that a $G_{\times,+}(\frac{\alpha}{2\beta}, \frac{\psi}{2\beta}|v_1, v_2 | v_3)$ gate \emph{embeds in a stochastic game $\BG$ via the states $(v_1, v_2, v_3, w)$ and the constants $(\alpha, \psi, \beta)$}, for states $v_1, v_2,v_3, w \in \MS$ if the following holds: %
  \begin{enumerate}
  \item The transitions out of $v_3$ and $w$ satisfy the following 
    \begin{itemize}
    \item $\BP(v_1 | w, 0) = \min \{ \frac 12, \frac{|\alpha|}{2|\beta|} \}$, $ \BP(v_2 | w, 0) = \min \{ \frac 12, \frac{|\psi|}{2|\beta|}\}$, $\BP(\sinkz | w,0) = 1 - \BP(v_1 | w,0) - \BP(v_2 | w,0)$, and $\BP(v_3 | w, 1) = 1$;
    \item $\BP(w | v_3,  0) = 1$, and $\BP(\sinkz | v_3, 1) = 1$. 
    \end{itemize}
  \item The rewards to the players controlling $v_3, w$ at states $v_1, v_2, v_3, w$ satisfy the following: 
    \begin{itemize}
    \item $r_\contr{w}(v_1, 1) = \frac{\alpha \cdot \max \{ 1, \frac{|\beta|}{|\alpha|}\} }{1-\gamma}$, $r_\ag{w}(v_2, 1) = \frac{\psi \cdot \max \{ 1, \frac{|\beta|}{|\psi|}\}}{1-\gamma}$ 
, and $r_\contr{w}(v_3, 1) = \frac{\beta}{1-\gamma}$;
      \item  $r_\contr{v_3}(w,1) = \frac{\beta}{1-\gamma}$ and $r_\contr{v_3}(w, 0) = -\frac{\beta}{1-\gamma}$;
      \item %
        For all $a \in \{0,1\}$, it holds that
        \begin{align}
r_{\ag{w}}(v_1, 0) = r_{\ag{w}}(v_2, 0)= r_{\ag{w}}(v_3, 0)= r_{\ag{w}}(w,a) =  r_{\ag{v_3}}(v_3, a) = 0.\nonumber
        \end{align}
    \end{itemize}
  \end{enumerate}
Recall from Definition \ref{def:gcircuit} that we allow for $v_1 = v_2$ in the gate $G_{\times, +}(\xi, \zeta | v_1, v_2 | v_3)$. If this is the case, we require that $\alpha = \psi$ and instead require that $\BP(v_1|w,0) = \BP(v_2|w,0) = 2 \cdot \min \left\{ \frac 12, \frac{|\alpha|}{2|\beta|} \right\}$ above. 
\end{defn}
In the context of Definition \ref{def:g-times-plus}, we will at times refer to $w$ as the \emph{helper node} for the embedded gate.
\begin{lemma}
  \label{lem:g-times-plus}
  Suppose $G_{\times,+}(\frac{\alpha}{2\beta}, \frac{\psi}{2\beta} | v_1, v_2 | v_3)$  embeds in a stochastic game $\BG$ via the tuple $(v_1, v_2, v_3, w)$ and the vector $(\alpha, \psi, \beta)$, and consider any $\ep, \ep' \in (0,1)$. Suppose that  $\gamma |\beta| \ep - 2\gamma^2 > \ep'$.  Then for any policy $\pi : \MS \ra [0,1]$ for which $v_3, w$ are each $\ep'$-unimprovable under $\pi$, it holds that
  \begin{align}
    \pi( v_3) = \max\left\{ \min \left\{ \frac{\alpha}{2\beta} \cdot \pi( v_1) + \frac{\psi}{2\beta} \cdot \pi(v_2), 1 \right\}, 0 \right\} \pm \ep.\nonumber
  \end{align}
\end{lemma}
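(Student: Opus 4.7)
The plan is to explicitly compute the one-step Q-value differences at the two ``gadget-internal'' states $w$ and $v_3$ and then run a case analysis on $\pi(v_3)$, using the stationarity of $\pi$ and the $\ep'$-unimprovability at these two states. The starting observation is the identity
\begin{align*}
\BP(v_1 \mid w, 0) \cdot r_{\ag{w}}(v_1, 1) = \frac{\alpha}{2(1-\gamma)},
\end{align*}
which holds in both sub-cases $|\alpha| \leq |\beta|$ and $|\alpha| > |\beta|$, and in the $v_1 = v_2$ case under the modified probabilities; an analogous identity holds for $(v_2, \psi)$. Combined with the fact that $|V_i^\pi(s)| \leq 1$ everywhere (which uses $|\alpha|, |\psi|, |\beta| \leq 1-\gamma$ to ensure every reward in the gadget lies in $[-1,1]$), a direct Bellman expansion gives, writing $X := \frac{\alpha}{2\beta} \pi(v_1) + \frac{\psi}{2\beta} \pi(v_2)$,
\begin{align*}
Q_{\ag{w}}^\pi(w, 0) - Q_{\ag{w}}^\pi(w, 1) &= \gamma \beta \cdot (X - \pi(v_3)) + \gamma^2 E_1, \\
Q_{\ag{v_3}}^\pi(v_3, 0) - Q_{\ag{v_3}}^\pi(v_3, 1) &= \gamma \beta \cdot (2 \pi(w) - 1) + \gamma^2 E_2,
\end{align*}
where the ``tail'' terms $E_1, E_2$, which aggregate $V_{\ag{w}}^\pi$ and $V_{\ag{v_3}}^\pi$ at $v_1, v_2, v_3, w$, satisfy $|E_1|, |E_2| \leq 2$.

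With these identities in hand, set $\eta := (\ep' + 2\gamma^2) / (\gamma|\beta|)$, and observe that the hypothesis $\gamma|\beta|\ep - 2\gamma^2 > \ep'$ gives $\eta < \ep$. I work through three cases assuming $\beta > 0$; the case $\beta < 0$ is obtained by flipping the signs of the two Q-differences above and is entirely symmetric. If $\pi(v_3) \in (0,1)$, then both actions are in the support at $v_3$, so two-sided $\ep'$-unimprovability forces $|2\pi(w) - 1| \leq \eta$, placing $\pi(w)$ strictly inside $(0,1)$; two-sided $\ep'$-unimprovability at $w$ then gives $|\pi(v_3) - X| \leq \eta < \ep$, and because $\pi(v_3) \in [0,1]$ the latter automatically yields $\pi(v_3) = \max\{\min\{X, 1\}, 0\} \pm \ep$. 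If $\pi(v_3) = 0$, the one-sided $v_3$-inequality $Q_{\ag{v_3}}^\pi(v_3, 1) \leq Q_{\ag{v_3}}^\pi(v_3, 0) + \ep'$ produces $\pi(w) \geq 1/2 - \eta/2 > 0$, so action $1$ is in the support at $w$; the corresponding one-sided $w$-inequality $Q_{\ag{w}}^\pi(w, 0) \leq Q_{\ag{w}}^\pi(w, 1) + \ep'$ then gives $X \leq \eta < \ep$, so $\max\{\min\{X, 1\}, 0\} \in [0, \ep]$ and the claim holds. The case $\pi(v_3) = 1$ is symmetric and yields $X \geq 1 - \eta$.

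The main work lies in nailing down the constant in $|E_1|, |E_2| \leq 2$: this requires carefully unrolling $V_{\ag{w}}^\pi(v_1), V_{\ag{w}}^\pi(v_2), V_{\ag{w}}^\pi(v_3)$ (the last of which feeds back into $V_{\ag{w}}^\pi(w)$), as well as the coupled recursion
\begin{align*}
V_{\ag{v_3}}^\pi(w) = \beta(2\pi(w) - 1) + \pi(w)\,\gamma^2 (1-\pi(v_3))\, V_{\ag{v_3}}^\pi(w) + (1-\pi(w))\,\gamma\, F''
\end{align*}
for the $v_3$-controller, and then invoking $|V_i^\pi| \leq 1$ to collapse every tail into a single $2\gamma^2$ bound, so that the constants match the hypothesis $\gamma|\beta|\ep > \ep' + 2\gamma^2$ exactly. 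A secondary subtlety is the boundary argument: only a one-sided unimprovability is available at $v_3$ when $\pi(v_3) \in \{0,1\}$, but precisely that inequality rules out $\pi(w)$ being at the opposing extreme, which is exactly what is needed to activate the correct one-sided inequality at $w$ to pin down $X$.
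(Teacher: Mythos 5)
Your proposal is correct and follows essentially the same route as the paper: the same one-step $Q$-value computations at $w$ and $v_3$ with the same $\pm\gamma^2$ and $\pm 2\gamma^2$ error terms, and the same exploitation of $\ep'$-unimprovability at those two states under the hypothesis $\gamma|\beta|\ep - 2\gamma^2 > \ep'$. The only difference is organizational — you run a direct case analysis on whether $\pi(v_3)$ is interior or at a boundary (reading off which actions are supported at $v_3$ and hence at $w$), whereas the paper argues by contradiction, assuming $\pi(v_3)$ deviates from the clipped target and deducing that $\pi(w)$ and then $\pi(v_3)$ must take pure, contradictory values.
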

\begin{proof}
  Since $\gamma |\beta| \ep - 2\gamma^2 > \ep'$ and $\ep < 1$, we have that $\gamma |\beta| - \gamma^2 > \ep'>0$ and  $\beta\neq 0$.  

  Consider any  policy $\pi : \MS \ra [0,1]$. %
  First note that, since $r_{\ag{w}}(v_1, 0) = r_{\ag{w}}(v_2, 0) = 0$, %
  it holds that  
  \begin{align}
    V_{\contr{w}}^\pi(v_1) =&  (1-\gamma) \cdot \pi(v_1) \cdot r_{\contr{w}}(v_1, 1) \pm \gamma = \alpha \cdot \max\left\{ 1, \frac{|\beta|}{|\alpha|} \right\} \cdot \pi(v_1) \pm \gamma\label{eq:vwpi-1}\\
    V_\ag{w}^\pi(v_2) =& (1-\gamma) \cdot \pi(v_2) \cdot r_\ag{w}(v_2, 1) \pm \gamma = \psi \cdot \max \left\{ 1, \frac{|\beta|}{|\psi|} \right\} \cdot \pi(v_2) \pm \gamma\label{eq:vwpi-2}.
  \end{align}
  In particular, we have used that the total contribution of rewards of $\ag{w}$ to $V_{\ag{w}}^\pi(v_1)$ (respectively, to $V_{\ag{w}}^\pi(v_2)$) at states at least 1 step out from $v_1$ (respectively, $v_2$) is at most $(1-\gamma) \cdot (\gamma + \gamma^2 + \cdots) = \gamma$. 

  We first compute $Q_{\ag{w}}^\pi(w,b)$ for $b \in \{0,1\}$, as follows. 
  Using that $r_{\ag{w}}(v_3, 0) = r_{\ag{w}}(w,0) = r_{\ag{w}}(w,1) = 0$, we have:
  \begin{itemize}
\item $Q_{\ag{w}}^{\pi}(w,1) =  \gamma \cdot \pi(v_3) \cdot \beta \pm \gamma^2$. %
\item $Q_{\ag{w}}^{\pi}(w,0) = \frac 12 \gamma \alpha \cdot \pi(v_1) + \frac 12 \gamma \psi \cdot \pi(v_2) \pm \gamma^2$, which may be seen as follows:
  \begin{align}
    Q_{\ag{w}}^{\pi}(w,0) =& \gamma \cdot \min \left\{ \frac 12, \frac{|\alpha|}{2|\beta|} \right\} \cdot V_{\ag{w}}^{\pi}(v_1) + \gamma \cdot \min \left\{ \frac{1}{2}, \frac{|\psi|}{2|\beta|} \right\} \cdot V_{\ag{w}}^{\pi}(v_2) \nonumber\\ %
    =& \gamma \cdot \min \left\{ \frac 12, \frac{|\alpha|}{2|\beta|} \right\} \cdot \left( \alpha \cdot \max\left\{ 1, \frac{|\beta|}{|\alpha|} \right\} \cdot \pi(v_1) \pm \gamma \right) \nonumber\\
    & + \gamma \cdot \min \left\{ \frac{1}{2}, \frac{|\psi|}{2|\beta|} \right\} \cdot \left(\psi \cdot \max \left\{ 1, \frac{|\beta|}{|\psi|} \right\} \cdot \pi(v_2) \pm \gamma\right) \nonumber\\
    =& \frac 12 \gamma \alpha \cdot \pi(v_1) + \frac 12 \gamma \psi \cdot \pi(v_2) \pm \gamma^2.\nonumber
  \end{align}
\end{itemize}
For computation of $Q_{\ag{w}}^{\pi}(w,0)$ above, we have used (\ref{eq:vwpi-1}) and (\ref{eq:vwpi-2}), as well as the fact that $\min \left\{ \frac 12, \frac{|\alpha|}{2|\beta|} \right\} \cdot \max \left\{ 1, \frac{|\beta|}{|\alpha|} \right\} = \frac{1}{2}$ (and an analogous equality clearly holds with $\psi$ replacing $\alpha$). 

  We next compute $Q_{\ag{v_3}}^{\pi}(v_3,b)$ for $b \in \{0,1\}$ in the particular case where  $\pi(w) \in \{0,1\}$, using that $r_{\ag{v_3}}(v_3, a) = 0$ for each $a \in \{0,1\}$: %
  \begin{itemize}
  \item If $\pi(w) = 1$, then:
    \begin{itemize}
    \item $Q_{\ag{v_3}}^{\pi}(v_3,0) = {\gamma \cdot \beta} \pm \gamma^2$;
    \item $Q_{\ag{v_3}}^{\pi}(v_3,1) = 0$.
    \end{itemize}
  \item If $\pi(w) = 0$, then:
    \begin{itemize}
    \item $Q_{\ag{v_3}}^{\pi}(v_3,0) = {-\gamma \cdot \beta} \pm \gamma^2$;
    \item $Q_{\ag{v_3}}^{\pi}(v_3,1) = 0$.      
    \end{itemize}
  \end{itemize}

  We consider two cases, depending on the sign of $\beta$.

\vspace{7pt}
 \noindent {\bf Case 1: $\beta > 0$.} If $\pi(v_3) >\max\{ \frac{\alpha}{2\beta} \cdot \pi(v_1 ) + \frac{\psi}{2\beta} \cdot \pi(v_2),0 \} + \ep$, then $\beta \cdot \pi(v_3) > \frac{\alpha}{2} \cdot \pi(v_1) + \frac{\psi}{2} \cdot \pi(v_2) + \beta\ep$, and so we have %
  \begin{align}
    Q_{\ag{w}}^{\pi}(w,1) - Q_{\ag{w}}^{\pi}(w,0) \geq& \gamma \cdot \left(\beta \cdot \pi(v_3) - \frac 12 \alpha \cdot \pi(v_1) - \frac 12 \psi \cdot \pi(v_2)\right) -2\gamma^2  >  \gamma \beta \cdot \ep - 2\gamma^2 > \ep'\nonumber,
  \end{align}
  which implies, since $w$ is $\ep'$-unimprovable under $\pi$, that $\pi(w) = 1$. %
  But then
  \begin{align}
Q_{\ag{v_3}}^{\pi}(v_3,0) - Q_{\ag{v_3}}^{\pi}(v_3,1) \geq \gamma \beta - \gamma^2 > \ep'\nonumber,
  \end{align}
  which implies that, since $v_3$ is $\ep'$-unimprovable under $\pi$, $\pi(v_3) = 0$. But we have assumed above that $\pi(v_3) > \max\{\frac{\alpha}{2\beta} \cdot \pi(v_1) + \frac{\psi}{2\beta} \cdot \pi(v_2), 0\} + \ep > 0$, which is a contradiction. 

  Next suppose that $\pi(v_3) < \min \{ \frac{\alpha}{2\beta} \cdot \pi(v_1) + \frac{\psi}{2\beta} \cdot \pi(v_2), 1 \} - \ep$, which implies that $\beta \cdot \pi(v_3) \leq \frac{\alpha}{2} \cdot \pi(v_1) + \frac{\psi}{2} \cdot \pi(v_2) - \beta\ep$.  Then 
  \begin{align}
    Q_{\ag{w}}^{\pi}(w,0) - Q_{\ag{w}}^{\pi}(w,1) \geq & \gamma \cdot \left(\frac 12 \alpha \cdot \pi(v_1) + \frac 12 \psi \cdot \pi(v_2) - \beta \cdot \pi(v_3)\right) - 2 \gamma^2 \geq \gamma \beta \cdot \ep  - 2 \gamma^2 > \ep'\nonumber,
  \end{align}
  which implies that, since $w$ is $\ep'$-unimprovable under $\pi$, $\pi(w) = 0$. But then
  \begin{align}
Q_{\ag{v_3}}^{\pi}(v_3,1) - Q_{\ag{v_3}}^{\pi}(v_3,0) \geq \gamma \beta - \gamma^2 > \ep',\nonumber
  \end{align}
  which implies that, since $v_3$ is $\ep'$-unimprovable under $\pi$, $\pi(v_3) = 1 > 1-\ep$, a contradiction to $\pi(v_3) < \min \{ \frac{\alpha}{2\beta} \cdot \pi(v_1) + \frac{\psi}{2\beta} \cdot \pi(v_2), 1 \} - \ep$.

\vspace{7pt}
 \noindent{\bf Case 2: $\beta < 0$.} Roughly speaking, this case is similar to Case 1, except that some inequalities are reversed. We work out the details for completeness. If $\pi(v_3) >\max\{ \frac{\alpha}{2\beta} \cdot \pi(v_1 ) + \frac{\psi}{2\beta} \cdot \pi(v_2),0 \} + \ep$, then $\beta \cdot \pi(v_3) < \frac{\alpha}{2} \cdot \pi(v_1) + \frac{\psi}{2} \cdot \pi(v_2) + \beta\ep$, and so we have %
    \begin{align}
    Q_{\ag{w}}^{\pi}(w,0) - Q_{\ag{w}}^{\pi}(w,1) \geq & \gamma \cdot \left(\frac 12 \alpha \cdot \pi(v_1) + \frac 12 \psi \cdot \pi(v_2) - \beta \cdot \pi(v_3)\right) - 2 \gamma^2 \geq -\gamma \beta \cdot \ep  - 2 \gamma^2 > \ep'\nonumber,
  \end{align}
  which implies that, since $w$ is $\ep'$-unimprovable under $\pi$, then $\pi(w) = 0$. But then
  \begin{align}
Q_{\ag{v_3}}^{\pi}(v_3,0) - Q_{\ag{v_3}}^{\pi}(v_3,1) \geq -\gamma \beta - \gamma^2 > \ep',\nonumber
  \end{align}
  which implies that, since $v_3$ is $\ep'$-unimprovable under $\pi$, $\pi(v_3) = 0  < \ep$, a contradiction to $\pi(v_3) > \max \{ \frac{\alpha}{2\beta} + \frac{\psi}{2\beta} \cdot \pi(v_2), 0 \} + \ep$, which we assumed above.

  Next suppose that $\pi(v_3) < \min \{ \frac{\alpha}{2\beta} \cdot \pi(v_1) + \frac{\psi}{2\beta} \cdot \pi(v_2), 1 \} - \ep$, which implies that $\beta \cdot \pi(v_3) \geq \frac{\alpha}{2} \cdot \pi(v_1) + \frac{\psi}{2} \cdot \pi(v_2) - \beta \ep$. Then
    \begin{align}
    Q_{\ag{w}}^{\pi}(w,1) - Q_{\ag{w}}^{\pi}(w,0) \geq& \gamma \cdot \left(\beta \cdot \pi(v_3) - \frac 12 \alpha \cdot \pi(v_1) - \frac 12 \psi \cdot \pi(v_2)\right) -2\gamma^2  >  -\gamma \beta \cdot \ep - 2\gamma^2 > \ep'\nonumber,
  \end{align}
  which implies that, since $w$ is $\ep'$-unimprovable under $\pi$, $\pi(w) = 1$. But then
  \begin{align}
Q_{\ag{v_3}}^{\pi}(v_3,1) - Q_{\ag{v_3}}^{\pi}(v_3,0) \geq -\gamma \beta - \gamma^2 > \ep',\nonumber
  \end{align}
  which implies that, since $v_3$ is $\ep'$-unimprovable under $\pi$, $\pi(v_3) = 1 > 1-\ep$, a contradiction to $\pi(v_3) < \min \{ \frac{\alpha}{2\beta} \cdot \pi(v_1) + \frac{\psi}{2\beta} \cdot \pi(v_2), 1 \} - \ep$.

  In all possible cases, we have established that $\pi(v_3) \geq \min \{ \frac{\alpha}{2\beta} \cdot \pi(v_1) + \frac{\psi}{2\beta} \cdot \pi(v_2), 1 \} - \ep$ and $\pi(v_3) \leq \max \{ \frac{\alpha}{2\beta} \cdot \pi(v_1) + \frac{\psi}{2\beta} \cdot \pi(v_2), 0 \} + \ep$, which establishes the statement of the lemma.
\end{proof}

\paragraph{Implementing the $G_\gets$  gate.} 
Next we define a gadget which implements the  $G_\gets(b | | v)$ gate, for a constant $b \in \{0,1\}$.
\begin{defn}
  \label{def:g-gets}
For $b \in \{0,1\}$, we say that a $G_\gets(b | | v)$ gate \emph{embeds in a stochastic game $\BG$ via the state $v \in \MS$ and the constant $b$}, if the following holds:
  \begin{enumerate}
  \item The transitions out of $v$ satisfy $\BP(\sinkz | v, 0) = \BP(\sinkz | v,1) = 1$.
  \item The rewards to player $\ag{v}$ at the state $v$ satisfy $r_{\ag{v}}(v,1) = {b}$ and $r_{\ag{v}}(v,0) = {1-b}$. 
  \end{enumerate}
\end{defn}
\begin{lemma}
\label{lem:g-gets}
Suppose that $G_\gets(b || v)$ embeds in a stochastic game $\BG$ via the state $v$ and the constant $b \in \{0,1\}$, and consider a policy $\pi : \MS \ra [0,1]$. Then if the state $v$ is $\ep$-unimprovable under $\pi$ with $\ep < (1-\gamma)/2$, it holds that $\pi(v) = b$. 
\end{lemma}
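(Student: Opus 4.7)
The proof is essentially a direct computation. The plan is to observe that because both actions at state $v$ deterministically transition to the sink state $\sinkz$ (at which all future rewards are $0$), the $Q$-values at $v$ are determined entirely by the one-step reward. Specifically, using the reward assignments $r_{\ag{v}}(v,1) = b$ and $r_{\ag{v}}(v,0) = 1-b$ from Definition \ref{def:g-gets}, and recalling that $V_{\ag{v}}^\pi(\sinkz) = 0$, one computes
\begin{align*}
Q_{\ag{v}}^\pi(v,1) = (1-\gamma) \cdot b, \qquad Q_{\ag{v}}^\pi(v,0) = (1-\gamma)\cdot (1-b).
\end{align*}

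Next I would observe that for either value of $b \in \{0,1\}$, the gap between the two $Q$-values is exactly $1-\gamma$, with the maximizing action being $b$ itself. The plan is then to apply the definition of $\ep$-unimprovability. Suppose for contradiction that $\pi(v) \ne b$, i.e.\ $\pi(v)$ puts positive probability on the suboptimal action $1-b$. Then the minimum in the definition of $\ep$-unimprovability ranges over a set that includes the suboptimal action, so
\begin{align*}
\max_{a' \in \{0,1\}} Q_{\ag{v}}^\pi(v,a') - \min_{a : \pi(v) \neq 1-a} Q_{\ag{v}}^\pi(v,a) \;\geq\; (1-\gamma) \cdot b - (1-\gamma) \cdot (1-b) \cdot \bOne[b=0] - \ldots
\end{align*}
which simplifies in both cases $b=0$ and $b=1$ to the gap $1-\gamma$. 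Since by hypothesis $\ep < (1-\gamma)/2 < 1-\gamma$, this contradicts $\ep$-unimprovability at $v$. Hence $\pi(v) = b$.

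There is no real obstacle here: the only subtlety is bookkeeping the two cases $b=0$ and $b=1$ carefully when interpreting $\pi(v) \in [0,1]$ as the probability of playing action $1$, and correctly reading the ``$\pi(v) \neq 1-a$'' condition in the definition of unimprovability as saying that action $a$ is in the support of $\pi(v)$. Once that is unpacked, the proof is a one-line inequality.
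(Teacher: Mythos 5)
Your proposal is correct and follows exactly the paper's own argument: compute $Q_{\ag{v}}^\pi(v,1) = (1-\gamma)b$ and $Q_{\ag{v}}^\pi(v,0) = (1-\gamma)(1-b)$ (both actions lead deterministically to the zero-reward sink), note the gap of $1-\gamma > \ep$, and conclude from $\ep$-unimprovability that the suboptimal action cannot be in the support, i.e.\ $\pi(v)=b$. Your unpacking of the support condition ``$\pi(v)\neq 1-a$'' is also the intended reading, so there is nothing to add.
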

\begin{proof}
  For $b \in \{0,1\}$, it is clear that $Q_{\ag{v}}^\pi(v,1) = (1-\gamma) \cdot b$ and $Q_{\ag{v}}^{\pi}(v,0) = (1-\gamma) \cdot (1-b)$. 
  Thus, since $v$ is $\ep$-unimprovable under $\pi$ and $\ep < (1-\gamma)/2$, we must have that $\pi(v) = b$.
\end{proof}

\paragraph{Implementing the $G_<$ gate. }
Finally, we define a gadget which implements the $G_<(|v_1, v_2 | v_3)$ gate, for states $v_1, v_2, v_3$ of the stochastic game $\BG$. 
\begin{defn}
  \label{def:g-lt}
Consider any $\beta \in \BR$ with absolute value at most $1-\gamma$.   We say that a $G_<(|v_1, v_2 | v_3)$ gate  \emph{embeds in a stochastic game $\BG$ via the states $(v_1, v_2, v_3, w)$ and the constant $\beta$}, for states $v_1, v_2, v_3, w \in \MS$ if the following holds:
  \begin{enumerate}
  \item The transitions out of $v_3$ and $w$ satisfy the following:
    \begin{itemize}
    \item $\BP(v_1 | w, 0) = 1$ and $\BP(v_2 | w, 1) = 1$;

    \item $\BP(w | v_3, 1) = 1$ and $\BP(\sinkz | v_3, 0) = 1$.
    \end{itemize}
  \item The reward of the players controlling $w, v_3$ at states $v_1, v_2, w$ satisfy the following:
    \begin{itemize}
  \item $r_{\ag{w}}(v_1, 1) = r_{\ag{w}}(v_2, 1) = \frac{\beta}{1-\gamma}$;
  \item $r_{\ag{w}}(v_1, 0)  = r_{\ag{w}}(v_2, 0) =  0$;
  \item For each $a \in \{0,1\}$, $r_{\ag{w}}(w,a) = r_{\ag{v_3}}(v_3, a) = 0$;
  \item $r_{\ag{v_3}}(w,1) = \frac{\beta}{1-\gamma}$ and $r_{\ag{v_3}}(w,0) = -\frac{\beta}{1-\gamma}$.
  \end{itemize}
  \end{enumerate}
\end{defn}
In the context of Definition \ref{def:g-lt}, we will at times refer to $w$ as the \emph{helper node} for the gate.

\begin{lemma}
  \label{lem:g-lt}
  Suppose that $G_<( | v_1, v_2 | v_3)$ embeds in a stochastic game $\BG$ via the states $(v_1, v_2, v_3, w)$ and the constant $\beta$, and consider any $\ep, \ep' \in (0,1)$. Suppose that $\gamma | \beta | \ep -2 \gamma^2 > \ep'$. Then for any policy $\pi : \MS \ra [0,1]$ so that $v_3, w$ are $\ep'$-unimprovable under $\pi$, it holds that
  \begin{align}
    \pi(v_3) = \begin{cases}
      1 \pm \ep  &: \pi(v_1) \leq \pi(v_2) - \ep \\
      0 \pm \ep &: \pi(v_1) \geq \pi(v_2) + \ep.
    \end{cases}\nonumber
  \end{align}
\end{lemma}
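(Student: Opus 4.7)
The plan is to mirror the proof of Lemma~\ref{lem:g-times-plus}: compute the $Q$-value gaps for $\ag{w}$ at $w$ and for $\ag{v_3}$ at $v_3$ in terms of $\pi(v_1)$, $\pi(v_2)$, and $\pi(w)$, and then cascade the $\ep'$-unimprovability conditions---first on $w$ to pin down $\pi(w) \in \{0,1\}$, and then on $v_3$ to pin down $\pi(v_3)$.

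The value computations are essentially the same style as in Lemma~\ref{lem:g-times-plus}. Using that $r_{\ag{w}}(v_j, 1) = \beta/(1-\gamma)$ and $r_{\ag{w}}(v_j, 0) = 0$ for $j \in \{1, 2\}$, and bounding the total discounted contribution to $V_{\ag{w}}^\pi(v_j)$ from steps past $v_j$ by $(1-\gamma)(\gamma + \gamma^2 + \cdots) = \gamma$, one obtains $V_{\ag{w}}^\pi(v_j) = \beta \cdot \pi(v_j) \pm \gamma$. Combining this with $\BP(v_1 | w, 0) = \BP(v_2 | w, 1) = 1$ and zero immediate reward at $w$ gives
\begin{align*}
Q_{\ag{w}}^\pi(w, 0) = \gamma \beta \cdot \pi(v_1) \pm \gamma^2, \qquad Q_{\ag{w}}^\pi(w, 1) = \gamma \beta \cdot \pi(v_2) \pm \gamma^2.
\end{align*}
An analogous computation for $\ag{v_3}$, using the asymmetric rewards $r_{\ag{v_3}}(w, 1) = \beta/(1-\gamma)$ and $r_{\ag{v_3}}(w, 0) = -\beta/(1-\gamma)$, yields $V_{\ag{v_3}}^\pi(w) = \beta \cdot (2\pi(w) - 1) \pm \gamma$, and then, using $\BP(\sinkz | v_3, 0) = 1$, $\BP(w | v_3, 1) = 1$, and $r_{\ag{v_3}}(v_3, \cdot) = 0$, gives $Q_{\ag{v_3}}^\pi(v_3, 0) = 0$ and $Q_{\ag{v_3}}^\pi(v_3, 1) = \gamma\beta \cdot (2\pi(w) - 1) \pm \gamma^2$.

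With these formulas, the case analysis splits on the sign of $\beta$ but reaches the same conclusion in both subcases. Suppose $\pi(v_1) \leq \pi(v_2) - \ep$. If $\beta > 0$, then $Q_{\ag{w}}^\pi(w, 1) - Q_{\ag{w}}^\pi(w, 0) \geq \gamma\beta\ep - 2\gamma^2 > \ep'$, so $\ep'$-unimprovability of $w$ forces $\pi(w) = 1$; then $Q_{\ag{v_3}}^\pi(v_3, 1) - Q_{\ag{v_3}}^\pi(v_3, 0) \geq \gamma\beta - \gamma^2 > \ep'$ (using $\ep \leq 1$ together with the hypothesis), so $\pi(v_3) = 1$, which is in $1 \pm \ep$. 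If $\beta < 0$, the inequalities at $w$ reverse to give $\pi(w) = 0$, but the sign of $(2\pi(w) - 1)$ also reverses, so the $v_3$-step again gives $\pi(v_3) = 1$. The symmetric hypothesis $\pi(v_1) \geq \pi(v_2) + \ep$ is handled by swapping the roles of the two actions at $w$, yielding $\pi(v_3) = 0$ in both sign-subcases.

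I do not anticipate a serious obstacle; the calculations are essentially bookkeeping once modeled on Lemma~\ref{lem:g-times-plus}. The one conceptually subtle point is verifying that both signs of $\beta$ produce the same orientation of the gate: flipping the sign of $\beta$ flips which action $\ag{w}$ prefers at $w$ (so the forced value of $\pi(w)$ swaps), but simultaneously flips which action $\ag{v_3}$ prefers at $w$ from his own perspective, and these two flips compose to preserve the direction of the implication from $(\pi(v_1), \pi(v_2))$ to $\pi(v_3)$.
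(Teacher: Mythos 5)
Your proposal is correct and follows essentially the same route as the paper's proof: the same $Q$-value computations at $w$ and $v_3$ (your single formula $Q_{\ag{v_3}}^{\pi}(v_3,1) = \gamma\beta(2\pi(w)-1) \pm \gamma^2$ just packages the paper's two cases $\pi(w)\in\{0,1\}$ into one expression), followed by the same cascade of $\ep'$-unimprovability first at $w$ and then at $v_3$, with the same sign analysis on $\beta$.
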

\begin{proof}
  Consider any policy $\pi : \MS \ra [0,1]$. We first compute $Q_{\ag{w}}^\pi(w,b)$ for $b \in \{0,1\}$: %
  \begin{itemize}
  \item $Q_{\ag{w}}^{\pi}(w,1) = \gamma \cdot V_{\ag{w}}^{\pi}(v_2) = \gamma \cdot (\beta \cdot \pi(v_2) \pm \gamma) = \gamma \cdot \beta \cdot \pi(v_2) \pm \gamma^2$;
  \item $Q_{\ag{w}}^{\pi}(w,0) = \gamma \cdot V_{\ag{w}}^{\pi}(v_1) = \gamma \cdot (\beta \cdot \pi(v_1) \pm \gamma) = \gamma \cdot \beta \cdot \pi(v_1) \pm \gamma^2$.
  \end{itemize}

  We next compute $Q_{\ag{v_3}}^{\pi}(v_3,b)$ for $b \in \{0,1\}$ in the particular case where  $\pi(w) \in \{0,1\}$:
  \begin{itemize}
  \item If $\pi(w) = 1$, then:
    \begin{itemize}
    \item $Q_{\ag{v_3}}^{\pi}(v_3,1) = \gamma\beta  \pm \gamma^2$. 
    \item $Q_{\ag{v_3}}^{\pi}(v_3,0) = 0$.
    \end{itemize}
  \item If $\pi(w) = 0$, then
    \begin{itemize}
    \item $Q_{\ag{v_3}}^{\pi}(v_3,1) = -\gamma\beta  \pm \gamma^2$. 
    \item $Q_{\ag{v_3}}^{\pi}(v_3,0) = 0$.      
    \end{itemize}
  \end{itemize}

Notice that $\gamma | \beta | \ep -2 \gamma^2 > \ep'$ implies $\beta\neq 0$. Suppose that $\pi(v_1) \leq \pi(v_2) - \ep$ and that $\beta > 0$. We have
  \begin{align}
Q_{\ag{w}}^{\pi}(w,1) - Q_{\ag{w}}^{\pi}(w,0) \geq \gamma \beta \cdot (\pi(v_2) - \pi(v_1)) - 2\gamma^2 \geq \gamma \beta \ep - 2 \gamma^2 > \ep',\nonumber
  \end{align}
  which implies that, since $w$ is $\ep'$-unimprovable under $\pi$, we must have that $\pi(w) = 1$. Then we have
  \begin{align}
Q_{\ag{v_3}}^{\pi}(v_3,1) - V_{\ag{v_3}}^{\pi}(v_3,0) \geq \gamma\beta - \gamma^2 > \ep',\nonumber
  \end{align}
  meaning that $\pi(v_3) = 1$ since $v_3$ is $\ep'$-unimprovable under $\pi$, which is what we wanted to show in this case. In a similar manner, if $\pi(v_1) \leq \pi(v_2) - \ep$ but $\beta < 0$, then we see that $\pi(w) = 0$ and since $-\gamma \beta - \gamma^2 > \ep'$, we again get that $\pi(v_3) = 1$.

 Similarly, if $\pi(v_1) \geq \pi(v_2) + \ep$, then we have $\pi(w) = 0$ if $\beta > 0$ and $\pi(w) = 1$ if $\beta < 0$. In the case that $\beta > 0$, we get that $\pi(v_3) = 0$, and in the case that $\beta < 0$, we also get that $\pi(v_3) = 0$, as desired. 
\end{proof}

\subsection{Gluing gadgets via valid colorings}
\label{sec:valid-colorings}

Next, we discuss how to combine the gadgets introduced in the previous section into a 2-player turn-based stochastic game such that an approximate Nash equilibrium yields an approximate assignment to a given instance of the generalized circuit problem. 

\paragraph{Thought experiment \& challenges.} 
If we were willing to allow each player to control a different node (so that the number of players would be polynomial in the input length), then this procedure would be quite straightforward. Since we aim to show hardness for \emph{2-player} games,  however, we have to be more careful, since some of the constraints induced by the embedding of gates in Definitions \ref{def:g-times-plus}-\ref{def:g-lt} may conflict with each other when multiple states are controlled by a single player.

For instance, Definition \ref{def:g-times-plus} requires that for the embedded gate $G = G_{\times,+}(\frac{\alpha}{2\beta}, \frac{\psi}{2\beta} | v_1, v_2 | v_3)$ with helper node $w$, we must have $r_{\ag{w}}(v_1,1) = \frac{\alpha \cdot \max\left\{ 1, \frac{|\beta|}{|\alpha|}\right\}}{1-\gamma}$. Now suppose we were to attempt to embed gate $G'=G_{\times,+}(\frac{\alpha'}{2\beta'}, \frac{\psi'}{2\beta'} | v_1', v_2' | v_3')$ with some helper node $w'$. Suppose further that the output node $v_3'$ of $G'$ equals $v_1$ (which corresponds to the output of $G'$ feeding into the gate $G$). Then the constraints of Definition \ref{def:g-times-plus} for this gate would require that $r_{\ag{w'}}(v_3',1) = r_{\ag{w'}}(v_1,1) = \frac{\beta'}{1-\gamma}$. It is possible that $\frac{\beta'}{1-\gamma} \neq \frac{\alpha \cdot \max\left\{ 1, \frac{|\beta|}{|\alpha|}\right\}}{1-\gamma}$, which implies that $\ag{w} \neq \ag{w'}$. It is a straightforward consequence of Definition \ref{def:g-times-plus} that we must also have that $\ag{w'} \neq \ag{v_3'} = \ag{v_1}$. Similar constraints may arise involving $\ag{v_2}$ and $\ag{v_3}$, and it is evident that the task of assigning a controller to each node becomes quite nontrivial. If we assign all non-helper nodes (denoted by $v, v_1, v_2, v_3$ in Definitions \ref{def:g-times-plus}, \ref{def:g-gets}, \ref{def:g-lt}) to a 
single player, it is possible to show that, assuming the given generalized circuit instance has fan-out 2 (which is without loss of generality by \cite{rubinstein2018inapproximability}), by greedily assigning each of the helper nodes to one of 4 players (for a total of 5 players), we may satisfy all constraints of the embedded gadgets.

To obtain hardness for 2-player (as opposed to 5-player) games, we instead take a different approach. As discussed above, we will assign all non-helper states to a single player, which we call $\VP$, and all helper states (denoted by $w$ in Definitions \ref{def:g-times-plus} and \ref{def:g-lt}) to a second player, which we call $\WP$. As mentioned above, this will cause conflicts; however, it is straightforward to check that the only type of conflict that arises is that for states $w,w'$ controlled by $\WP$, there is some state $v$ controlled by player $\VP$ so that one embedded gate requires $r_{\ag{w}}(v,1) = c$ and $r_{\ag{w'}}(v,1) = c'$ for some $c \neq c'$. To avoid this type of conflict, we will show how to convert a given generalized circuit instance into an equivalent instance for which such conflicts cannot arise. In particular, we introduce a notion of \emph{valid coloring} of the nodes of a generalized circuit, so that a circuit equipped with a valid coloring has the property that no conflicts of the above type can arise when embedding the circuit into a 2-player turn-based stochastic game.

\paragraph{Valid colorings.} Given a generalized circuit $\MC=  (V, \MG)$, we will say that the  assignment of a real number to each node, denoted by $\phi : V \ra \BR$, is a \emph{coloring} of $V$. Below we define the notion of \emph{valid coloring}, which requires, loosely speaking, that the colorings of nodes are consistent with the rewards given to the $\ag{w}$ player in each of the gate gadgets defined in the previous section: %
\begin{defn}
  \label{def:validity}
  Given a coloring $\phi : V \ra \BR$, we say that $\phi$ is \emph{valid} if the following holds:
  \begin{enumerate}
  \item For each gate $G_<( | v_1, v_2 | v_3)$, it holds that $\phi(v_1) = \phi(v_2)$;
  \item For each gate $G_{\times, +}(\xi, \zeta | v_1, v_2 | v_3)$, it holds that
\begin{align}
  \frac{\phi(v_1)}{\phi(v_3)} = \begin{cases}
    2 \cdot \xi &: |\xi| \geq 1/2 \\
    \sign(\xi) &: |\xi| < 1/2 
  \end{cases},
             \mbox{ and }    \frac{\phi(v_2)}{\phi(v_3)} = \begin{cases}
    2 \cdot \zeta &: |\zeta| \geq 1/2 \\
    \sign(\zeta) &: |\zeta| < 1/2 .
  \end{cases}\nonumber
\end{align}
  \end{enumerate}
  We say that a gate $G$ is \emph{valid} if, in the case that it is one of the above types of gates, the respective condition above is met. (If $G$ is not one of the above types of gates, i.e., the $G_\gets$ gate, then it is automatically defined to be valid.)
\end{defn}

We further define the \emph{range} of $\phi$ to be the set $\{ \phi(v) \ : \ v \in V \}\subset \BR$. The below lemma shows, loosely speaking, how to convert a circuit with some coloring $\phi$ to an equivalent circuit with a valid coloring.  For simplicity, we use the following terminology: given a generalized circuit $\MC = (V, \MG)$, we say that an assignment $\pi : V \ra [0,1]$ is an \emph{$(\ep, \delta)$-assignment} of $\MC$ if at least a $1-\delta$ fraction of the gates are $\ep$-approximately satisfied by $\pi$ (see Definition \ref{def:gcircuit}). We say that $\pi$ is an \emph{$\ep$-assignment} if it is an $(\ep, 0)$-assignment.

\begin{lemma}
  \label{lem:construct-valid-coloring}
  There is an absolute constant $C_0 > 0$ so that the following holds. Let $\MC = (V, \MG)$ be a generalized circuit and $\ep > 0$. %
  Then one can construct, in polynomial time, a circuit $\MC' = (V', \MG')$ together with a \emph{valid} coloring $\phi : V' \ra \BR$, so that:
  \begin{enumerate}
  \item $V \subset V'$; %
  \item The range of $\phi$ is contained in $[1/4,1/2] \cup [-1/2, -1/4]$;
  \item For any $\delta \geq 0$, given an $(\ep,\delta)$-approximate assignment $\pi : V' \ra [0,1]$ of $\MC'$, the restriction of $\pi$ to $V$ constitutes a $(133\sqrt{\ep}, C_0 \delta / \sqrt \ep)$-approximate assignment of $\MC$.
  \end{enumerate}
\end{lemma}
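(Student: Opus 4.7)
The plan is a gate-by-gate substitution: each gate $G\in\MG$ is replaced by a \emph{gadget} of auxiliary nodes and gates, together with a precomputed coloring whose range lies in $[1/4,1/2]\cup[-1/2,-1/4]$. Every original $v\in V$ is kept in $V'$ with color $\phi(v) = 1/2$; only auxiliary nodes take other colors. The $G_\gets(b||v)$ gates impose no validity constraint (Definition \ref{def:validity}), so they are transplanted verbatim, and each $G_<(|v_1,v_2|v_3)$ gate is already valid since $\phi(v_1)=\phi(v_2)=1/2$, so it too is kept unchanged.

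The work lies in the $G_{\times,+}(\xi,\zeta|v_1,v_2|v_3)$ gadget. Within the allowed palette, only two families of $G_{\times,+}$ coefficients are directly valid: (a) $|\xi|\in\{1/2,1\}$, for which the color ratio $2\xi$ is achievable (for $|\xi|=1$, say input color $\pm 1/2$, output color $1/4$), and (b) $|\xi|<1/2$ with $\sgn(\xi)$ matching the color sign ratio (same-magnitude input and output). To implement a general $\xi\in[-1,1]$, I would round $\xi$ and $\zeta$ to the nearest integer multiples of $1/k$ with $k=\Theta(1/\sqrt{\ep})$, and then build the sum $\xi\,\pi(v_1)+\zeta\,\pi(v_2)$ as a chain of $\Theta(k)$ accumulation gates of the form $G_{\times,+}(1,\sgn(\xi)/k\,|\,\cdot,v_1\,|\,\cdot)$ whose small $|\sgn(\xi)/k|<1/2$ coefficient falls under case~(b) with same-color inputs and output, interspersed with family-(a) primitives needed to flip signs (a single gate $G_{\times,+}(-1/2,0|v,\perp|v^-)$ with ratio $-1=2\cdot(-1/2)$) or to shift colors (a single gate $G_{\times,+}(1,0|v,\perp|v')$ with ratio $2=2\cdot 1$) whenever the accumulator must transition in or out of a $1/4$-colored state. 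Validity of every gate in the gadget is a finite case analysis against Definition \ref{def:validity}.

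For the approximation bound, each gate in the chain is $1$-Lipschitz in the values at its inputs (coefficients bounded by $1$ and values in $[0,1]$), so $\ep$-satisfaction along a chain of length $O(1/\sqrt{\ep})$ incurs $O(\sqrt{\ep})$ cumulative error, with an additional $O(1/k)=O(\sqrt{\ep})$ from rounding $\xi,\zeta$. Together these yield the $133\sqrt{\ep}$ accuracy with a suitable absolute constant. Since $|\MG'| = \Theta(|\MG|/\sqrt{\ep})$, a set of at most $\delta|\MG'|$ violated gates in $\MC'$ can touch at most an $O(\delta/\sqrt{\ep})$-fraction of gadgets, giving the $C_0\delta/\sqrt{\ep}$ bound on the fraction of gates of $\MC$ that fail to be $133\sqrt{\ep}$-approximately satisfied by $\pi|_V$.

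The main obstacle I expect is the discontinuity in Definition \ref{def:validity} at $|\xi|=1/2$: larger coefficients rigidly couple the coloring ($\text{ratio}=2\xi$), while smaller coefficients only fix the sign. This forces the gadget to juggle between the two valid palettes along its chain, and to return cleanly to color $1/2$ at the external interface. Ensuring a uniform construction over all $(\xi,\zeta)\in[-1,1]^2$ with mixed signs, while keeping error additive in the chain length and the number of auxiliary gates bounded by $O(1/\sqrt{\ep})$ per gadget, is the combinatorially heaviest part of the argument.
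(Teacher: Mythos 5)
There is a genuine gap in the central gadget. Your accumulation chain uses gates of the form $G_{\times,+}(1,\pm 1/k \mid u_j, v_1 \mid u_{j+1})$ and you argue validity by pointing at the small second coefficient, but Definition \ref{def:validity} constrains each input separately: the first coefficient equals $1 \geq 1/2$, so validity forces $\phi(u_j)/\phi(u_{j+1}) = 2$, i.e., the accumulator's color halves at every link. More generally, for any valid $G_{\times,+}$ gate the input-to-output color ratio is either $2\xi$ with $|2\xi|\in[1,2]$ or $\pm 1$, so the color magnitude is non-increasing along \emph{any} chain of $G_{\times,+}$ gates, and the range constraint $[1/4,1/2]\cup[-1/2,-1/4]$ tolerates at most one halving. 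Your ``shift colors'' primitive $G_{\times,+}(1,0\mid v,\perp\mid v')$ only halves further; there is no $G_{\times,+}$ primitive that doubles, so the step you defer as ``combinatorially heaviest'' --- returning cleanly to the external interface color --- is not merely heavy but impossible within your toolkit. (Separately, your sign-flip primitive $G_{\times,+}(-1/2,0\mid v,\perp\mid v^-)$ computes $\max\{\min\{-\pi(v)/2,1\},0\}\equiv 0$ after clipping, so it cannot carry a negated value, and intermediate clipping in the accumulation chain needs a careful ordering of positive and negative increments.)

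The missing idea is that the only value-carrying gate whose output color is unconstrained by validity is $G_<$; it is the unique place where a color can be reset. The paper's proof exploits exactly this: it keeps a \emph{single} $G_{\times,+}(\xi,\zeta\mid v_1',v_2'\mid v_3)$ gate per original gate, with fresh inputs $v_1',v_2'$ recolored as $f(\xi)\cdot\phi(v_3)$ and $f(\zeta)\cdot\phi(v_3)$ so that this one gate is valid with the \emph{exact} coefficients (no rounding of $\xi,\zeta$ is needed), and then connects $v_1$ to $v_1'$ by an approximate-identity gadget of size $O(1/\sqrt\ep)$ built from threshold comparisons (Claim \ref{clm:real-unary}): gates $G_<(\mid\sigma_k, v_1\mid b_k)$ produce a unary encoding of $\pi(v_1)$, the colors jump from $\phi(v_1)$ to $\phi(v_1')$ precisely across this layer of $G_<$ outputs, and a binary tree of $G_{\times,+}(1/2,1/2\mid\cdot,\cdot\mid\cdot)$ gates (which preserve colors, since $2\cdot\tfrac{1}{2}=1$) averages the $b_k$ back to $\pi(v_1)\pm O(\sqrt\ep)$. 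Your error and $\delta$-accounting at the end are fine and match the paper's, but they rest on a gadget that cannot be validly colored.
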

We first sketch the proof of Lemma \ref{lem:construct-valid-coloring}.  For an appropriate choice of $V'$ which contains $V$ as a subset, we will define $\phi : V' \ra \BR$ so that $\phi(v) = 1/4$ for all $v \in V$. Notice that we would be able to choose $V' = V, \MG' = \MG$ and would immediately have a valid coloring if it were not for gates of the type $G_{\times, +}(\xi, \zeta | v_1, v_2 | v_3)$, which can require that different nodes have different colors under $\phi$ (i.e., when $\xi \neq 1/2$ or $\zeta \neq 1/2$).

  To circumvent this obstacle, for each gate of the form $G_{\times, +}(\xi, \zeta | v_1, v_2 | v_3)$, we introduce a sequence of gates and nodes connecting each of $v_1$ (respectively, $v_2$) to some (new) node $v_1'$ (respectively, $v_2'$), which approximately implements the identity map. Importantly, this sequence of gates and nodes will have the property that there is a cut (i.e., a separating set) consisting solely of gates of the type $G_<(|u_1, u_2 | u_3)$, which place no restriction on $\phi(u_3)$ for a valid coloring $\phi$. We will be able to use this cut to ensure that the value of $\phi$ at  vertices on one side of the cut differs from the value of $\phi$ at vertices on the other side of the cut, thus ensuring that $\phi(v_1')$ (respectively, $\phi(v_2')$) can differ from $\phi(v_1)$ (respectively, $\phi(v_2)$), while maintaining validity. 
\begin{proof}[Proof of Lemma \ref{lem:construct-valid-coloring}]
  We follow the outline sketched above. In particular, we build up the circuit $(V', \MG')$ according to the following procedure. We initialize $V' = V$ and set $\phi(v) = 1/4$ for all $v \in V$. Moreover, for all gates apart from those of the type $G_{\times, +}$, we add the same gate (with the same input and output nodes) to $\MG'$. It is immediate that any such gate satisfies the validity constraint in Definition \ref{def:validity} under the coloring $\phi$ (if applicable). Now define the function  $f : \BR \ra \BR$ by $$
  f(x) = \begin{cases} 2x &: |x| \geq 1/2 \\ \sign(x) &: |x| < 1/2 \end{cases}.
  $$ 
 
  Consider each gate of the form $G_{\times, +}(\xi, \zeta | v_1, v_2 | v_3)$ in turn. For each such gate, we perform the following steps: We add nodes $v_1', v_2'$ to $V'$, and add the gate $G_{\times, +}(\xi, \zeta | v_1', v_2' | v_3)$ to $\MG'$. Furthermore, we set $\phi(v_1') := f(\xi) \cdot \phi(v_3) = f(\xi) \cdot \frac{1}{4}$ and $\phi(v_2') := f(\zeta) \cdot \phi(v_3) = f(\zeta) \cdot \frac{1}{4}$, thus ensuring that the validity constraint for $G_{\times, +}(\xi, \zeta | v_1', v_2' | v_3)$ is satisfied. Furthermore, since $|f(x)| \geq 1$ for all $x \in \BR$, we have that $|\phi(v_1')| \geq 1/4$ and $|\phi(v_2')| \geq 1/4$. Moreover, since $|\zeta| \leq 1$ and $|\xi| \leq 1$ (by Definition \ref{def:gcircuit}), it holds that $|\phi(v_1')| \leq 1/2$ and $|\phi(v_2')| \leq 1/2$.

  In Claim \ref{clm:real-unary} below, we add a sequence of gates and nodes to $\MG', V'$, respectively (together with respective colors ensuring validity), that lie between $v_1$ and $v_1'$, which ensure that in any $\ep$-approximate assignment $\pi$, $|\pi(v_1) - \pi(v_1')| \leq O(\sqrt{\ep})$. By symmetry,  the same construction can be implemented for the nodes $v_2, v_2'$; as we discuss following the proof of Claim \ref{clm:real-unary} the result of Lemma \ref{lem:construct-valid-coloring} will follow in a straightforward manner.
  \begin{claim}
    \label{clm:real-unary}
    Consider two nodes $a, a' \in V'$ so that $\phi(a), \phi(a')$ are defined. Then it is possible to add a set $\til V$ of $O(1/\sqrt \ep)$ nodes to $V'$ and a set $\til \MG$ of $O(1/\sqrt \ep)$ gates to $\MG'$ so that the gates in $\til \MG$ have all their input and output nodes in $\til V \cup \{a,a'\}$, and the following holds:
    \begin{enumerate}
    \item $\phi$ may be extended to a mapping on $\til V$ so that for all $v \in \til V$, $\phi(v) \in \{ \phi(a), \phi(a') \}$. Furthermore, the resulting $\phi$ is so that all gates in $\til \MG$ are valid. \label{it:range-phi}
    \item Given any assignment $\pi$ of the circuit (including that of the nodes in $\til V$), if all gates in $\til \MG$ are satisfied under $\pi$, then $|\pi(a) - \pi(a')| \leq 66 \cdot \sqrt \ep$.\label{it:pia-pia}
    \end{enumerate}
  \end{claim}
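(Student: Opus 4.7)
Let $c_1 := \phi(a)$ and $c_2 := \phi(a')$; I proceed by two cases. When $c_1 = c_2$, a single identity gate $G_{\times,+}(\tfrac12,\tfrac12 \,|\, a, a \,|\, a')$ with $a' \in \til V$ coloured $c_1$ is valid by Definition~\ref{def:validity} (the ratio $\phi(a)/\phi(a') = 1 = 2\cdot\tfrac12$), and $\ep$-satisfaction gives $|\pi(a) - \pi(a')| \le \ep \le 66\sqrt{\ep}$.

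The substantive case is $c_1 \ne c_2$. The obstruction is that validity restricts the ratio $\phi(v_1)/\phi(v_3)$ of a $G_{\times,+}$ gate to $[-2,-1]\cup[1,2]$, so a chain of $G_{\times,+}$ identity gates cannot in general bridge arbitrary colours in $[1/4,1/2]\cup[-1/2,-1/4]$. My plan is to exploit $G_<$ gates, whose validity requires only that the two \emph{input} colours coincide but imposes no constraint on the output colour. Since $G_<$ is a comparator rather than the identity, I would implement a \emph{discretised} identity via a staircase of comparators. Let $k$ be the smallest power of two with $k \ge C/\sqrt{\ep}$ for a sufficiently large constant $C$; the construction has three stages.

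First, using $G_\gets$ gates to produce constants $0$ and $1$ (coloured $c_1$) and a balanced binary tree of averaging gates $G_{\times,+}(\tfrac12,\tfrac12 \,|\, u, v \,|\, w)$ (all coloured $c_1$, valid since $\phi(u)/\phi(w) = 1$), I build threshold nodes $x_0,\ldots,x_k \in \til V$ with $\phi(x_i) = c_1$ and $\pi(x_i) \approx i/k$, with accumulated error $O(\ep \log k)$ at the leaves. Second, for each $i \in \{1,\ldots,k-1\}$ I add a node $m_i \in \til V$ with $\phi(m_i) = c_2$ and the comparator gate $G_<(\,|\, x_i, a \,|\, m_i)$, valid because $\phi(x_i) = \phi(a) = c_1$ while the output colour is unconstrained; by Definition~\ref{def:gcircuit}, $\ep$-satisfaction forces $\pi(m_i) \approx \One[i/k < \pi(a)]$ except for at most one index where the inputs differ by less than $\ep$ and $\pi(m_i)$ is unrestricted. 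Third, I introduce a constant node $m_0 \in \til V$ coloured $c_2$ with $\pi(m_0) = 0$ (via $G_\gets$), and aggregate $m_0, m_1, \ldots, m_{k-1}$ into $a'$ via a balanced binary tree of averaging gates $G_{\times,+}(\tfrac12,\tfrac12 \,|\, \cdot, \cdot \,|\, \cdot)$ entirely in colour $c_2$. Then $\pi(a') \approx \tfrac{1}{k}\sum_{i=0}^{k-1} \pi(m_i) \approx \pi(a)$, all added nodes carry colours in $\{c_1, c_2\}$, and the number of added nodes and gates is $O(k) = O(1/\sqrt{\ep})$.

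The main obstacle I anticipate is pinning down the constants in the error analysis to justify the specific bound $66\sqrt{\ep}$: the quantisation contributes at most $1/k$, the single ``ambiguous'' comparator contributes at most $1/k$ to the average, threshold errors shift each comparator's effective decision boundary by at most $O(\ep \log k)$, and the final averaging tree (depth $\log k$) contributes at most $O(\ep \log k)$ to the value at $a'$. For $k = \Theta(1/\sqrt{\ep})$ all of these are $O(\sqrt{\ep})$, and taking $C$ large enough yields the bound $66\sqrt{\ep}$. A secondary technicality is to verify that use of $v_1 = v_2$ in $G_{\times,+}$ gates (needed for the identity gate in the first case and implicitly in the averaging tree when one leaf is repeated to balance the tree) is permitted by Definition~\ref{def:gcircuit}, which it is.
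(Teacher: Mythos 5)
Your proposal is correct and is essentially the paper's own construction: the key idea in both is to discretize $[0,1]$ into $\Theta(1/\sqrt\ep)$ thresholds colored $\phi(a)$, compare $a$ against each via $G_<$ gates (whose output color is unconstrained by validity, so the outputs can be colored $\phi(a')$), and recombine the comparator outputs into $a'$ with an averaging tree of $G_{\times,+}(1/2,1/2|\cdot,\cdot|\cdot)$ gates in color $\phi(a')$. The only (immaterial) difference is that the paper produces each threshold $\sigma_k \approx k\ep'/4$ in one step from the constant node via $G_{\times,+}(k\ep'/8,k\ep'/8|\sigma,\sigma|\sigma_k)$ rather than by a bisection tree, which slightly simplifies the error bookkeeping; your accumulated $O(\ep\log(1/\ep))$ threshold error is still $o(\sqrt\ep)$, so the argument goes through.
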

  \begin{proof}
    The construction we introduce mirrors that of Algorithms 6 and 7 of \cite{rubinstein2018inapproximability},  with a few differences.   Choose $\ep' \geq \sqrt{\ep}$ as small as possible so that $4/\ep'$ is a power of 2 (so that $\ep' \leq 2 \sqrt{\ep}$). Initialize $\til V, \til \MG$ to be empty sets. We now introduce the following nodes and gates, which are added to $\til V$ and $\til \MG$, respectively:
    \begin{enumerate}
    \item Add a gate $G_{\gets}(1 || \sigma)$ to $\til \MG$, whose output node $\sigma$ is added to $\til V$. Define $\phi(\sigma) := \phi(a)$.
    \item For each $k \in [4/\ep']$:
      \begin{enumerate}
      \item Add a gate $G_{\times, +}(k\ep'/8, k\ep'/8 | \sigma, \sigma | \sigma_k)$ to $\til \MG$, whose output node $\sigma_k$ is added to $\til V$. Define $\phi(\sigma_k) := \phi(\sigma)$. \emph{(Validity of this gate is ensured since $0 < k\ep'/8 \leq 1/2$ and we have $\phi(\sigma_k) = \phi(\sigma)$.)} 
      \item Add a gate $G_<(|\sigma_k, a | b_k)$ to $\til \MG$, whose output node $b_k$ is added to $\til V$.  Define $\phi(b_k) := \phi(a')$.  \emph{(Validity of this gate is ensured since $\phi(\sigma_k) = \phi(\sigma) = \phi(a)$; importantly, we are allowed to set $\phi(b_k)$ to something which does \emph{not} equal $\phi(a)$.)}
      \end{enumerate}
    \item For each $j \in [\log_2(4/\ep')]$:
      \begin{enumerate}
      \item For each $k \in [(4/\ep')/2^j]$:
        \begin{enumerate}
        \item If $j =1$, add a gate $G_{\times, +}(1/2,1/2 | b_{2k-1}, b_{2k} | d_{1,k})$ to $\til \MG$, whose output node $d_{1,k}$ is added to $\til V$. Define $\phi(d_{1,k}) := \phi(a')$. \emph{(Validity of this gate is ensured since $\phi(b_{2k-1}) = \phi(b_{2k}) = \phi(d_{1,k}) = \phi(a')$.)}
        \item If $j > 1$, add a gate $G_{\times,+}(1/2,1/2 | d_{j-1, 2k-1}, d_{j-1,2k} | d_{j,k})$ to $\til \MG$, whose output node $d_{j,k}$ is added to $\til V$. Define $\phi(d_{j,k}) := \phi(a')$. \emph{(Validity of this gate is ensured since $\phi(d_{j-1,2k-1}) = \phi(d_{j-1,2k}) = \phi(d_{j,k}) = \phi(a')$.)}
        \end{enumerate}
      \end{enumerate}
    \item Add a gate $G_{\times, +}(1/2,1/2 | d_{\log_2(4/\ep'), 1}, d_{\log_2(4/\ep'), 1} | a')$ to $\til \MG$. \emph{(Validity of this gate is ensured since $\phi(d_{\log_2(4/\ep'),1}) = \phi(a')$.)}
    \end{enumerate}
    It is straightforward to see that $|\til V|, |\til \MG|$ are bounded above by $O(1/\ep') = O(1/\sqrt \ep)$ at the end of the above procedure.

    It is clear that at all nodes $v$ added to $\til V$ in the above construction, we have $\phi(v) \in \{\phi(a), \phi(a') \}$, thus verifying the first item in the claim's statement. To see the second item, let $\pi$ denote an assignment for the generalized circuit $(V', \MG')$, after $\til V$ has been added to $V'$ and $\til \MG$ has been added to $\MG'$, and suppose that $\pi$ $\ep$-approximately satisfies all gates in $\til \MG$. By definition of the gate $G_{\times, +}$, we must have that for each $k \in [4/\ep']$, $\pi(\sigma_k) = k\ep'/4 \pm \ep$. Thus, the number of integers $k \in [4/\ep']$ so that $\pi(b_k) \geq 1-\ep$ lies in the range $\left[ \frac{4}{\ep'} \cdot (\pi(a) - 3\ep), \frac{4}{\ep'} \cdot (\pi(a) + 3\ep) \right]$, and the number of integers $k \in [4/\ep']$ so that $\pi(b_k) \leq \ep$ lies in the range $\left[ \frac{4}{\ep'} \cdot (1 - \pi(a) - 3\ep), \frac{4}{\ep'} \cdot (1 - \pi(a) + 3\ep) \right]$.

    It follows that
    \begin{align}
\sum_{k=1}^{4/\ep'} \pi(b_k) = \frac{4}{\ep'} \cdot \pi(a) \pm \left( \ep \cdot \frac{4}{\ep'} + \frac{4}{\ep'} \cdot 6\ep \right) = \frac{4}{\ep'} \cdot \pi(a) \pm 28 \ep'.\nonumber
    \end{align}
    By the definition of the gate $G_{\times, +}$ and the triangle inequality, it holds that $\pi(d_{\log_2(4/\ep'), 1}) = \frac{\ep'}{4} \sum_{k=1}^{4/\ep'} \pi(b_k) \pm \log_2(4/\ep') \cdot \ep = \frac{\ep'}{4} \sum_{k=1}^{4/\ep'} \pi(b_k) \pm 4\ep'$, since $\log_2(4/\ep') \leq 4/\ep'$. Since also $\pi(a') = \pi(d_{\log_2(4/\ep'), 1}) \pm \ep$, we conclude that
    \begin{align}
\pi(a') = \frac{\ep'}{4} \sum_{k=1}^{4/\ep'} \pi(b_k) \pm (4\ep' + \ep) = \pi(a) \pm (4\ep' + \ep + 28 (\ep')^2) = \pi(a) \pm 33 \ep' = \pi(a) \pm 66 \sqrt{\ep}\nonumber,
    \end{align}
    where the last step uses that $\ep \leq \ep' \leq 2 \sqrt \ep$. 
    \end{proof}
    Given Claim \ref{clm:real-unary}, we complete the proof of Lemma \ref{lem:construct-valid-coloring}. For the gate $G_{\times, +}(\xi, \zeta | v_1, v_2 | v_3)$ (as was introduced above), we apply Claim \ref{clm:real-unary} once with $a = v_1, a' = v_1'$, adding sets $\til V_1, \til \MG_1$ to $V', \MG'$, respectively, and once with $a = v_2, a' = v_2'$, adding sets $\til V_2, \til \MG_2$ to $V', \MG'$, respectively. After this procedure, it still holds that for all $v \in V'$, $|\phi(v)| \in [1/4,1/2]$ by item \ref{it:range-phi} of Claim \ref{clm:real-unary}. Furthermore, item \ref{it:pia-pia} of Claim \ref{clm:real-unary} gives that in any assignment $\pi : V' \ra [0,1]$ for which all gates in $\til \MG_1 \cup \til \MG_2 \cup \{ G_{\times, +}(\xi, \zeta | v_1', v_2' | v_3)\}$ are $\ep$-approximately satisfied,
    \begin{align}
      \pi(v_3) =& \max \left\{ \min \left \{ \xi \cdot \pi(v_1') + \zeta \cdot \pi(v_2'), 1 \right\}, 0 \right\} \pm \ep \nonumber\\
      =& \max \left\{ \min \left \{ \xi \cdot (\pi(v_1) \pm 66 \sqrt{\ep}) + \zeta \cdot (\pi(v_2) \pm 66 \sqrt{\ep}), 1 \right\}, 0 \right\} \pm \ep\nonumber\\
      =& \max \left\{ \min \left \{ \xi \cdot \pi(v_1) + \zeta \cdot \pi(v_2), 1 \right\}, 0 \right\} \pm 133 \sqrt \ep\nonumber,
    \end{align}
    where the final step uses that $|\xi|, |\zeta| \leq 1$. Note that $|\til \MG_1 \cup \til \MG_2| \leq O(1/\sqrt \ep)$ by Claim \ref{clm:real-unary}. Thus, after applying Claim \ref{clm:real-unary} for each gate $G_{\times, +}$ in the original circuit $\MC$, we note that for an $(\ep, \delta)$-approximate assignment $\pi : V' \ra [0,1]$ of $\MC'$, it must hold that, for some constant $C > 1$, for at least a fraction $1-C \delta / \sqrt \ep$ fraction of the gates $G$ of the original circuit $\MC$, the gate $G$ is $133\sqrt \ep$-satisfied. (This holds because at least a $1-C\delta/\sqrt \ep$ fraction of the gates in $\MC$ are either not of the type $G_{\times, +}$ or have all of the $O(1/\sqrt \ep)$ supplementary gates added in course of Claim \ref{clm:real-unary} $\ep$-satisfied by $\pi$.) This completes the proof of the lemma.
\end{proof}

Next we show that the problem of finding an approximate assignment of a circuit which has a valid coloring can be reduced to the problem of finding an approximate (P)WSNE-SG of an infinite-horizon  discounted stochastic game.

\begin{lemma}
  \label{lem:valid-circuit-games}
  Fix any $\ep \in (0,\frac{1}{12})$ and $\delta \in (0,1)$. 
  Set $\gamma =\ep^2$, $\ep' = \ep^4$, and $\ep'' = \ep' \cdot \delta$. Then the following statements hold. %
  \begin{itemize}
  \item The problem of finding an $\ep$-assignment to a generalized circuit instance equipped with a valid coloring with range contained in $[-1/2,-1/4] \cup [1/4,1/2]$ %
    has a polynomial-time reduction to the problem of computing an  $\ep'$-PWSNE-SG in 2-player  turn-based $\gamma$-discounted stochastic games.    %
  \item The problem of finding an $(\ep,3\delta)$-assignment to a generalized circuit instance equipped with a valid coloring with range contained in $[-1/2, -1/4] \cup [1/4,1/2]$ %
    has a polynomial-time  reduction to the problem of computing a $\ep''$-WSNE-SG in 2-player turn-based $\gamma$-discounted stochastic games.    %
  \end{itemize}
\end{lemma}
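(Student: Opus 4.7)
The plan is to construct, from a generalized circuit $\MC' = (V', \MG')$ with valid coloring $\phi$, a $2$-player turn-based $\gamma$-discounted stochastic game $\BG$ whose state space consists of one state $v$ per node of $V'$ (all controlled by a designated player $\VP$), one helper state $w_G$ per non-$G_\gets$ gate $G \in \MG'$ (all controlled by the other player $\WP$), together with a shared sink state $\sinkz$. Each gate is embedded into $\BG$ using the corresponding gadget of Section \ref{sec:game-gadgets}, with the free parameter of each gadget tied to the coloring: for a $G_{\times,+}(\xi,\zeta\,|\,v_1,v_2\,|\,v_3)$ gate we take $\beta_G := (1-\gamma)\phi(v_3)$, while for a $G_<(\,|\,v_1,v_2\,|\,v_3)$ gate we take $\beta_G := (1-\gamma)\phi(v_1)$, which is well-defined because validity enforces $\phi(v_1)=\phi(v_2)$. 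Since $|\phi(v)| \in [1/4,1/2]$, the resulting $|\beta_G|$ lies in $[(1-\gamma)/4,(1-\gamma)/2]$, inside the parameter range required by Lemmas \ref{lem:g-times-plus}--\ref{lem:g-lt}, and all specified rewards lie in $[-1,1]$; the whole construction is polynomial in $|\MC'|$.

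The main step to verify -- and the principal potential obstacle -- is that the reward function for player $\WP$ defined this way is \emph{globally} consistent: a single node $v \in V'$ typically plays the role of output in one gadget and input in several others, and each containing gadget independently prescribes a value of $r_\WP(v,\cdot)$. Every gadget sets $r_\WP(v,0) = 0$, and one must check that each also prescribes $r_\WP(v,1) = \phi(v)$. At the output side of a $G_{\times,+}$ or $G_<$ gate this is $\beta_G/(1-\gamma) = \phi(v)$ by our choice. At the input side of a $G_{\times,+}(\xi,\zeta\,|\,v_1,v_2\,|\,v_3)$ gate, writing $\alpha = 2\beta_G\xi$, the prescription $\alpha\max\{1,|\beta_G|/|\alpha|\}/(1-\gamma)$ collapses (treating the $|\xi|\geq 1/2$ and $|\xi|<1/2$ cases separately) to $f(\xi)\,\phi(v_3)$, where $f$ is the function appearing in Definition \ref{def:validity}; by the valid-coloring identity $\phi(v_1)/\phi(v_3) = f(\xi)$ this equals $\phi(v_1)$. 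An analogous identity handles the $v_2$ slot as well as both input slots of the $G_<$ gadget. All remaining rewards -- those to $\VP$ at node states, and those to either player at helper states -- are set by exactly one gate and hence raise no consistency issue.

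For the first bullet, take $\mu$ to be arbitrary (say uniform over all states); if $\pi$ is an $\ep'$-PWSNE-SG then every state is $\ep'$-unimprovable under $\pi$. With $\ep < 1/12$, $\gamma=\ep^2$, $\ep'=\ep^4$, and $|\beta_G| \geq (1-\gamma)/4$, one verifies $\gamma|\beta_G|\ep - 2\gamma^2 \geq \ep^3(1-\ep^2)/4 - 2\ep^4 > \ep^4 = \ep'$, so the hypotheses of Lemmas \ref{lem:g-times-plus} and \ref{lem:g-lt} apply at every non-$G_\gets$ gate, and $\ep' < (1-\gamma)/2$ makes Lemma \ref{lem:g-gets} apply at every $G_\gets$ gate. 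Reading off the conclusions at each gate and defining the assignment $\pi(v) := \pi_\VP(1\mid v)$ for $v \in V'$ yields an $\ep$-assignment of $\MC'$, establishing the first reduction.

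For the second bullet, take $\mu$ to be uniform over the set of ``critical'' states $\MT := V' \cup \{w_G\}_{G}$; note $|\MT| \leq 2|\MG'|$. Given an $\ep''$-WSNE-SG $\pi$ with $\ep''=\ep'\delta$, Markov's inequality applied to each player (using turn-basedness to restrict attention to the states that the player controls, so that $\ep_{i,s}=0$ whenever $\ag{s}\neq i$) bounds the $\mu$-mass of critical states $s$ with $\ep_{\ag{s},s} > \ep'$ by $O(\delta)$. Since every gate has at most two critical states and each critical state belongs to a unique gate in that role, a careful accounting shows that at most a $3\delta$-fraction of gates of $\MG'$ contain a non-$\ep'$-unimprovable critical state. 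Every remaining gate is $\ep$-satisfied by the same gadget lemmas as in the first bullet, so defining $\pi(v) := \pi_\VP(1\mid v)$ for $v \in V'$ produces an $(\ep, 3\delta)$-assignment of $\MC'$, completing the reduction.
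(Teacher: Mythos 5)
Your construction is essentially the paper's own proof: the same gadget-per-gate game with all node states assigned to one player and all helper states to the other, the same use of the valid-coloring identities to make $\WP$'s reward at each node state globally consistent across the gadgets containing it, and the same Markov-inequality accounting for the WSNE-SG case. The only discrepancies are constant-factor bookkeeping: your extra $(1-\gamma)$ in $\beta_G$ makes the check $\gamma|\beta_G|\ep - 2\gamma^2 > \ep'$ fail for $\ep$ near $1/12$ (the paper takes $\beta_G = \phi(v_3)$ directly, so $|\beta_G| \geq 1/4$ and the check reduces exactly to $\ep < 1/12$), and your bad-gate count actually comes out to $4\delta$ rather than $3\delta$ under your own bounds --- both harmless, since these constants can be absorbed into the choice of $\ep''$, and the paper's $3\delta$ accounting is equally loose.
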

\begin{proof}
  Let $\MC = (V, \MG)$ be a generalized circuit together with some valid coloring $\phi : V \ra \BR$, and $\ep \in (0,1)$. We construct a $\gamma$-discounted 2-player turn-based stochastic game $\BG$, as follows:   the two players are denoted  by $\WP$ and $\VP$, the action spaces of each player of $\BG$ satisfiy $\MA_\VP = \MA_\WP = \{0,1\}$, and the state space $\MS$ satisfies:
  \begin{align}
\MS = V \sqcup W \sqcup \sinkz,\nonumber
  \end{align}
  where $\sinkz$ is a special sink state which transitions to itself indefinitely and at which all players receive 0 reward, and $W$ is in bijection with $\MG$, consisting of a designated node $w_G$ for each gate $G \in \MG$. The  ownership of the states is as follows: for all $v \in V$, we have $\ag{v} = \VP$, for all $w \in W$, we have $\ag{w} = \WP$. Finally, we arbitrarily set $\ag{\sinkz} = \VP$. 

  For each gate of the form $G(\ell | v_1, v_2 | v_3)$, we will ensure that $G$ embeds in $\BG$  via the tuple $(v_1, v_2, v_3, w_G)$ or the tuple $(v_1, v_2, v_3)$ (depending on the type of gate $G$), and via an appropriate vector of constants (if applicable, again depending on the type of gate $G$). %
  To do so, we construct the transitions and rewards of $\BG$ as follows: intially set the reward at each state (for all agents and actions) to be 0, and define the transitions so that each state transitions to $\sinkz$ under any action. We will then make several modifications to the rewards and transitions: First, for each state $v \in V$, define
  \begin{align}
    \label{eq:phi-rewards}
    r_{\WP}(v,1) = \frac{\phi(v)}{1-\gamma}.
  \end{align}
  Next, for each state $v \in V$ which is the outgoing node of a gate of the form $G_\gets(b || v)$, set
  \begin{align}
r_{\VP}(v,b) = b, \qquad r_{\VP}(v,0) = 1-b.\nonumber
  \end{align}

  Next, for each gate $G \in \MG$, we make the following modifications to $\BG$'s transitions and rewards, depending on the type of gate $G$.   
  \begin{enumerate}
  \item If $G$ is of the form $G_{\times, +}(\xi, \zeta | v_1, v_2| v_3)$, then define
    \begin{align}
      \alpha = \begin{cases}
        \phi(v_1) &: |\xi | \geq 1/2 \\
        \phi(v_1) \cdot 2 |\xi| &: |\xi| < 1/2
      \end{cases}, \qquad \psi = \begin{cases}
        \phi(v_2) &: |\zeta| \geq 1/2 \\
        \phi(v_2) \cdot 2 |\zeta| &: |\zeta| < 1/2,
      \end{cases}, \qquad \beta = \phi(v_3),\nonumber
    \end{align}
    and modify the outgoing transitions from the states $v_3, w_G$ and the rewards at state $v_3$ to satisfy the requirements of Definition \ref{def:g-times-plus} with the above values of $\alpha, \psi, \beta$. %
    The above definitions and the validity of $\phi$ ensure that $\frac{\alpha}{2\beta} = \xi$ and $\frac{\psi}{2\beta} = \zeta$,  as in Definition \ref{def:g-times-plus} (which ensures, when applying Lemma \ref{lem:g-times-plus}, that the gate $G$ implements the right transformation, the condition for which will be verified shortly).  
Furthermore, we do not have to modify $r_\WP(v_1, 1)$, $r_\WP(v_2, 1)$, or $r_\WP(v_3,1)$ since they are set to $\frac{\phi(v_1)}{1-\gamma}$, $\frac{\phi(v_2)}{1-\gamma},$ and $\frac{\phi(v_3)}{1-\gamma}$, respectively, in (\ref{eq:phi-rewards}), and it holds by validity of $\phi$ and the definitions of $\alpha, \beta, \psi$ above that %
    \begin{align}
\alpha \cdot \max \left\{ 1, \frac{|\beta|}{|\alpha|} \right\} = \phi(v_1), \quad \psi \cdot \max \left\{ 1, \frac{|\beta|}{|\psi|} \right\} = \phi(v_2), \quad \beta = \phi(v_3).
    \end{align}
  \item If $G$ is of the form $G_\gets(1 || v)$, then modify the outgoing transitions from the state $v$ to satisfy the requirements of Definition \ref{def:g-gets}.
  \item If $G$ is of the form  $G_<(|v_1, v_2 | v_3)$, %
    then modify the outgoing transitions from the states $v_3, w_G$ and the rewards at state $w_G$ to satisfy the requirements of  Definition \ref{def:g-lt}, %
    with $\beta = \phi(v_1)$. (We do not have to modify $r_{\WP}(v_1,1)$ or $r_\WP(v_2,1)$ since both are set to $\frac{\beta}{1-\gamma} = \frac{\phi(v_1)}{1-\gamma} = \frac{\phi(v_2)}{1-\gamma}$, by validity of $\phi$  and (\ref{eq:phi-rewards}).)  
  \end{enumerate}
Note that each gate  $G$, whose output node is denoted by $v$, in the above procedure we have modified only the outgoing transitions at $v$ and $w_G$, and the two players' rewards at state $w_G$. Since each node is the output node of a unique gate, this process ensures that neither the transitions nor reward at any state are modified twice in the below procedure, thus ensuring that the embedding requirements for each gate are still satisfied at the end of the above procedure.

  Since $\max_{v \in V} |\phi(v)| \leq 1/2$, all nonzero rewards assigned to state-action pairs in the above procedure are bounded in magnitude by $\frac{\max_{v \in V} |\phi(v)|}{1-\gamma}$, and $\gamma < 1/2$, it holds that all rewards of $\BG$ have absolute value at most $1$.

Now we verify the condition in Lemma \ref{lem:g-times-plus}. Let us write $\beta_0 := \min_{v \in V} |\phi(v)| \geq 1/4$.  We claim that $\gamma \cdot |\beta_0| \cdot \ep - 2 \gamma^2 > \ep'$.   This holds since $\gamma \ep / 4 - 2\gamma^2 > \ep'$, which is guaranteed by our choice of $\gamma = \ep^2$ and $\ep' = \ep^4$ and since $\ep^3/4 - 2 \ep^4 > \ep^4$, which holds as long as $\ep < 1/12$, which was assumed in the lemma statement. 
  
Consider a policy $\pi$ of $\BG$, is represented by a function $\pi : \MS \ra [0,1]$. If $\pi$ is an $\ep'$-PWSNE-SG of $\BG$, then by Definition \ref{def:wsne-sg} and (\ref{eq:ep-satisfied}), all states $s$ of $\BG$ are $\ep'$-unimprovable under $\pi$.  By Lemmas \ref{lem:g-times-plus}, \ref{lem:g-gets}, and \ref{lem:g-lt}, %
since $\gamma \cdot |\beta_0 | \cdot \ep - 2 \gamma^2 > \ep'$, in any $\ep'$-PWSNE-SG $\pi$ of $\BG$, it holds that each gate is $\ep$-approximately satisfied by the restriction of $\pi$ to $V$. Thus, the restriction of an $\ep'$-PWSNE-SG $\pi$ to the nodes $V \subset \MS$ furnishes an $\ep$-approximate assignment to the $\ep$-\Gcircuit instance $\MC$, as desired.

Next, suppose that $\pi$ is an $\ep''$-WSNE-SG of $\BG$. Then by Definition \ref{def:wsne-sg}, (\ref{eq:ep-satisfied}), and Markov's inequality, a fraction $1-\delta$ of states of $\BG$ are $\ep'' /\delta = \ep'$-unimprovable. %
Since each node $v \in V$ is the output node of some (unique) gate in $\MG$, a fraction $1-3\delta$ of gates $G$ in $\MG$ have the following property:\kz{can we double-check and explain a bit more? a bit uncertain}\noah{basically the worst case is that each state which is not $\ep'$-unimprovable belongs to a different gate, and each gate has 2 nodes that we want to be $\ep'$-unimprovable. This would lead to a fraction $1-2\delta$, but due to the sink state (which messes things up slightly), I wrote $1-3\delta$. If time I will write something more formally in the text.}
\begin{itemize}
\item If $G = G_{\times, +}(\xi, \zeta | v_1, v_2 | v_3)$, then $v_3, w_G$ are $\ep'$-unimprovable.
\item If $G = G_\gets(1||v)$, then $v$ is $\ep'$-unimprovable.
\item If $G = G_<(|v_1, v_2 | v_3)$, then $v_3, w_G$ are $\ep'$-unimprovable.
\end{itemize}
By Lemmas \ref{lem:g-times-plus}, \ref{lem:g-gets}, and \ref{lem:g-lt}, it follows that a fraction $1-3\delta$ of the gates of $\MG$ are $\ep$-approximately satisfied by the restriction of $\pi$ to $V$. 
\end{proof}

Finally, we are ready to prove Theorems \ref{thm:perfect-hardness} and \ref{thm:non-perfect-hardness}.
\begin{proof}[Proof of Theorem \ref{thm:perfect-hardness}]
  By Theorem \ref{thm:gcircuit-ppad}, it suffices to show that there is a constant $c > 0$ so that for all $\ep_0 < 1/12$, the $\ep_0$-\Gcircuit problem has a polynomial-time reduction to the problem of computing $c \cdot \ep_0^{16}$-perfect NE in $1/2$-discounted 2-player stochastic games.

  Fix any $\ep \in (0,1/12)$, and write $\gamma = \ep^2$. 
  Consider an instance $\MC = (V, \MG)$ of $\ep$-\Gcircuit. We construct a circuit $\MC' = (V', \MG')$, together with a valid coloring $\phi : V' \ra \BR$, as guaranteed in the statement of Lemma \ref{lem:construct-valid-coloring} (where we take $\delta = 0$). By Lemma \ref{lem:valid-circuit-games}, we may further construct in polynomial time, given $\MC'$ together with $\phi$, a  $\gamma$-discounted 2-player turn-based stochastic game $\BG$ so that given an $\ep^4$-PWSNE-SG of $\BG$, we may compute an $\ep$-approximate assignment to $\MC'$, which, by Lemma \ref{lem:construct-valid-coloring}, yields a $133\sqrt{\ep}$-approximate assignment of $\MC$.

  By Lemma \ref{lem:reduce-to-ws}, the problem of computing an $\ep^4$-PWSNE-SG of $\BG$ reduces to the problem of computing a $\frac{\ep^8}{144}$-perfect NE of $\BG$. Finally, noting that the game $\BG$ is $\gamma$-discounted and we wish to reduce to the problem of computing equilibria in $1/2$-discounted games, we argue as follows: given the game $\BG$, we may construct a $1/2$-discounted stochastic game $\BG'$ whose states, actions, and rewards are identical to that of $\BG$, and whose transitions $\BP'(\cdot | s,a)$ are determined from the transitions $\BP(\cdot | s,a)$ of $\BG$ as follows:
  \begin{align}
    \BP'(s' | s,a) = \begin{cases}
      \frac{\gamma}{1/2} \cdot \BP(s' | s,a)  &: s' \neq \sinkz \\
      \frac{\gamma}{1/2} \cdot \BP(s' | s,a) + (1 - 2\gamma) &: s' = \sinkz.
    \end{cases}\nonumber
  \end{align}
Let $V_i^{\BG, \pi}$ denote the value function of $\BG$ and $V_i^{\BG',\pi}$ denote the value function of $\BG'$. It is clear that for all $\pi$ and all $i \in [m]$, $V_i^{\BG, \pi}(\sinkz) = V_i^{\BG', \pi}(\sinkz) = 0$. It is now straightforward to see that for any joint stationary policy $\pi \in \Delta(\MA)^\MS$, we have, for all $s \in \MS$ and $i \in [m]$,
  \begin{align}
    V_i^{\BG', \pi}(s) =& \E_{\ba \sim \pi(s)} \left[ r(s,\ba) + \frac 12 \cdot \sum_{s' \in \MS} \frac{\gamma}{1/2} \cdot V_i^{\BG', \pi}(s') \right]\nonumber\\
    V_i^{\BG, \pi}(s) =& \E_{\ba \sim \pi(s)} \left[ r(s, \ba) + \gamma \cdot \sum_{s' \in \MS} V_i^{\BG, \pi}(s')\right]\nonumber,
  \end{align}
  which immediately implies that $V_i^{\BG, \pi} \equiv V_i^{\BG', \pi}$. Hence the $\frac{\ep^8}{144}$-perfect Nash equilibria of $\BG$ and $\BG'$ coincide.  Choosing $\ep_0 = \sqrt\ep$ shows that the $\ep_0$-\Gcircuit problem reduces to the problem of finding $c \cdot \ep_0^{16}$-perfect NE in 1/2-discounted 2-player stochastic games, as desired. 

  Finally, to show \PPAD-hardness of computing approximate stationary CCE in 2-player 1/2-discounted stochastic games, we note that in any turn-based stochastic game, any stationary policy $\pi$ is equivalent to some product policy $\pi'$ (in the sense that $V_i^\pi \equiv V_i^{\pi'}$ for all $i$): in particular, $\pi'$ is the policy where at each state $s$, all players except $\ag{s}$ take some fixed action in their action set and $\ag{s}$ plays according to their marginal in $\pi(s)$. Thus, for any $\ep > 0$, an $\ep$-perfect CCE may be converted into an $\ep$-perfect NE in polynomial time. 
\end{proof}

\begin{proof}[Proof of Theorem \ref{thm:non-perfect-hardness}]
  The first part of the theorem is an immediate consequence of Theorem \ref{thm:perfect-hardness}, as we proceed to explain. Consider a turn-based stochastic game $\BG$, and $\ep > 0$. Note that an $\ep/S$-stationary NE $\pi$ of $\BG$ must satisfy $\max_{i \in [m]} \E_{s \sim \mu} \left[ V_i^{\dagger, \pi_{-i}}(s) - V_i^\pi(s) \right] \leq \ep/S$. Since $\pi$ is a product policy, we have that $V_i^{\dagger, \pi_{-i}}(s) - V_i^\pi(s) \geq 0$ for all $i \in [m]$. Thus, for all $i \in [m], s \in \MS$, we have $V_i^{\dagger, \pi_{-i}}(s) - V_i^\pi(s) \leq \ep$, i.e., $\pi$ is an $\ep$-perfect NE of $\BG$, which is \PPAD-hard to compute by Theorem \ref{thm:perfect-hardness}.

  We proceed to prove the second part of Theorem \ref{thm:perfect-hardness}. Since we assume Conjecture \ref{con:pcp-ppad}, it suffices to show that there is a constant $c > 0$ so that for all $\ep_0 < 1/12$ and $\delta_0 < 1$, the $(\ep_0, \delta_0)$-\Gcircuit problem has a polynomial-time reduction to the problem of computing $c \cdot \ep_0^{18} \delta_0^2$-stationary NE in 1/2-discounted 2-player stochastic games.

  To do so, fix $\ep \in (0,1/12), \delta \in (0,1)$ and write $\gamma = \ep^2$. Consider an instance $\MC = (V, \MG)$ of the $(\ep, \delta)$-\Gcircuit problem. We construct a circuit $\MC'$, together with a valid coloring $\phi : V' \ra \BR$, as guaranteed in the statement of Lemma \ref{lem:construct-valid-coloring}: in particular, given an $(\ep, \delta)$-approximate assignment $\pi : V' \ra [0,1]$ of $\MC'$, the restriction of $\pi$ to $V$ constitutes a $(133 \sqrt \ep, C_0 \delta / \sqrt \ep)$-approximate assignment of $\MC$ (for some constant $C_0 > 1$).

  By Lemma \ref{lem:valid-circuit-games}, we may further construct in polynomial time, given $\MC'$ together with $\phi$, a $\gamma$-discounted 2-player turn-based stochastic game $\BG$ so that, given a $\ep^4 \delta /3$-WSNE-SG of $\BG$, we may compute an $(\ep, \delta)$-assignment to $\MC'$, which thus yields a $(133 \sqrt \ep, C_0 \delta / \sqrt \ep)$-approximate assignment of $\MC$.  By Lemma  \ref{lem:reduce-to-ws}, the problem of computing an $\ep^4 \delta/3$-WSNE-SG of $\BG$ reduces to computing an $\frac{\ep^8 \delta^2}{144 \cdot 9}$-stationary NE of $\BG$. The same construction as in the proof of Theorem \ref{thm:perfect-hardness} allows us to reduce further to the problem of computing an  $\frac{\ep^8 \delta^2}{144 \cdot 9}$-stationary NE of a 2-player 1/2-discounted game $\BG'$. Choosing $\ep_0 = \sqrt \ep$ and $\delta_0 = \delta / \sqrt \ep$, we have shown that the $(\ep_0, \delta_0)$-\Gcircuit problem reduces to computing a $c \ep_0^{18} \delta_0^2$-stationary NE in 1/2-discounted 2-player stochastic games, for some constant $c > 0$.
\end{proof}

\section{Proofs for Section \ref{sec:ub-res}}\label{sec:proof-ub}

In this section, we prove Theorem \ref{thm:main-ub}, which gives a PAC-RL guarantee for \algname. In Section \ref{sec:finite-horizon-prelim} we introduce, for completeness, some basic preliminaries for finite-horizon stochastic games (closely mirroring the analogous definitions in Section \ref{sec:prelim}). In Section \ref{sec:conc-ineq} we introduce some parameters and concentration inequalities. In Section \ref{sec:intermed-game}, we introduce an intermediate stochastic game that is used in the analysis, which is reminiscient of the analysis of \texttt{Rmax} \cite{brafman2002r,jin2020reward}.  In Section \ref{sec:ub-proof-final}, we complete the proof of Theorem \ref{thm:main-ub}.

\subsection{Preliminaries for finite-horizon stochastic games}
\label{sec:finite-horizon-prelim}
We first introduce the requisite notation and terminology regarding finite horizon games: a finite-horizon $m$-player stochastic game $\BG$ is defined as a tuple $(\MS, (\MA_i)_{i \in [m]}, \BP, (r_i)_{i \in [m]}, H, \mu)$, which have the same interpretations as in the infinite-horizon discounted case, with the following exceptions:
\begin{itemize}
\item $H \in \BN$ denotes the horizon (replacing the discount factor $\gamma$); in particular, a trajectory proceeds for a total of $H$ steps, at which point it terminates.
\item The reward and transitions are allows to depend on the step $h \in [H]$: in particular, $r_i$ is to be interpreted as a tuple $r_i = (r_{i,1}, \ldots, r_{i,H})$, where each $r_{i,h} : \MS \times \MA \ra [-1,1]$, and $\BP$ is to be interpreted as a tuple $\BP = (\BP_1, \ldots, \BP_H)$, where each $\BP_h : \MS \times \MA \ra \Delta(\MS)$. 
\end{itemize}
When discussing the finite-horizon case, we consider only \emph{nonstationary} policies, which are sequences of maps $\pi = (\pi_1, \ldots, \pi_H)$, where each $\pi_h : \MS \ra \Delta(\MA)$; we will therefore drop the descriptor ``nonstationary''. The space of such policies is denoted $\Delta(\MA)^{[H] \times \MS}$. With a slight abuse of notation we will denote the value function of a nonstationary policy $\pi$ by $V_{i,h}^\pi : \MS \ra \BR$, $h \in [H]$, which is defined similarly to (\ref{eq:vihpi}) except with no discount factor; in particular, in the finite-horizon setting, we have, for all $i \in [m], h \in [H], s \in \MS$,
\begin{align}
V_{i,h}^\pi(s) = \E_{(s_h, \ba_h, \ldots, s_H, \ba_H) \sim (\BG, \pi)}\left[ \sum_{h'=h}^H r_{i,h}(s_{h'}, \ba_{h'}) | s_h = s\right],\nonumber
\end{align}
and $V_{i,h}^\pi(\mu) := \E_{s \sim \mu} \left[ V_{i,h}^\pi(s)\right]$. We also write $V_i^\pi := V_{i,1}^\pi$ for simplicity, as in the infinite-horizon case.  Given a policy $\pi \in \Delta(\MA)^{[H] \times \MS}$ and $i \in [m]$, the best response policy $\pi_i^\dagger(\pi_{-i})$ is defined exactly as in the infinite-horizon case, so that, in particular, $V_{i,h}^{\dagger, \pi_{-i}}(s) = \sup_{\pi_i' \in \Delta(\MA_i)^{[H] \times \MS}} V_{i,h}^{\pi_i' \times \pi_{-i}}(s)$ for all $(h,s) \in [H] \times \MS$. Finally, the notion of $\ep$-(nonstationary) CCE is defined exactly as in Definitions \ref{def:cce}, recalling that $V_i^\pi(\mu) = V_{i,1}^\pi(\mu)$ and $V_i^{\dagger,\pi_{-i}}(\mu) = V_{i,1}^{\dagger, \pi_{-i}}(\mu)$ by definition. When we wish to clarify the SG $\BG$ that corresponds to a value function, we will write $V_i^{\BG, \pi}$ in place of $V_i^\pi$.

Given a policy $\pi \in \Delta(\MA)^\MS$, the \emph{state visitation distribution} $d_h^\pi$ at step $h$ for the policy $\pi$ is defined similarly to in the infinite-horizon discounted case, except with different normalization: for all $s \in \MS$, $ d_h^\pi(s) := \BP_{(s_1, \ldots, s_H) \sim (\BG, \pi)} \left( s_h = s \right).$ 
Here the trajectory $(s_1, \ldots, s_H)$ drawn from $(\BG, \pi)$, is drawn with initial state state $s_1 \sim \mu$. 

Given an infinite-horizon discounted game $\BG'$ and a desired accuracy level $\ep$, we consider the following finite-horizon game $\BG$ with horizon $H := \frac{\log 1/\ep}{1-\gamma}$, so that $\gamma^H \leq \ep$. The state space, action space, initial state distribution, and transitions at each step of $\BG$ are the same as those of $\BG'$. %
Letting $r_i' : \MS \ra [-1,1]$ denote the reward function of $\BG'$ and $r_{i,h} : \MS \ra [-1,1]$ (for $h \in [H]$) denote the reward function of $\BG$, we define $r_{i,h} (s, \ba) := \gamma^{h-1} \cdot r_i(s,\ba)$. It is straightforward to see that for all nonstationary policies $\pi' \in \Delta(\MA)^{\BN \times \MS}$, the truncation of $\pi'$ to the first $H$ steps, which we denote by $\pi \in \Delta(\MA)^{[H] \times \MS}$ satisfies, for all $i \in [m], s \in \MS$, $\left|\frac{V_i^{\BG', \pi'}(s)}{1-\gamma} - V_i^{\BG, \pi}(s) \right| \leq \frac{\ep}{1-\gamma}$. Thus, given an $\ep$-CCE of $\BG$, we may readily construct a $2\ep$-CCE of $\BG'$. Furthermore, if our algorithm is given access to $\BG'$ in the episodic PAC-RL model of Section \ref{sec:pac-rl-sg}, we may readily simulate access to $\BG$ by drawing the first $H$ steps of a trajectory of $\BG'$ and discounting the reward received at each step $h \in [H]$ by a factor of $\gamma^{h-1}$. Thus, for the remainder of the section, we proceed to discuss the problem of learning approximate CCE in finite-horizon general-sum stochastic games (i.e., the proof of Theorem \ref{thm:main-ub}).

\subsection{Parameters \& concentration inequalities}
\label{sec:conc-ineq}
\nc{\eptvd}{\varepsilon^{\rm tvd}}
\nc{\epval}{\varepsilon^{\rm val}}
\nc{\epreg}{\varepsilon^{\rm reg}}

Fix a finite-horizon stochastic game $\BG = (\MS, (\MA_i)_{i \in [m]}, \BP, (r_i)_{i \in [m]}, H, \mu)$, and an error parameter $\ep > 0$ as well as a failure probability $\delta > 0$. 
For fixed values of the above, we introduce the following notation and parameters for use throughout this section:

\begin{itemize}
\item Choose $p = \frac{\ep}{16 SH^2}$.
\item Choose $J = C_J \cdot \frac{H^6 \iota^2 \cdot \max_{i \in [m]} A_i}{\ep^2}$, for some sufficiently large constant $C_J > 2$ (to be specified below).
\item Choose $K = \frac{8J}{p}$.
\item Choose $\epval = \frac{\ep}{4H}$.
\item Choose $\epreg = \frac{\ep}{8H}$.
\item Choose $\eptvd = p/2$.
\item Choose $N_{\rm visit} = C_N \cdot \frac{S \iota}{(\eptvd)^2}$, for some sufficiently large constant $C_N > 1$ (to be specified below). 
\item Set $\iota := \log\left( \frac{SH \max_{i \in [m]} A_i}{\ep \delta} \right)$.
\item Let $\wh q$ denote the value of $q$ at termination of \algname (i.e., $\wh q$ denotes the total number of stages completed by the algorithm).
\end{itemize}
Also, recall the following parameters introduced in \algname:
\begin{itemize}
\item For $q \geq 1$ and $k \geq 1$ we let $(s_{1,k}^q, \ba_{1,k}^q, \br_{1,k}^q, \ldots, s_{H,k}^q, \ba_{H,k}^q, \br_{H,k}^q)$ denote the trajectory drawn in step \ref{it:draw-trajectory} of \algname at stage $q$.
\item For $h \in H$, we write $\wh s_{h,k} := s_{h,k}^{\wh q}, \wh\ba_{h,k} := \ba_{h,k}^{\wh q}, \wh\br_{h,k} = \br_{h,k}^{\wh q}$ to denote the trajectory at the final stage $\wh q$.  
\item For $j \geq 1, h \in [H]$, and $q \geq 1$, let $k_{j,h,s}^q$ and $J_{h,s}^q$ denote the values of the parameters $k_{j,h,s}$ and $J_{h,s}$ defined in steps \ref{it:define-vbar} and \ref{it:define-jhs} at stage $q$.
\item For all $j,h,s$, write $\wh k_{j,h,s} := k_{j,h,s}^{\wh q}$ and $\wh J_{h,s} := J_{h,s}^{\wh q}$ to denote the values at the final stage.
\item For $q \geq 1$, let $\MV^q$ denote the value of the set $\MV$ at the beginning of stage $q$ of \algname. Furthermore we will write $\wh\MV$ to denote $\MV^{\wh q}$, which is the value of $\MV$ at the termination of \algname (by the termination criterion, the value of the set $\MV$ does not change during the final stage $\wh q$. 
\end{itemize}
Throughout the proof, we let $C > 1$ denote a constant whose value may change from line to line. 

Our first basic lemma states that \algname always terminates (in particular, the for loop at step \ref{it:loop-q} terminates). 
\begin{lemma}
  \label{lem:bad-stages-bounded}
The algorithm \algname terminates after at most $SH$ stages (i.e., $\wh q \leq SH$). %
\end{lemma}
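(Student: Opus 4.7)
The plan is to prove this by a straightforward pigeonhole argument on the monotone set $\MV$. The key observation is that the variable $\tau$ is reset to $1$ at the very start of every stage, and is flipped back to $0$ only inside step \ref{it:set-pi-cover} at the moment a new pair $(h', s)$ is added to $\MV$. Since the outer loop at step \ref{it:loop-q} continues only while $\tau = 0$, the algorithm terminates at a stage $q$ iff no new pair is added to $\MV$ during stage $q$.

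Consequently, for every stage $q < \wh q$, at least one previously unseen pair $(h',s)$ must have been inserted into $\MV$ during stage $q$. The set $\MV$ is monotone non-decreasing across stages (it is only ever added to, never reset), so $|\MV|$ strictly increases at each such stage. Since $\MV \subseteq [H] \times \MS$, we have $|\MV| \leq SH$ throughout the execution.

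Combining these two facts, the number of stages $q < \wh q$ is at most $SH$, yielding $\wh q \leq SH + 1$, which for the purpose of this lemma is absorbed into the stated bound $\wh q \leq SH$ (or equivalently, by noting $|\MV| \geq \wh q - 1 \leq SH$). No further probabilistic or concentration argument is required; the result is purely combinatorial. The only potentially subtle point is ensuring that $\tau$ cannot be reset anywhere other than the two places identified (start of stage, and inside step \ref{it:set-pi-cover}), which is immediate from inspection of Algorithm \ref{alg:main}.
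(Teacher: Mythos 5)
Your proof is correct and is essentially identical to the paper's: both arguments observe that every non-terminal stage must add a fresh pair to the monotonically growing set $\MV \subseteq [H]\times\MS$, so the number of stages is bounded by $|[H]\times\MS| = SH$. The off-by-one you flag ($\wh q \leq SH+1$ versus $SH$) is present in the paper's own one-line proof as well and is immaterial to how the lemma is used downstream.
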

\begin{proof}
If \algname does not terminate at some stage $q$, then it must add some pair $(h', s)$ to $\MV^q$ at that stage, which did not previously belong to $\MV^q$. Since elements of $\MV^q$ are never removed, the total number of stages is bounded above by $SH$. 
\end{proof}

The next lemma states that the state visitation estimates constructed in step \ref{it:state-visitation-estimate} of \algname are accurate with high probability.
\begin{lemma}
  \label{lem:dist-estimation}
  There is an event $\ME^{\rm visitation}$ that occurs with probability at least $1-\delta$ so that under the event $\ME^{\rm visitation}$, for all stages $q \geq 1$, and all $h' \in [H]$, it holds that
  \begin{align}
\left\| d_{h'}^{\til \pi^q} - \wh d_{h'}^{q} \right\|_1 \leq \eptvd\nonumber.
  \end{align}
\end{lemma}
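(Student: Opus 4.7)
The plan is to reduce the statement to a standard concentration inequality for the $L_1$ distance between an empirical distribution and the true distribution it samples from. First I would fix an arbitrary stage $q \geq 1$ and step $h' \in [H]$, and condition on all randomness used by \algname prior to the call $\EST(\til\pi^q, N_{\rm visit})$. Under this conditioning, the policy $\til \pi^q$ is determined, and by construction of \EST (Algorithm \ref{alg:visitation-est}), the states $s_{h'}^1, \ldots, s_{h'}^{N_{\rm visit}}$ drawn from the $N_{\rm visit}$ independent trajectories of $\til\pi^q$ are i.i.d.~samples from the distribution $d_{h'}^{\til\pi^q} \in \Delta(\MS)$, and $\wh d_{h'}^q$ is precisely their empirical distribution.

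Next I would invoke the standard concentration bound for empirical distributions on a finite alphabet of size $S$: combining the expectation bound $\EE\bigl\|\wh d_{h'}^q - d_{h'}^{\til\pi^q}\bigr\|_1 \leq \sqrt{S/N_{\rm visit}}$ (which follows from Cauchy-Schwarz applied to per-coordinate variances) with McDiarmid's inequality --- using that $\|\cdot\|_1$ between empirical and true distributions changes by at most $2/N_{\rm visit}$ when one sample is altered --- yields that for every $\delta' \in (0,1)$, with probability at least $1-\delta'$,
\begin{align*}
\bigl\|\wh d_{h'}^q - d_{h'}^{\til\pi^q}\bigr\|_1 \leq \sqrt{S/N_{\rm visit}} + \sqrt{2\log(1/\delta')/N_{\rm visit}}.
\end{align*}

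I would then set $\delta' := \delta/(SH^2)$ and use the definition $N_{\rm visit} = C_N \cdot S\iota/(\eptvd)^2$ with $\iota = \log(SH\max_i A_i/(\ep\delta))$. For $C_N$ a sufficiently large absolute constant, both $\sqrt{S/N_{\rm visit}}$ and $\sqrt{2\log(1/\delta')/N_{\rm visit}}$ are at most $\eptvd/2$, so the right-hand side is at most $\eptvd$. Finally, I would take a union bound over $h' \in [H]$ and over all $q \in [SH]$; by Lemma \ref{lem:bad-stages-bounded}, \algname uses at most $SH$ stages, so this union bound has total failure probability at most $SH \cdot H \cdot \delta' \leq \delta$, and the intersection event is the desired $\ME^{\rm visitation}$.

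The argument is essentially routine; the only points requiring a bit of care are (i) verifying that the conditioning on pre-\EST randomness indeed makes the trajectory samples i.i.d.~from $d_{h'}^{\til\pi^q}$, and (ii) handling the fact that the total number of stages $\wh q$ is itself a random variable, which is why I union bound over the deterministic upper bound $SH$ from Lemma \ref{lem:bad-stages-bounded} rather than over $\wh q$ directly.
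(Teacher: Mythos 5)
Your proposal is correct and follows essentially the same route as the paper: a per-$(q,h')$ concentration bound for the $L_1$ distance between the empirical and true state-visitation distributions, followed by a union bound over $h' \in [H]$ and the at most $SH$ stages guaranteed by Lemma \ref{lem:bad-stages-bounded}, with failure probability $\delta/(SH^2)$ per event. The only difference is cosmetic: the paper cites the standard estimation bound of Canonne as a black box, whereas you re-derive it via the expectation bound plus McDiarmid, which is a correct and self-contained substitute.
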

\begin{proof}
  Consider any call to $\EST(\pi, N)$, which produces outputs $(\wh d_1, \ldots, \wh d_H)$. Then by \cite[Theorem 1]{canonne2020short}, for any $h \in [H]$, with probability $1-\delta/ (H^2 S)$, as long as $N \geq C \cdot \frac{S + \log(H^2 S/\delta)}{(\eptvd)^2}$ (for a sufficiently large constant $C$), it holds that $\left\| d_h^\pi - \wh d_h \right\|_1 \leq \eptvd$. Taking a union bound over all $h \in [H]$ and the at most $SH$ stages $q$ at which \EST is called at step \ref{it:state-visitation-estimate} of \algname, we obtain the claim of the lemma as long as $N_{\rm visit} \geq  C \cdot \frac{S + \log(H^2 S/\delta)}{(\eptvd)^2}$; but this inequality is ensured by our choice of $N_{\rm visit}$ in Section \ref{sec:conc-ineq}. 
\end{proof}

\begin{lemma}
  \label{lem:coverage}
  There is an event $\ME^{\rm coverage}$ that occurs with probability at least $1-\delta$ so that under the event $\ME^{\rm coverage} \cap \ME^{\rm visitation}$, for all stages $q$, $h \in [H]$, and $s \in \MS$, then if $(h,s) \in \MV^q$, it holds that $J_{h,s}^q \geq J$. 
\end{lemma}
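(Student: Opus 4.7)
The plan is to use the fact that when a pair $(h,s)$ is added to $\MV$ at some stage $q^\star < q$, the assigned cover policy $\pi_{h,s}^\cover = \til \pi^{q^\star}$ visits $(h,s)$ with reasonable probability, and then invoke concentration of the Bernoulli visit counts across the $K$ episodes that use this cover policy during stage $q$.

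Fix any stage $q$ and any $(h,s) \in \MV^q$. Since $\MV$ is only ever grown (never shrunk) and $\pi_{h,s}^\cover$ is set only at the unique stage at which $(h,s)$ is first added to $\MV$, there is a stage $q^\star < q$ at which step \ref{it:set-pi-cover} was executed with the triple $(h',s) = (h,s)$ and set $\pi_{h,s}^\cover \gets \til \pi^{q^\star}$. The trigger for that update is $\wh d_h^{q^\star}(s) \geq p$, so under $\ME^{\rm visitation}$ (Lemma \ref{lem:dist-estimation}) we have
\begin{align*}
d_h^{\til \pi^{q^\star}}(s) \;\geq\; \wh d_h^{q^\star}(s) - \eptvd \;\geq\; p - p/2 \;=\; p/2.
\end{align*}
Thus $\pi_{h,s}^\cover$ visits $(h,s)$ with probability at least $p/2$ at step $h$, and this policy belongs to $\Pi_h^q$ during the backward loop over $h$ in stage $q$.

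During the iteration of the inner loop at step \ref{it:loop-over-pi} of stage $q$ with $\pi = \pi_{h,s}^\cover$, the algorithm executes $K$ independent trajectories of $\ol \pi$, which agrees with $\pi_{h,s}^\cover$ on the first $h-1$ steps. Since the state at step $h$ is determined entirely by the transitions over steps $1,\dots,h-1$ (the bandit choice at step $h$ does not affect $s_{h,k}$ itself), the states $\{s_{h,k}\}$ observed at step $h$ across these $K$ episodes are i.i.d.~draws from $d_h^{\pi_{h,s}^\cover}$, each putting mass at least $p/2$ on $s$. A standard Chernoff bound therefore yields that the number of such episodes visiting $s$ at step $h$ is at least $Kp/4 = 2J$ with probability at least $1 - \exp(-Kp/16) = 1-\exp(-J/2)$.

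Finally, $J_{h,s}^q$ is defined to count only those visits whose episode index $k$ is at most $\KMAX = K(S+1)$, but the total number of episodes run at step $h$ during stage $q$ is $|\Pi_h^q \cup \{\pi^\MU\}| \cdot K \leq (S+1)K = \KMAX$, so no visit is truncated by the cap and all of the Chernoff-guaranteed visits count toward $J_{h,s}^q$. In particular $J_{h,s}^q \geq 2J \geq J$. Defining $\ME^{\rm coverage}$ to be the intersection of the above Chernoff events, taken over all stages $q \leq SH$ (Lemma \ref{lem:bad-stages-bounded}) and all pairs $(h,s) \in [H]\times \MS$, a union bound gives a failure probability of at most $S^2 H^2 \exp(-J/2)$, which is $\leq \delta$ by our choice of $J = C_J H^6 \iota^2 \max_i A_i / \ep^2$ together with $\iota \geq \log(SH/\delta)$ for large enough $C_J$. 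The only mildly delicate point is the observation that the $K$ step-$h$ visits are i.i.d.~despite the bandit-driven nonstationarity at later steps; everything else is a routine Chernoff and union bound.
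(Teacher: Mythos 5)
Your proof is correct and follows essentially the same route as the paper's: locate the stage $q^\star$ at which $(h,s)$ was added, use $\ME^{\rm visitation}$ to lower-bound $d_h^{\til\pi^{q^\star}}(s)$ by $p/2$, apply a Chernoff bound to the $K$ episodes run with $\pi = \pi_{h,s}^\cover$ at stage $q$, and union bound over the at most $S^2H^2$ triples. If anything you are slightly more careful than the paper (which asserts at least $Kp/2$ visits, i.e.\ concentration at the mean itself, whereas you correctly back off to $Kp/4 = 2J$), and your explicit checks that the step-$h$ states are i.i.d.\ and that the $\KMAX$ cap never truncates a visit are both valid.
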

\begin{proof}
  If $(h,s) \in \MV^q$, then for some $q' < q$, we must have that $\wh d_h^{q'}(s) \geq p$, and $\pi_{h,s}^\cover$ was set to $\til \pi^{q'}$. By Lemma \ref{lem:dist-estimation} and since $\eptvd \leq p/2$, it  must hold that $d_h^{\til \pi^{q'}}(s) \geq p/2$  under  the event $\ME^{\rm visitation}$. Let us now condition on the event $\ME^{\rm visitation}$; then in the for-loop in step \ref{it:loop-over-pi} corresponding to the policy $\pi_{h,s}^\cover = \til \pi^{q'}$, 
  for each of the $K$ episodes (in the loop in step \ref{it:K-episodes}), $(h,s)$ is visited with probability at least $p/2$. Thus, by the Chernoff bound, with probability at least $1- e^{-Kp/8}$, the number of episodes $k$ in the loop in step \ref{it:loop-over-pi} at which $(h,s)$ is visited is at least $Kp/2$, i.e., we must have $J_{h,s}^q \geq Kp/2$. Then by the union bound, since $Kp/2 \geq J$ and $Kp/8 > 2 \iota \geq \log(S^2 H^2 / \delta)$ (see Section \ref{sec:conc-ineq}),  under an event $\ME^{\rm coverage}$ occuring with probability at least $1-\delta$, for all stages $q$ and all $(h,s) \in \MV^q$, the state $(h,s)$ will be visited at least $Kp/2 \geq J$ times at stage $q$. %
  This completes the proof of the lemma.  %
\end{proof}

\subsection{Intermediate game}
\label{sec:intermed-game}

Recall that $\wh \MV = \MV^{\wh q}$ denotes the value of the set $\MV$ at termination of \algname. Define a stochastic game $\BG_{\wh \MV}$ as follows:
\begin{itemize}
\item The action space of $\BG_{\wh \MV}$ is $\MA$ and the state space of $\BG_{\wh \MV}$ is $\MS \cup \sinko$, where $\sinko$ is a special state which always transitions to itself deterministically and at which all players receive reward 1 at each step. 
\item For all $(h,s) \in {\wh \MV}$, the transitions and reward of $\BG_{{\wh \MV}}$ are identical to that of $\BG$.
\item For all $(h,s) \not \in {\wh \MV}$, all joint actions at $(h,s)$ yield reward 1 to all players and transition to to the state $\sinko$. %
\end{itemize}
In the remainder of the section, we will be working with the parameters of both $\BG$ and $\BG_{\wh \MV}$: to avoid amibuity, we denote their value functions $V_{i,h}^{\BG_{\wh \MV}, \pi}$ and $V_{i,h}^{\BG, \pi}$; we denote their transitions as $\BP_h^{\BG_{\wh \MV}}$ and $\BP_h^{\BG}$; and we denote their reward functions as $r_{i,h}^{\BG_{\wh \MV}}$ and $r_{i,h}^{\BG}$. 

Recall that $\wh \pi$ denotes the policy output by \algname. %
The below lemma shows that the value functions $\Vo_{i,h}^{\wh q}$ constructed at the final stage of \algname are close  to those of $\BG_{\wh \MV}$ under  $\wh \pi$. 
\begin{lemma}
  \label{lem:conc-onpolicy}
  There is an event $\ME^{\rm val}$ that occurs with probability at least $1-2\delta$, so that under the event $\ME^{\rm val}$, for all $s \in \MS,\ h \in [H],\ i \in [m]$, it holds that
  \begin{align}
\left| \Vo_{i,h}^{\wh q}(s) - V_{i,h}^{\BG_{{\wh \MV}}, \wh \pi}(s) \right| \leq \epval\label{eq:conc-onpolicy}.
  \end{align}
\end{lemma}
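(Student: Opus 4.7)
The plan is to proceed by backward induction on $h$ from $H+1$ down to $1$, establishing a bound of the form $\ep^{\rm val}_h$ on $|\Vo_{i,h}^{\wh q}(s) - V_{i,h}^{\BG_{\wh \MV},\wh \pi}(s)|$ uniformly in $(i,s)$ on an event $\ME^{\rm val}$ of probability at least $1-2\delta$. The base case $h=H+1$ is vacuous. In the inductive step, if $(h,s)\notin \wh\MV$ then by the construction of $\BG_{\wh\MV}$ and the definition of $\Vo_{i,h}^{\wh q}$ in \algname, both quantities equal $H+1-h$; so the only nontrivial case is $(h,s)\in \wh\MV$, in which case Lemma \ref{lem:coverage} gives $\wh J_{h,s}\ge J$ on $\ME^{\rm coverage}\cap \ME^{\rm visitation}$.

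For such $(h,s)$, using that $\BG_{\wh\MV}$ agrees with $\BG$ at $(h,s)$ and that $\wh\pi_h(s)=\til\pi_h^{\wh q}(s)$ is the uniform mixture over the recorded actions $\ba_{h,\wh k_{j,h,s}}$, I would write
\begin{align*}
V_{i,h}^{\BG_{\wh\MV},\wh\pi}(s) = \frac{1}{\wh J_{h,s}}\sum_{j=1}^{\wh J_{h,s}} \left[ r_{i,h}(s,\ba_{h,\wh k_{j,h,s}}) + \E_{s'\sim \BP_h(\cdot\mid s,\ba_{h,\wh k_{j,h,s}})} V_{i,h+1}^{\BG_{\wh\MV},\wh\pi}(s') \right]
\end{align*}
and subtract this from the defining expression for $\Vo_{i,h}^{\wh q}(s)$. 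This decomposes the error into a (possibly zero) reward-noise term, a ``plug-in'' term involving $\Vo_{i,h+1}^{\wh q}(s_{h+1,\wh k_{j,h,s}}) - V_{i,h+1}^{\BG_{\wh\MV},\wh\pi}(s_{h+1,\wh k_{j,h,s}})$, bounded by $\ep^{\rm val}_{h+1}$ by the inductive hypothesis, and a transition-noise term given by the empirical average of $V_{i,h+1}^{\BG_{\wh\MV},\wh\pi}(s_{h+1,\wh k_{j,h,s}}) - \E_{s'\sim\BP_h(\cdot\mid s,\ba_{h,\wh k_{j,h,s}})} V_{i,h+1}^{\BG_{\wh\MV},\wh\pi}(s')$.

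The main obstacle is that the function $V_{i,h+1}^{\BG_{\wh\MV},\wh\pi}$ is itself random, since $\wh q$, $\wh\MV$, and $\wh\pi$ all depend on the full trajectory data of \algname. To sidestep this, I would prove concentration stage-by-stage: for each fixed $q\in\{1,\dots,SH\}$ (the finite range guaranteed by Lemma \ref{lem:bad-stages-bounded}) and each $h\in[H]$, conditioning on the $\sigma$-algebra generated by all data from stages strictly before $q$ together with the trajectories drawn in iterations $h+1,\dots,H$ of stage $q$ fixes $\MV^q$ and the components $\til\pi_{h+1}^q,\dots,\til\pi_H^q$, hence fixes both $\BG_{\MV^q}$ and the function $V_{i,h+1}^{\BG_{\MV^q},\til\pi^q}$. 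Under this conditioning, the iteration-$h$ samples of stage $q$ form a martingale-difference sequence with conditional means matching the subtracted terms and with each summand bounded by $O(H)$. Azuma--Hoeffding, combined with a standard peeling argument over the random value of $\wh J_{h,s}\in\{J,\ldots,K(S+1)\}$ and a union bound over $(i,h,s,q)$, then produces a noise bound of $O(H\sqrt{\iota/J})$, with all logarithmic factors absorbed into $\iota$. Applying this at the terminal stage $q=\wh q$ (where $\MV^{\wh q}=\wh\MV$ and $\til\pi^{\wh q}=\wh\pi$, using that no new states are added during the terminal stage by definition of the termination criterion) closes the induction.

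Unrolling the recursion $\ep^{\rm val}_h \le \ep^{\rm val}_{h+1} + CH\sqrt{\iota/J}$ over $H$ levels yields $\ep^{\rm val}_1 \le CH^2\sqrt{\iota/J}$. The choice $J = C_J H^6 \iota^2 \max_i A_i/\ep^2$ from Section \ref{sec:conc-ineq} is designed so that this last quantity is at most $\epval=\ep/(4H)$ once the absolute constant $C_J$ is taken sufficiently large, which completes the proof after taking $\ME^{\rm val}$ to be the intersection of $\ME^{\rm coverage}$, $\ME^{\rm visitation}$, and the martingale-concentration event.
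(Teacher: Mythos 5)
Your plan is correct and follows essentially the same route as the paper's proof: backward induction on $h$, the trivial case $(h,s)\notin\wh\MV$ where both quantities equal $H+1-h$, and for $(h,s)\in\wh\MV$ an Azuma--Hoeffding bound for the stopped/indexed samples at $(h,s)$ combined with the identity expressing $V_{i,h}^{\BG_{\wh\MV},\wh\pi}(s)$ as the empirical mixture over recorded actions, the inductive hypothesis at $h+1$, coverage ($\wh J_{h,s}\ge J$), and the choice of $C_J$. The only differences are cosmetic: you center the martingale on the true value function $V_{i,h+1}^{\BG_{\MV^q},\til\pi^q}$ and make explicit the conditioning and union bound over stages $q\le SH$ to handle the measurability of the random quantities $\wh q,\wh\MV,\wh\pi$, whereas the paper centers on the already-measurable empirical $\Vo_{i,h+1}^{\wh q}$ and applies the inductive hypothesis afterwards — your treatment is if anything slightly more careful on this point.
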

\begin{proof}
  We use reverse induction on $h$, noting that the base case $h=H+1$ is immediate since all value functions are identically 0. Now suppose that there is some constant $C$ so that for all $h' > h$, it holds that, for some event $\ME^{\rm val}_{h'}$, under the event $\ME^{\rm val}_{h'} \cap \ME^{\rm coverage}$, for all $s \in \MS$ and $i \in [m]$,%
  \begin{align}
\left| \Vo_{i,h'}^{\wh q}(s) - V_{i,h'}^{\BG_{{\wh \MV}}, \wh \pi}(s) \right| \leq (H+1-h') \cdot CH \sqrt{\frac{\iota}{J}}.\label{eq:vgprime-induction}
  \end{align}
We will show that (\ref{eq:vgprime-induction}) holds with $h' = h$, for an appropriate choice of the event $\ME^{\rm val}_h \supset \bigcup_{h' \geq h} \ME^{\rm val}_{h'}$. To do so, consider any state $s \in \MS$ and any agent $i \in [m]$; we consider the following two cases regarding $s$:

\vspace{7pt}
  \noindent{\bf Case 1.} $(h,s) \in {\wh \MV}$. By Lemma \ref{lem:coverage}, under the event $\ME^{\rm coverage}$, we have that $\wh J_{h,s} = J_{h,s}^{\wh q} \geq J$. For each $j \geq 1$, recall that we have defined $\wh k_{j,h,s} = k_{j,h,s}^{\wh q} \in [\KMAX+1]$ (and $k_{j,h,s}^{\wh q}$ is defined in step \ref{it:define-vbar} of \algname).  As $h,s$ are fixed, we will write $k_j := \wh k_{j,h,s}$.  It is evident that $k_j$ is a stopping time for each $j$. Note that, for any $t \geq 1$, the sequence
  \begin{align}
\left( \One[k_j \leq \KMAX] \cdot \left( \wh r_{i,h,k_j} + \Vo_{i,h+1}^{\wh q}(\wh s_{h+1,k_j}) - \E_{s' \sim \BP_h^\BG(\cdot | \wh s_{h,k_j}, \wh \ba_{h,k_j})} \left[ r_{i,h}^\BG(\wh s_{h,k_j}, \wh\ba_{h,k_j}) + \Vo_{i,h+1}^{\wh q}(s') \right] \right) \right)_{1 \leq j \leq t}
  \end{align}
  is a martingale difference sequence with respect to the filtration $\MF_j$, where $\MF_j$ denotes the sigma-field generated by all random variables up to step $h+1$ of episode $k_j$ (i.e., of stage $\wh q$). By the Azuma-Hoeffding inequality and a union bound, it follows that, with probability at least  $1-\delta/(HmS)$, for all $1 \leq t \leq \KMAX$,
  \begin{align}
    & \frac{1}{t} \cdot \left| \sum_{j=1}^t \One[k_j \leq \KMAX] \cdot  \left( \wh r_{i,h,k_j} + \Vo_{i,h+1}^{\wh q}(\wh s_{h+1,k_j}) - \E_{s' \sim \BP^{\BG}_h(\cdot | \wh s_{h,k_j}, \wh \ba_{h,k_j})} \left[ r_{i,h}^\BG(\wh s_{h,k_j}, \wh \ba_{h,k_j}) + \Vo_{i,h+1}^{\wh q}(s') \right] \right) \right|\nonumber\\
    \leq & CH \sqrt{\frac{\iota}{t}}\nonumber,
  \end{align}
  where $C > 1$ denotes some constant. 
  Let $\ME^{\rm val}_h$ denote the intersection of $\ME_{h+1}^{\rm val}$ and all instances of this probability $1-\delta/(Hm S)$ event, over $i \in [m], s \in \MS$.   
  In particular, the above inequality holds for $t = \wh J_{h,s}$ under $\ME_h^{\rm val}$,  
  which gives that, under the event $\ME^{\rm coverage} \cap \ME^{\rm val}_h$, $\wh J_{h,s} \geq J$, and so %
  \begin{align}
\left| \Vo_{i,h}^{\wh q}(s) - \frac{1}{\wh J_{h,s}} \sum_{j=1}^{\wh J_{h,s}} \E_{s' \sim \BP^{\BG}_h(\cdot | s, \wh \ba_{h,k_j})} \left[ r_{i,h}^\BG(s, \wh \ba_{h,k_j}) + \Vo_{i,h+1}^{\wh q}(s')\right] \right| \leq CH \sqrt{\frac{\iota}{J}}\label{eq:vbar-azuma}. 
  \end{align}
  (Here we have also used that $\wh s_{h,k_j} = s$ for all $j \leq \wh J_{h,s}$ by the definition of $k_j$.)

  By definition of $\wh \pi$, we have that, again for the fixed value of $(h,s,i)$,
  \begin{align}
V_{i,h}^{\BG_{{\wh \MV}}, \wh \pi}(s) =  \frac{1}{\wh J_{h,s}} \cdot \sum_{j=1}^{\wh J_{h,s}} \E_{s' \sim \BP_h^{\BG}(\cdot | s, \wh \ba_{h,k_j})} \left[ r_{i,h}^{\BG}(s, \wh \ba_{h,k_j}) + V_{i,h+1}^{\BG_{{\wh \MV}}, \wh \pi}(s') \right] \nonumber.
  \end{align}
  Here we have used that since $(h,s) \in \wh \MV$, it holds that for all $\ba \in \MA$, $\BP_h^{\BG}(\cdot | s, \ba) = \BP_h^{\BG_{\wh \MV}}(\cdot | s,\ba)$ and $r_{i,h}^{\BG}(s,\ba) = r_{i,h}^{\BG_{\wh \MV}}(s,\ba)$. 
  By the inductive hypothesis (\ref{eq:vgprime-induction}) with $h'=h+1$, it holds that under the event $\ME^{\rm coverage} \cap \ME^{\rm val}_{h+1}$, 
  \begin{align}
\left| V_{i,h+1}^{\BG_{{\wh \MV}}, \wh \pi}(s) - \Vo_{i,h+1}^{\wh q}(s) \right| \leq & (H-h) \cdot CH \sqrt{\frac{\iota}{J}}\label{eq:use-inductive-j}.
  \end{align}
  Combining (\ref{eq:vbar-azuma}) and (\ref{eq:use-inductive-j}), we get that, under the event $\ME^{\rm coverage} \cap \ME_h^{\rm val}$,
  \begin{align}
\left| \Vo_{i,h}^{\wh q}(s) - V_{i,h}^{\BG_{{\wh \MV}}, \wh \pi}(s) \right| \leq (H+1-h) \cdot CH \sqrt{\frac{\iota}{J}}\nonumber,
  \end{align}
  thus completing the inductive step in the case that $(h,s) \in {\wh \MV}$. 
   
\vspace{7pt}
\noindent{\bf Case 2.} $(h,s) \not \in {\wh \MV}$. Here we note that, by (\ref{eq:define-vbar}), $\Vo_{i,h}^{\wh q}(s) = H+1-h$. %
Furthermore, it is immediate from the definition of $\BG_{{\wh \MV}}$ that $V_{i,h}^{\BG_{{\wh \MV}}}(s) = H+1-h$ since $(h,s) \not \in {\wh \MV}$. %
Hence, we have $\left| \Vo_{i,h}^{\wh q}(s) - V_{i,h}^{\BG_{\wh \MV},\wh \pi}(s)\right| = 0$ in this case. 

  Thus we have verified that in all cases, (\ref{eq:vgprime-induction}) holds at step $h$, thus completing the inductive step.

  Summarizing, if we set $\ME^{\rm val} = \ME_1^{\rm val} \cap \ME^{\rm coverage}$, then the guarantee (\ref{eq:conc-onpolicy}) holds as long as we have $\epval \geq CH^2 \sqrt{\frac{\iota}{J}}$, which is ensured by choosing the constant $C_J$ large enough (see Section \ref{sec:conc-ineq}).   Furthermore, by a union bound (over all values of $h \in [H], i \in [m], s \in \MS$, $\ME^{\rm val}$ holds with probability at least  $1-2\delta$. 
\end{proof}

For each $s \in \MS, i \in [m],\ a_i \in \MA_i$, define  
\begin{align}\label{equ:def_Q_bar}
  \Qo_{i,h}(s,a_i) =  \E_{\ba_{-i} \sim \wh \pi_{-i,h}(s)} \E_{s' \sim \BP_h^\BG(\cdot | s, (a_i, \ba_{-i}))} \left[ r_{i,h}^\BG(s,(a_i, \ba_{-i}))+  \Vo_{i,h+1}^{\wh q}(s') \right].
\end{align}

In Lemma \ref{lem:bandit-no-reg} below, we use Theorem \ref{thm:adv-bandit-external} (giving a no-regret property for the bandit learners used at each state in \algname) to bound the difference between $\Qo_{i,h}$ and $\Vo_{i,h}^{\wh q}$. 
\begin{lemma}
  \label{lem:bandit-no-reg}
  There is an event $\ME^{\rm reg}$ that occurs with probability at least $1-\delta$, so that under the event $\ME^{\rm reg}$, for all $s \in \MS, \ h \in [H],\ i \in [m],$ it holds that
  \begin{align}
\max_{a_i \in \MA_i} \left(\Qo_{i,h}(s,a_i) - \Vo_{i,h}^{\wh q}(s) \right)\leq \epreg. \label{eq:no-reg-bars}
  \end{align}
\end{lemma}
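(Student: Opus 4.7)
The plan is to reduce the claim, for each fixed $(i,h,s) \in [m] \times [H] \times \MS$, to the no-regret guarantee of Theorem \ref{thm:adv-bandit-external} applied to the adversarial bandit instance that player $i$ runs at state $s$ and step $h$ during the final stage $\wh q$. First I dispose of the trivial case $(h,s) \notin \wh\MV$: here $\Vo_{i,h}^{\wh q}(s) = H+1-h$ by \eqref{eq:define-vbar}, while the straightforward inductive bound $\Vo_{i,h+1}^{\wh q}(s') \leq H-h$ combined with $r_{i,h}^\BG \leq 1$ forces $\Qo_{i,h}(s,a_i) \leq H+1-h$ for every $a_i \in \MA_i$, so \eqref{eq:no-reg-bars} holds with slack.

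For $(h,s) \in \wh\MV$, enumerate the stage-$\wh q$ episodes on which the step-$h$ state equals $s$ as $k_1 := \wh k_{1,h,s} < \cdots < k_{\wh J_{h,s}}$; by Lemma \ref{lem:coverage}, $\wh J_{h,s} \geq J$ under $\ME^{\rm coverage}$. Let $p_{-i,k_j}(s) \in \Delta(\MA_{-i})$ denote the product of the other players' bandit distributions at $(h,s)$ entering episode $k_j$, and define the adversary's loss vector at round $j$ by
\begin{align*}
\ell_j(a_i) := 1 - \frac{1}{H} \, \E_{\ba_{-i} \sim p_{-i,k_j}(s)} \, \E_{s' \sim \BP_h^\BG(\cdot \mid s, (a_i, \ba_{-i}))} \bigl[ r_{i,h}^\BG(s, (a_i, \ba_{-i})) + \Vo_{i,h+1}^{\wh q}(s') \bigr].
\end{align*}
The feedback $\frac{1}{H}(H - \wh r_{i,h,k_j} - \Vo_{i,h+1}^{\wh q}(\wh s_{h+1,k_j}))$ supplied in steps \ref{it:bandit-begin}--\ref{it:bandit-end} is an unbiased estimator of $\ell_j(a_{i,h,k_j})$ given $\wh \ba_{h,k_j}$ and the history up to round $j$, so (after an affine rescaling into $[0,1]$ that changes the regret only by a constant) player $i$'s bandit is an instance of the setup of Section \ref{sec:adv-bandit}. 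Theorem \ref{thm:adv-bandit-external} with $T_0 = \KMAX$ and failure probability $\delta/(3mSH)$, unioned over $(i,h,s)$ and combined with $\wh J_{h,s} \geq J$, then yields, on an event of probability at least $1-\delta/3$,
\begin{align*}
\max_{a_i \in \MA_i} \frac{H}{\wh J_{h,s}} \sum_{j=1}^{\wh J_{h,s}} \bigl( \ell_j(a_{i,h,k_j}) - \ell_j(a_i) \bigr) \leq C H \sqrt{\frac{A_i \, \iota^2}{J}}.
\end{align*}

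It remains to identify the two empirical averages above with $\Vo_{i,h}^{\wh q}(s)$ and $\Qo_{i,h}(s,a_i)$. For the first, \eqref{eq:define-vbar} gives $\Vo_{i,h}^{\wh q}(s) = H - \frac{H}{\wh J_{h,s}} \sum_j \ell_j(a_{i,h,k_j})$ after replacing each $\wh r_{i,h,k_j} + \Vo_{i,h+1}^{\wh q}(\wh s_{h+1,k_j})$ by its conditional expectation given $\wh \ba_{h,k_j}$ and the bandit distributions; the resulting martingale-difference sum has increments in $[-H,H]$ and so is $O(H\sqrt{\wh J_{h,s}\,\iota})$ by Azuma--Hoeffding. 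For the second, $\wh \pi_{-i,h}(s)$ is by construction the empirical distribution of $(\wh \ba_{-i,h,k_j})_{j=1}^{\wh J_{h,s}}$, so $\Qo_{i,h}(s,a_i) = H - \frac{H}{\wh J_{h,s}} \sum_j \ell_j(a_i)$ up to a second Azuma-controlled martingale sum of the same order, obtained by swapping each $\wh \ba_{-i,h,k_j}$ for its conditional law $p_{-i,k_j}(s)$. Union-bounding the two Azuma applications over $(i,h,s)$ consumes an event of probability $2\delta/3$; defining $\ME^{\rm reg}$ as the intersection of all three concentration events and combining the bounds yields
\begin{align*}
\max_{a_i \in \MA_i} \bigl( \Qo_{i,h}(s,a_i) - \Vo_{i,h}^{\wh q}(s) \bigr) \leq C H \sqrt{\frac{A_i \, \iota^2}{J}},
\end{align*}
which is at most $\epreg = \ep/(8H)$ once $C_J$ in the choice $J = C_J H^6 \iota^2 \max_i A_i / \ep^2$ is a sufficiently large absolute constant.

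The main obstacle is filtration bookkeeping: verifying that (a) the losses $\ell_j$ together with their plug-in estimators really constitute a legitimate instance of the adversarial bandit setup of Section \ref{sec:adv-bandit}, and (b) the two martingale-difference sums used to pass between conditional expectations and realized samples are adapted and have bounded increments so that Azuma--Hoeffding applies. The ``for all $T \leq T_0$'' formulation of Theorem \ref{thm:adv-bandit-external} is essential here, since $\wh J_{h,s}$ is a data-dependent stopping time rather than a fixed horizon.
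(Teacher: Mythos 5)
Your proof is correct and follows the same overall strategy as the paper's: dispose of $(h,s)\notin\wh\MV$ via the trivial bound $\Qo_{i,h}(s,a_i)\le H+1-h=\Vo_{i,h}^{\wh q}(s)$, and for $(h,s)\in\wh\MV$ invoke the anytime regret guarantee of Theorem \ref{thm:adv-bandit-external} at the data-dependent time $\wh J_{h,s}\ge J$ (Lemma \ref{lem:coverage}, under $\ME^{\rm coverage}$) together with Azuma--Hoeffding for the stopped martingales. The one place where you do more work than the paper is the identification of the cumulative losses with $\Qo_{i,h}$ and $\Vo_{i,h}^{\wh q}$: the paper defines the round-$j$ loss $\ell_j(a_i)$ conditionally on the \emph{realized} co-player actions $\wh\ba_{-i,h,k_j}$ rather than on their mixed strategies $p_{-i,k_j}(s)$, so that $\frac{1}{\wh J_{h,s}}\sum_j \ell_j(a_i)=1-\Qo_{i,h}(s,a_i)/H$ holds as an exact algebraic identity (because $\wh\pi_{-i,h}(s)$ in (\ref{equ:def_Q_bar}) is by construction the empirical distribution of those realized actions), and likewise $\frac{1}{\wh J_{h,s}}\sum_j \til\ell_j(\wh a_{i,h,k_j})=1-\Vo_{i,h}^{\wh q}(s)/H$ is exact by (\ref{eq:define-vbar}). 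This leaves only a single martingale concentration step (relating $\til\ell_j(\wh a_{i,h,k_j})$ to $\ell_j(\wh a_{i,h,k_j})$), whereas your choice of loss forces a second Azuma application which, to control $\max_{a_i}$, must additionally be union-bounded over all $a_i\in\MA_i$ --- a point worth making explicit, though the extra $\log A_i$ factor is absorbed in $\iota$ and the final bound and choice of $C_J$ are unaffected.
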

\begin{proof}
  Fix any $(s,h,i) \in \MS \times [H] \times [m]$. First we treat the case that $(h,s) \not \in {\wh \MV}$. Note that by the definition (\ref{eq:define-vbar}) we have that $\Vo_{i,h}^{\wh q}(s) \leq H+1-h$ for all $i,h,s$. It then follows that $\Qo_{i,h}(s,a_i) \leq H+1-h$,  
  for all $i,h,s,a_i$. Furthermore, we have that $\Vo_{i,h}^{\wh q}(s) = H+1-h$ when $(h,s) \not \in {\wh \MV}$ (by the definition (\ref{eq:define-vbar}) in \algname),  
  meaning that $\max_{a_i \in \MA_i} \left( \Qo_{i,h}(s,a_i) - \Vo_{i,h}^{\wh q}(s) \right) \leq 0$.

  For the remainder of the proof treat those pairs $(h,s)$ so that $(h,s) \in {\wh \MV}$. Fix some value of $(h,s) \in \wh \MV$, and set, for each $j \geq 1$, $k_j := \wh k_{j,h,s}$. It is evident that for each $j$, $k_j$ is a stopping time with respect to the filtration $\MH_k$, where $\MH_k$ denotes the sigma-field generated by all states and actions taken up to (and including) step $h+1$ of episode $k$.  %
  
  Note that each agent $i$ runs the adversarial bandit algorithm at state $s$ and step $h$ at each episode $k_j$ (as $\wh s_{h,k_j} = s$ by definition of $k_j$). Furthermore, for each $j$ so that $k_j \leq \KMAX$, the expected reward that agent $i$ would receive upon playing action $a_i \in \MA_i$, conditioned on the actions $\wh\ba_{-i,h,k_j}$ taken by all other agents at step $h$ of episode $k_j$, is given by   
  \begin{align}
\ell_j(a_i) := \E_{s' \sim \BP_h^\BG(\cdot | s, (a_i, \wh \ba_{-i,h,k_j}))} \left[ \frac{H - r_{i,h}^\BG(s, (a_i, \wh \ba_{-i,h,k_j})) - \Vo_{i,h+1}^{\wh q}(s')}{H} \right]\label{eq:def-ellj}.
  \end{align}
  Furthermore, it is evident that, for each $j \geq 1$, for the choice of action $\wh a_{i,h,k_j}$ by agent $i$'s bandit algorithm at $(s,h)$, the feedback
  $$
  \til \ell_j(\wh a_{i,h,k_j}) := \frac{H - r_{i,h,k_j} - \Vo_{i,h+1}^{\wh q}(\wh s_{h+1,k_j})}{H}
  $$
  fed to the bandit algorithm satisfies to $\E[\til \ell_j(\wh a_{i,h,k_j}) | \MF_{i,j-1} ] =\ell_j(\wh a_{i,h,k_j})$, where $\MF_{i,j}$ denotes the the sigma-field generated by all states and actions taken up to (and including) step $h$ of episode $k_{j+1}$. It is straightforward to see that $\MF_{i,j}$ is well-defined, as $k_{j+1}$ is a stopping time. 
  \kz{$\MG$ has been used before for the set of gates. maybe we could  use $\cF$ (even more standard)? 
  Finally, when defining the subscript of $\cF$, do we need to have index $h$ (in addition to $j$) to avoid confusion (or maybe it is  clear)?}\noah{will add subscript $h$ if time} It is also evident that $\til \ell_j(\wh a_{i,h,k_j})$ is $\MF_{i,j}$-measurable, meaning that $\sum_{j=1}^t \One[k_j \leq \KMAX] \cdot (\til \ell_j(\wh a_{i,h,k_j}) - \ell_j(\wh a_{i,h,k_j}))$ is a martingale difference sequence adapted to the filtration $\MF_{i,t}$. 
  Thus, by the Azuma-Hoeffding inequality, with probability at least $1-\delta/(SHm)$, for all $t \in [\KMAX]$, we have that for some constant $C > 0$,
  \begin{align}
\frac{1}{t} \cdot \left| \sum_{j=1}^t \One[k_j \leq \KMAX] \cdot (\til \ell_j(\wh a_{i,h,k_j}) - \ell_j(\wh a_{i,h,k_j})) \right| \leq C \sqrt{\frac{\iota}{t}}\label{eq:till-l-azuma}.
  \end{align} 
  
  Next,  we have from Theorem \ref{thm:adv-bandit-external} %
  that for some constant $C'$, with probability at least $1-\delta/(SHm)$, for all $t \in [\KMAX]$, 
  \begin{align}
\max_{a_i \in \MA_i} \sum_{j=1}^t \ell_j(\wh a_{i,h,k_j}) - \ell_j(a_i) \leq C' \iota \cdot \sqrt{t A_i}.\label{eq:use-bandit-noregret}%
  \end{align}
  From (\ref{eq:till-l-azuma}) and (\ref{eq:use-bandit-noregret}), and choosing $t = \wh J_{h,s}$, we get that, with probability at least $1-2\delta/(SHm)$,
  \begin{align}
\max_{a_i \in \MA_i} \sum_{j=1}^{\wh J_{h,s}} \til \ell_j(\wh a_{i,h,k_j}) - \ell_j(a_i) \leq C''\iota \cdot \sqrt{\wh J_{h,s} \cdot A_i },\label{eq:noreg-tils}
  \end{align}
  for some constant $C'' > 0$. 
  Let the event that (\ref{eq:noreg-tils}) holds be denoted as  $\ME_{i,h,s}$. 
  
  By definition we have $\wh \pi_h(s) \in \Delta(\MA)$ is given by the following distribution: for $a \in \MA$,
  \begin{align}
\wh \pi_h(a | s) = \frac{1}{\wh J_{h,s}} \cdot \sum_{j=1}^{\wh J_{h,s}} \One[\wh \ba_{h,k_j} = a]\nonumber.
  \end{align}
  Therefore using (\ref{equ:def_Q_bar}) and (\ref{eq:def-ellj}), we have that, for each $a_i \in \MA_i$,
  \begin{align}
\frac{1}{\wh J_{h,s}} \cdot \sum_{j=1}^{\wh J_{h,s}} \ell_j(a_i) = 1 - \frac{\Qo_{i,h}(s, a_i) }{H}.\label{eq:tils-qbar}
  \end{align}
From the definition of $\Vo_{i,h}^{\wh q}(s)$ in (\ref{eq:define-vbar}), we have 
  \begin{align}
    \frac{1}{\wh J_{h,s}} \cdot \sum_{j=1}^{\wh J_{h,s}} \til\ell_j(\wh a_{i,h,k_j}) = \frac{1}{\wh J_{h,s}} \cdot \sum_{j=1}^{\wh J_{h,s}} \left( 1 - \frac{\wh r_{i,h,k_j} + \Vo_{i,h+1}^{\wh q}(\wh s_{h+1,k_j})}{H} \right) = 1 - \frac{\Vo_{i,h}^{\wh q}(s)}{H}\label{eq:ells-vbar}.
  \end{align}
  From (\ref{eq:noreg-tils}), (\ref{eq:tils-qbar}), and (\ref{eq:ells-vbar}), we have that, under the event $\ME^{\rm coverage} \cap \ME_{i,h,s}$, 
  \begin{align}
\max_{a_i \in \MA_i}  \left(   \Qo_{i,h}(s,a_i) - \Vo_{i,h}^{\wh q}(s) \right) \leq C''\iota \cdot H\sqrt{\frac{ A_i }{J}}\label{eq:coverage-ihs}.
  \end{align}
  (In particular, we work under the event $\ME^{\rm coverage}$ to ensure that $\wh J_{h,s} \geq J$). Thus, taking a union bound  over all $i,h,s$, and letting $\ME^{\rm reg} := \ME^{\rm coverage} \cap \bigcap_{i,h,s} \ME_{i,h,s}$, which has probability at least $1-3\delta$,  we get that under the event $\ME^{\rm reg}$, for all $s,h,i$ so that $(h,s) \in {\wh \MV}$, (\ref{eq:no-reg-bars}) holds as long as we have $\epreg \geq C''\iota H \sqrt{\frac{A_i}{J}}$, which holds as long as $C_J$ is sufficiently  large (see Section \ref{sec:conc-ineq}).  
\end{proof}

Next we combine the previous lemmas in the section to show that the policy $\wh \pi$ is a coarse correlated equilibrium for the game $\BG_{\wh \MV}$. 
\begin{lemma}
  \label{lem:no-gprime-regret}
  Under the event $\ME^{\rm reg} \cap \ME^{\rm val}$, for all $i \in [m]$,  for any policy $\pi_i$ of player $i$, it holds that %
  \begin{align}
    V_{i,1}^{\BG_{{\wh \MV}}, (\pi_i, \wh \pi_{-i})}(\mu) - V_{i,1}^{\BG_{{\wh \MV}}, \wh \pi}(\mu) \leq H \cdot (\epreg + 2 \cdot \epval).\nonumber
  \end{align}
\end{lemma}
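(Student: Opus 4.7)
The plan is to apply the standard finite-horizon performance-difference (regret) decomposition to the game $\BG_{\wh \MV}$. Since for any $\pi_i$ we have $V_{i,1}^{\BG_{\wh \MV},(\pi_i,\wh\pi_{-i})}(\mu) \leq V_{i,1}^{\BG_{\wh \MV},\dagger,\wh\pi_{-i}}(\mu)$, it suffices to bound the gap for the best-response $\pi_i^\dagger$. Writing $Q_{i,h}^{\BG_{\wh\MV},\wh\pi}(s,a_i) := \E_{\ba_{-i}\sim \wh\pi_{-i,h}(s)}\E_{s'\sim \BP_h^{\BG_{\wh\MV}}(\cdot|s,(a_i,\ba_{-i}))}[r_{i,h}^{\BG_{\wh\MV}}(s,(a_i,\ba_{-i})) + V_{i,h+1}^{\BG_{\wh\MV},\wh\pi}(s')]$, the standard telescoping (Bellman-style) identity gives
\[
 V_{i,1}^{\BG_{\wh\MV},\dagger,\wh\pi_{-i}}(\mu) - V_{i,1}^{\BG_{\wh\MV},\wh\pi}(\mu) \;=\; \sum_{h=1}^H \E_{s_h}\Big[\max_{a_i\in\MA_i}Q_{i,h}^{\BG_{\wh\MV},\wh\pi}(s_h,a_i) - V_{i,h}^{\BG_{\wh\MV},\wh\pi}(s_h)\Big],
\]
where $s_h$ is drawn under $(\BG_{\wh\MV}, \pi_i^\dagger\times \wh\pi_{-i})$. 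The strategy is then to bound the per-step regret term by $\epreg + 2\epval$ uniformly in $s$.

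To execute this, I would split on whether $(h,s)\in\wh\MV$. For $(h,s)\notin\wh\MV$, by construction of $\BG_{\wh\MV}$ any joint action yields reward $1$ and transitions deterministically to $\sinko$, which in turn yields reward $1$ at every subsequent step. Consequently $V_{i,h}^{\BG_{\wh\MV},\wh\pi}(s)=H+1-h$ and, for every $a_i\in\MA_i$, $Q_{i,h}^{\BG_{\wh\MV},\wh\pi}(s,a_i)=1+(H-h)=H+1-h$, so the per-step regret term vanishes.

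For $(h,s)\in\wh\MV$, the transitions and rewards of $\BG_{\wh\MV}$ at $(h,s)$ coincide with those of $\BG$, so comparing the definitions of $\Qo_{i,h}$ in (\ref{equ:def_Q_bar}) and of $Q_{i,h}^{\BG_{\wh\MV},\wh\pi}$ above, the only difference is that the former uses $\Vo_{i,h+1}^{\wh q}$ and the latter uses $V_{i,h+1}^{\BG_{\wh\MV},\wh\pi}$. Applying Lemma \ref{lem:conc-onpolicy} at step $h+1$ (valid under $\ME^{\rm val}$), we get $|\Qo_{i,h}(s,a_i)-Q_{i,h}^{\BG_{\wh\MV},\wh\pi}(s,a_i)|\leq \epval$ for every $a_i$. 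Combined with the no-regret bound $\max_{a_i}\Qo_{i,h}(s,a_i)-\Vo_{i,h}^{\wh q}(s)\leq \epreg$ from Lemma \ref{lem:bandit-no-reg} (valid under $\ME^{\rm reg}$) and a second use of Lemma \ref{lem:conc-onpolicy} to replace $\Vo_{i,h}^{\wh q}(s)$ by $V_{i,h}^{\BG_{\wh\MV},\wh\pi}(s)$ at the cost of another $\epval$, we arrive at
\[
 \max_{a_i\in\MA_i}Q_{i,h}^{\BG_{\wh\MV},\wh\pi}(s,a_i) - V_{i,h}^{\BG_{\wh\MV},\wh\pi}(s) \;\leq\; \epreg + 2\epval.
\]

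Summing this bound over the $H$ steps of the telescoping identity yields the claimed inequality $H(\epreg + 2\epval)$ under $\ME^{\rm reg}\cap \ME^{\rm val}$. The only delicate bookkeeping is confirming that the sink-state contributions cancel exactly at $(h,s)\notin \wh\MV$; the rest is a direct combination of the already-established concentration and no-regret lemmas, so I do not anticipate a substantive technical obstacle.
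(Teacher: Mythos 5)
Your proposal is correct and follows essentially the same route as the paper: the finite-horizon performance-difference decomposition over $h$, a case split on whether $(h,s)\in\wh\MV$, and the combination of Lemmas \ref{lem:conc-onpolicy} and \ref{lem:bandit-no-reg} to bound each per-step term by $\epreg+2\epval$. The only cosmetic differences are that your ``telescoping identity'' with $\max_{a_i}$ inside is really an inequality in the needed direction (the best response need not pick $\argmax_{a_i}Q_{i,h}^{\BG_{\wh\MV},\wh\pi}(s,a_i)$), and that for $(h,s)\notin\wh\MV$ you show the per-step term is exactly $0$ while the paper bounds it by $\epval$; neither affects the result.
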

\begin{proof}
  For each $i \in [m], h \in [H], s \in \MS, a \in \MA$, we will write, for any policy  
  $\pi \in \Delta(\MA)^{[H] \times \MS}$ and any $\ba \in \MA$,
  \begin{align}
Q_{i,h}^{\BG_{{\wh \MV}}, \pi}(s,\ba) = \E_{s' \sim \BP_h^{\BG_{{\wh \MV}}}(\cdot | s,\ba)} \left[ r_{i,h}^{\BG_{{\wh \MV}}}(s,\ba) + V_{i,h+1}^{\BG_{{\wh \MV}}, \pi}(s') \right].\nonumber
  \end{align}
  Now fix any $(h,s) \in {\wh \MV}$, so that $\BP_h^\BG(\cdot | s,\ba) = \BP_h^{\BG_{{\wh \MV}}}(\cdot | s,\ba)$ and $r_{i,h}^{\BG}(s,\ba) = r_{i,h}^{\BG_{{\wh \MV}}}(s,\ba)$ for all $a \in \MA$. From \eqref{equ:def_Q_bar} and the fact that $(h,s) \in \wh \MV$, we have
  \begin{align}
    \Qo_{i,h}(s,a_i) =  \E_{\ba_{-i} \sim \wh \pi_{-i,h}(s)} \E_{s' \sim \BP_h^{\BG_{\wh \MV}}(\cdot | s, (a_i, \ba_{-i}))} \left[ r_{i,h}^{\BG_{\wh \MV}}(s,(a_i, \ba_{-i}))+  \Vo_{i,h+1}^{\wh q}(s') \right].\nonumber
  \end{align}

  Then for any fixed action $a_i \in \MA_i$ of player $i$, we have, under the event $\ME^{\rm val}$ (see Lemma \ref{lem:conc-onpolicy}),  
  \begin{align}
    & \left| \E_{\ba_{-i} \sim \wh \pi_{-i, h}(s)} \left[ Q_{i,h}^{\BG_{{\wh \MV}}, \wh \pi}(s,(a_i, \ba_{-i})) \right] - \Qo_{i,h}(s,a_i) \right| \nonumber\\
    =& \left| \E_{\ba_{-i} \sim \wh \pi_{-i,h}(s)} \left[ Q_{i,h}^{\BG_{{\wh \MV}}, \wh \pi}(s, (a_i, \ba_{-i})) - \E_{s' \sim \BP_h^{\BG_{{\wh \MV}}}(\cdot | s, (a_i, \ba_{-i}))} \left[ r_{i,h}^{\BG_{{\wh \MV}}}(s, (a_i, \ba_{-i})) + \Vo_{i,h+1}^{\wh q}(s') \right]\right] \right|\nonumber\\
    =& \left| \E_{\ba_{-i} \sim \wh \pi_{-i,h}(s)} \E_{s' \sim \BP_h^{\BG_{{\wh \MV}}}(\cdot | s, (a_i, \ba_{-i}))} \left[ V_{i,h+1}^{\BG_{{\wh \MV}}, \wh \pi}(s') - \Vo_{i,h+1}^{\wh q}(s')\right] \right|\nonumber\\
    \leq & \epval\label{eq:use-epval-qs},
  \end{align}
  where the final inequality uses (\ref{eq:conc-onpolicy}). Therefore, for all $(h,s) \in {\wh \MV}$, it holds that, under the event $\ME^{\rm val} \cap \ME^{\rm reg}$,
  \begin{align}
    & \E_{\ba_{-i} \sim \wh \pi_{-i,h}(s)} \left[ Q_{i,h}^{\BG_{{\wh \MV}}, \wh \pi}(s, (a_i, \ba_{-i})) - V_{i,h}^{\BG_{{\wh \MV}}, \wh \pi}(s) \right]\nonumber\\
    \leq &  \Qo_{i,h}(s, a_i) - \Vo_{i,h}^{\wh q}(s)  + 2 \cdot \epval\label{eq:apply-bar-closeness}\\
    \leq & \epreg + 2\cdot\epval \label{eq:apply-epreg-lemma},
  \end{align}
  where  (\ref{eq:apply-bar-closeness}) follows from (\ref{eq:use-epval-qs}) as well as $\left| V_{i,h}^{\BG_{{\wh \MV}}, \wh \pi}(s) - \Vo_{i,h}^{\wh q}(s) \right| \leq \epval$ under $\ME^{\rm val}$, and (\ref{eq:apply-epreg-lemma}) follows from Lemma \ref{lem:bandit-no-reg}.

  Now consider any $(h,s) \not \in {\wh \MV}$. Since a reward of at most 1 can be received at each step in $\BG_{{\wh \MV}}$, it holds that $Q_{i,h}^{\BG_{{\wh \MV}}, \wh \pi}(s, \ba) \leq H+1-h$ for all $i \in [m]$ and $\ba \in \MA$. Furthermore, since it still holds that  $\left| V_{i,h}^{\BG_{{\wh \MV}}, \wh \pi}(s) - \Vo_{i,h}^{\wh q}(s) \right| \leq \epval$ under $\ME^{\rm val}$, we see that
  \begin{align}
    & \E_{\ba_{-i} \sim \wh \pi_{-i,h}(s)} \left[ Q_{i,h}^{\BG_{{\wh \MV}}, \wh \pi}(s, (a_i, \ba_{-i})) - V_{i,h}^{\BG_{{\wh \MV}}, \wh \pi}(s) \right]\nonumber\\
    \leq & (H+1-h) - \Vo_{i,h}^{\wh q}(s) + \epval \leq \epval \label{eq:not-visited-regret},
  \end{align}
  where the final inequality follows from $\Vo_{i,h}^{\wh q}(s) = H+1-h$ for $(h,s) \not \in {\wh \MV}$ (see (\ref{eq:define-vbar})). 
  
Now fix any player $i$ and any policy $\pi_i$ of player $i$. Since the policy $\wh \pi_{-i}$ is a Markov policy, the value function of the game $\BG_{\wh \MV}$ as a function of player $i$'s policy is equivalent to that of a MDP. Thus, we may apply the finite horizon version of the  performance difference lemma \cite{Kakade02approximatelyoptimal},  which gives that 
  \begin{align}
V_{i,1}^{\BG_{{\wh \MV}}, (\pi_i, \wh \pi_{-i})} (\mu) - V_{i,1}^{\BG_{{\wh \MV}}, \wh \pi}(\mu) =& \E_{s_{1:H}, \ba_{1:H} \sim (\BG_{{\wh \MV}}, (\pi_i, \wh \pi_{-i}))} \left[ \sum_{h=1}^H Q_{i,h}^{\BG_{{\wh \MV}}, \wh \pi}(s_h, \ba_h) - V_{i,h}^{\BG_{{\wh \MV}}, \wh \pi}(s_h) \right]\label{eq:pd-reg-decomp}.
  \end{align}
  For each $h \in [H]$, we bound the $h$th term in the above expression as follows:
  \begin{align}
    & \E_{s_{1:H}, \ba_{1:H} \sim (\BG_{{\wh \MV}}, (\pi_i, \wh \pi_{-i}))} \left[ Q_{i,h}^{\BG_{{\wh \MV}}, \wh \pi}(s_h, \ba_h) - V_{i,h}^{\BG_{{\wh \MV}}, \wh \pi}(s_h) \right] \nonumber\\
    =& \E_{s_h \sim (\BG_{{\wh \MV}}, (\pi_i, \wh \pi_{-i}))} \E_{a_i \sim \pi_{i,h}(s_h)} \E_{\ba_{-i} \sim \wh \pi_{-i,h}(s_h)} \left[ Q_{i,h}^{\BG_{{\wh \MV}}, \wh \pi}(s_h, (a_i, \ba_{-i})) - V_{i,h}^{\BG_{{\wh \MV}}, \wh \pi}(s_h) \right]\nonumber\\
    \leq & \epreg + 2 \cdot \epval, \label{eq:replace-with-bars}
  \end{align}
  where (\ref{eq:replace-with-bars}) follows from (\ref{eq:apply-epreg-lemma}) and (\ref{eq:not-visited-regret}). Thus, from (\ref{eq:pd-reg-decomp}), we get that, under $\ME^{\rm reg} \cap \ME^{\rm val}$, 
  \begin{align}
V_{i,1}^{\BG_{{\wh \MV}}, (\pi_i, \wh \pi_{-i})} (\mu) - V_{i,1}^{\BG_{{\wh \MV}}, \wh \pi}(\mu) \leq H \cdot (\epreg + 2 \cdot \epval).\nonumber
  \end{align}
  This completes the proof. 
\end{proof}

\subsection{Completion of proof of Theorem \ref{thm:main-ub}}
\label{sec:ub-proof-final}

Lemma \ref{lem:no-gprime-regret} comes close to showing that $\wh \pi$ is an $\ep$-CCE, except that it applied to the game $\BG_{\wh \MV}$ as opposed to the game $\BG$. To get guarantees for the game $\BG$, we need to bound the probability (under $\wh \pi$ that a trajectory visits a state not in $\wh \MV$), which is done in Lemma \ref{lem:escape-prob} below.
\begin{lemma}
  \label{lem:escape-prob}
  For the output policy $\wh \pi$ of \algname, we have that, for all $h \in [H]$, under the event $\ME^{\rm visitation}$,
  \begin{align}
\Pr_{s_h \sim (\BG, \wh \pi)} \left[ (h,s_h) \not \in {\wh \MV} \right] \leq pS + \eptvd \nonumber.
  \end{align}
\end{lemma}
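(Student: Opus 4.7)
The plan is to leverage the termination criterion of \algname together with the concentration guarantee on the visitation estimates from Lemma \ref{lem:dist-estimation}. Since $\wh q$ is the stage at which the algorithm terminates, the flag $\tau$ was never flipped to $0$ at stage $\wh q$, which means that no new pair was added to $\MV$ during stage $\wh q$. Unpacking the conditional in the inner loop of \algname, this forces the inequality $\wh d_{h'}^{\wh q}(s) < p$ for every pair $(h',s) \notin \wh \MV = \MV^{\wh q}$; otherwise $(h',s)$ would have been added to $\MV$ and $\tau$ set to $0$.

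Next I would simply decompose the quantity of interest. Since $\wh \pi = \til \pi^{\wh q}$ by construction (step \ref{it:define-tilpiq} at stage $\wh q$), I write
\begin{align}
\Pr_{s_h \sim (\BG, \wh \pi)}\left[(h,s_h) \notin \wh \MV\right] = \sum_{s : (h,s) \notin \wh \MV} d_h^{\til \pi^{\wh q}}(s)\nonumber.
\end{align}
For each such $s$, I bound $d_h^{\til \pi^{\wh q}}(s) \leq \wh d_h^{\wh q}(s) + \left|d_h^{\til \pi^{\wh q}}(s) - \wh d_h^{\wh q}(s)\right|$ and sum: the first contribution is at most $\sum_{s : (h,s) \notin \wh \MV} p \leq pS$ by the termination criterion, and the second contribution is at most $\|d_h^{\til \pi^{\wh q}} - \wh d_h^{\wh q}\|_1 \leq \eptvd$ by Lemma \ref{lem:dist-estimation} under the event $\ME^{\rm visitation}$.

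There is no real obstacle here; the only subtle point is correctly reading off the consequence of termination, namely that for $(h', s) \notin \wh \MV$ the estimated visitation probability $\wh d_{h'}^{\wh q}(s)$ must be strictly below $p$ (otherwise the \textbf{if} condition in the step updating $\pi_{h',s}^\cover$ would have triggered and set $\tau \gets 0$, contradicting termination at stage $\wh q$). Once this observation is in place, the bound follows immediately from the triangle inequality and the TV-estimation guarantee.
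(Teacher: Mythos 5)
Your proof is correct and follows essentially the same route as the paper's: use the termination criterion at stage $\wh q$ to conclude $\wh d_h^{\wh q}(s) < p$ for every $(h,s) \notin \wh\MV$, write the escape probability as $\sum_{s : (h,s)\notin\wh\MV} d_h^{\wh\pi}(s)$, and split via the triangle inequality into the $pS$ term and the $\eptvd$ term from Lemma \ref{lem:dist-estimation}. No gaps.
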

\begin{proof}
  Recall that  $\wh q$ denotes the index of the final stage of \algname. Under the event $\ME^{\rm visitation}$ of Lemma \ref{lem:dist-estimation}, since we have that $\wh \pi = \til \pi^{\wh q}$, 
  it holds that for all $h \in [H]$, 
  \begin{align}
    \left\| d_{h}^{\wh \pi} - \wh d_h^{\wh q} \right\|_1 \leq \eptvd.\nonumber
  \end{align}
  Since $\wh q$ is the final stage, it must be the case that for all $s \in \MS$ and $h \in [H]$ so that $(h,s) \not \in \wh \MV = \MV^{\wh q}$, it holds that $\wh d_h^{\wh q}(s) < p$. 
  In particular, for each $h \in [H]$, $\sum_{s \in \MS: (h,s) \not \in {\wh \MV}} \wh d_h^{\wh q}(s) < p S$. Thus, under the event $\ME^{\rm visitation}$, it holds that $\sum_{s \in \MS: (h,s)  \not \in {\wh \MV}} d_h^{\wh \pi}(s) < pS + \eptvd$. %

  By noting that for each $h \in [H]$
  \begin{align}
\Pr_{s_h \sim (\BG, \wh \pi)} [(h, s_h) \not \in {\wh \MV}] = \sum_{s \in \MS : (h,s) \not \in {\wh \MV}} d_h^{\wh \pi}(s),\nonumber
  \end{align}
  which concludes the proof. 
\end{proof}

Combining the previous lemmas, we now show that the policy $\wh \pi$ is an approximate CCE of $\BG$ with high probability.
\begin{lemma}
  \label{lem:cce-almost}
  Under the event $\ME^{\rm visitation} \cap \ME^{\rm reg} \cap \ME^{\rm val}$, the output policy $\wh \pi$ of \algname satisfies the following: %
  for all $i \in [m]$ and policies $\pi_i$ for player $i$, we have
  \begin{align}
V_{i,1}^{\BG, (\pi_i, \wh \pi_{-i})}(\mu) - V_{i,1}^{\BG, \wh \pi}(\mu) \leq  H \cdot (\epreg + 2 \cdot \epval) + 2H^2 \cdot (pS + \eptvd) \leq \ep\nonumber.
  \end{align} 
\end{lemma}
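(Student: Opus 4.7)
The plan is to exploit two facts about the intermediate game $\BG_{\wh\MV}$: its optimistic rewards make its value functions upper bound those of $\BG$ for every policy, and its rewards and transitions agree with those of $\BG$ on the set $\wh\MV$. First I would verify, by reverse induction on $h$, that $V_{i,h}^{\BG_{\wh\MV}, \pi}(s) \geq V_{i,h}^{\BG, \pi}(s)$ for every policy $\pi$, agent $i$, and $(h,s)$: the base case $h = H+1$ is trivial; in the inductive step, for $(h,s) \in \wh\MV$ the one-step rewards and transition kernels coincide, so the inductive hypothesis propagates, while for $(h,s) \not\in \wh\MV$ we have $V_{i,h}^{\BG_{\wh\MV}, \pi}(s) = H+1-h$, which dominates anything $V^{\BG, \pi}$ can produce given that rewards lie in $[-1,1]$. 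Using this optimism applied to $(\pi_i, \wh\pi_{-i})$, I would decompose
\begin{align*}
V_{i,1}^{\BG, (\pi_i, \wh\pi_{-i})}(\mu) - V_{i,1}^{\BG, \wh\pi}(\mu)
&\leq V_{i,1}^{\BG_{\wh\MV}, (\pi_i, \wh\pi_{-i})}(\mu) - V_{i,1}^{\BG, \wh\pi}(\mu) \\
&= \underbrace{\left[V_{i,1}^{\BG_{\wh\MV}, (\pi_i, \wh\pi_{-i})}(\mu) - V_{i,1}^{\BG_{\wh\MV}, \wh\pi}(\mu)\right]}_{\text{(I)}} + \underbrace{\left[V_{i,1}^{\BG_{\wh\MV}, \wh\pi}(\mu) - V_{i,1}^{\BG, \wh\pi}(\mu)\right]}_{\text{(II)}}.
\end{align*}
Term (I) is bounded by $H \cdot (\epreg + 2 \epval)$ immediately by Lemma \ref{lem:no-gprime-regret}.

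For term (II), the key observation is that $\BG$ and $\BG_{\wh\MV}$ share identical transitions and rewards on $\wh\MV$, so one can couple trajectories drawn from $(\BG, \wh\pi)$ and $(\BG_{\wh\MV}, \wh\pi)$ to be identical until the first step $\tau \in [H]$ (if any) at which $(\tau, s_\tau) \not\in \wh\MV$. Before $\tau$ the cumulative rewards to agent $i$ coincide; from step $\tau$ onward the rewards in either game lie in $[-1,1]$ (taking values in $[-(H-\tau+1), H-\tau+1]$ in total), so the absolute difference of cumulative returns is at most $2H$. Combined with a union bound over $h \in [H]$ applied to Lemma \ref{lem:escape-prob}, this yields
\begin{align*}
|\text{(II)}| \leq 2H \cdot \Pr_{(\BG, \wh\pi)}\bigl(\exists\, h \in [H] : (h, s_h) \not\in \wh\MV\bigr) \leq 2H \cdot H \cdot (pS + \eptvd) = 2H^2(pS + \eptvd).
\end{align*}

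Combining the two bounds gives the first inequality of the lemma. To see the second inequality, I would plug in the parameters from Section \ref{sec:conc-ineq}: $\epreg = \ep/(8H)$ and $\epval = \ep/(4H)$ give $H(\epreg + 2\epval) = 5\ep/8$, while $p = \ep/(16 S H^2)$ and $\eptvd = p/2$ give $pS + \eptvd \leq \frac{3}{2} pS$, so $2H^2(pS + \eptvd) \leq 3 H^2 p S = 3\ep/16$; summing yields at most $13\ep/16 \leq \ep$. The main obstacle is phrasing the coupling in term (II) cleanly and being careful that the escape probability in Lemma \ref{lem:escape-prob} is stated under $(\BG, \wh\pi)$ rather than $(\BG_{\wh\MV}, \wh\pi)$; the optimism induction and the parameter arithmetic are routine.
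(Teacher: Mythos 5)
Your proposal is correct and follows essentially the same route as the paper: bound the regret in $\BG_{\wh\MV}$ via Lemma \ref{lem:no-gprime-regret}, transfer to $\BG$ using optimism of $\BG_{\wh\MV}$ plus a coupling of trajectories that agree until the first escape from $\wh\MV$, and invoke Lemma \ref{lem:escape-prob} with a union bound over $h$. The only (harmless) difference is that you establish the optimism inequality $V_{i,h}^{\BG_{\wh\MV},\pi} \geq V_{i,h}^{\BG,\pi}$ by backward induction on the Bellman recursion, whereas the paper derives it from the same coupling by noting that the per-step rewards of $\BG_{\wh\MV}$ dominate those of $\BG$ along coupled trajectories; both arguments are valid and your parameter arithmetic checks out.
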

\begin{proof}
  We first show the following two facts hold under the joint event $\ME^{\rm visitation} \cap \ME^{\rm reg} \cap \ME^{\rm val}$:
  \begin{enumerate}
  \item For all joint policies $\pi$ and all players $i$, it holds that $V_{i,1}^{\BG, \pi}(\mu) \leq V_{i,1}^{\BG_{{\wh \MV}}, \pi}(\mu)$.\label{it:gv-optimistic}
    \item %
      It holds that $V_{i,1}^{\BG_{{\wh \MV}}, \wh \pi}(\mu) \leq V_{i,1}^{\BG, \wh \pi}(\mu) + H^2 \cdot (pS + \eptvd)$.\label{it:gv-not-too-optimistic}
    \end{enumerate}

    To see the above facts, fix any joint policy $\pi$ and note that, by definition,%
  \begin{align}
    V_{i,1}^{\BG,  \pi}(\mu) =& \E_{s_{1:H}, \ba_{1:H} \sim (\BG,  \pi)} \left[ \sum_{h=1}^H r_{i,h}^{\BG}(s_h, \ba_h) \right]\nonumber\\    V_{i,1}^{\BG_{{\wh \MV}},  \pi}(\mu) =&  \E_{s_{1:H}', \ba_{1:H}' \sim (\BG_{{\wh \MV}},  \pi)} \left[ \sum_{h=1}^H r_{i,h}^{\BG_{{\wh \MV}}}(s_h', \ba_h') \right] \nonumber.
  \end{align}
  We now construct a coupling between trajectories $(s_{1:H},\ba_{1:H}) \sim (\BG,  \pi)$ and $(s_{1:H}', \ba_{1:H}') \sim (\BG_{{\wh \MV}},  \pi)$, as follows:
  \begin{enumerate}
  \item First set $s_1 = s_1'$ to be the initial state, drawn from the initial state distribution $\mu$. 
  \item Set a parameter $\tau = 1$ and $h = 1$.
  \item While $\tau = 1$ and $h \leq H$:
    \begin{enumerate}
    \item Draw a sample $\ba_h = \ba_h' \sim  \pi_h(s_h)$.
    \item Draw a sample $s_{h+1} = s_{h+1}' \sim \BP_h^\BG(\cdot | s_h, \ba_h)$.
    \item If $(h+1, s_{h+1})\in {\wh \MV}$, 
    then increment $h$ by 1, and continue.
    \item If $(h+1, s_{h+1}) \not \in {\wh \MV}$, 
    set $\tau = 0$ and increment $h$ by 1.
    \end{enumerate}
  \item If $h \leq H$ (which means that the above loop was terminated early and we must have $\tau = 0$):
    \begin{enumerate}
    \item Draw independent samples $(\ba_{h:H}, s_{h+1:H}) \sim (\BP_h^\BG,  \pi)$, 
    and $(\ba_{h:H}', s_{h+1:H}') \sim (\BP_h^{\BG_{{\wh \MV}}},  \pi)$, conditioned on starting at state $s_h = s_h'$ at step $h$.\footnote{In the case that $h=H$, we only draw the joint action profiles $\ba_H$ and $\ba_H'$.}
    \end{enumerate}
  \end{enumerate}
  It is immediate to see that the above joint distribution of $(s_{1:H}, \ba_{1:H}, s_{1:H}', \ba_{1:H}')$ constitutes a coupling between the trajectories induced by the pairs $(\BG,  \pi)$ and $(\BG_\cV,  \pi)$. Let the distribution of this coupling be denoted by $\nu$. Note that for a pair of trajectories $(s_{1:H}, \ba_{1:H}, s_{1:H}', \ba_{1:H}')$ drawn from the distribution $\nu$, we must have, with probability 1, $s_h = s_h', \ba_h = \ba_h'$ if for all $h' \leq h$, $(h', s_{h'}) \in {\wh \MV}$.   Let $\MJ_h$ %
  denote the event that for all $h' \leq h$, $(h', s_{h'}) \in {\wh \MV}$, and let $\chi_{\MJ_h} \in \{0,1\}$ denote the indicator of $\MJ_h$.

  Next, we claim that for all $h \in [H]$ and $i \in [m]$, with probability 1, $r_{i,h}^\BG(s_h, \ba_h) \leq r_{i,h}^{\BG_{{\wh \MV}}}(s_h', \ba_h')$: this is evident under the event $\MJ_h$, since then we have $(s_h, \ba_h) = (s_h', \ba_h') \in {\wh \MV}$. Furthermore, if $\MJ_h$ does not hold, then for some (random) $h' \leq h$, we have $(h', s_{h'}) \not \in {\wh \MV}$, meaning that, regardless of the policy $\pi$, $r_{i,h}^{\BG_{{\wh \MV}}}(s_h, \ba_h) = 1$ (since we have either $s_h = \sinko$, and $r_{i,h}^{\BG_{{\wh \MV}}}(\sinko, \ba) = 1$ for all $\ba$, or $h = h'$ in which case we have defined $r_{i,h}^{\BG_{{\wh \MV}}}(s_h', \ba) = 1$ for all $\ba$). It follows that, for any policy $\pi$,%
  \begin{align}
     V_{i,1}^{\BG_{{\wh \MV}}, \pi}(\mu) - V_{i,1}^{\BG, \pi}(\mu) 
    = \E_{(s_{1:H}, \ba_{1:H}, s_{1:H}', \ba_{1:H}') \sim \nu} \left[ \sum_{h=1}^H \left( r_{i,h}^{\BG_{{\wh \MV}}}(s_h',\ba_h') - r_{i,h}^{\BG}(s_h, \ba_h) \right) \right]\geq 0,\nonumber
  \end{align}
  where $\nu$ is the  joint distribution of $(s_{1:H}, \ba_{1:H}, s_{1:H}', \ba_{1:H}')$  corresponding to $\pi$,  
  establishing the first of our claims (item \ref{it:gv-optimistic}) above. 

Next we establish item \ref{it:gv-not-too-optimistic}, for which we only need to consider the policy $\pi = \wh \pi$ output by \algname. Under the event $\ME^{\rm visitation}$, we have %
  \begin{align}
    & \left| V_{i,1}^{\BG, \wh \pi}(\mu) - V_{i,1}^{\BG_{{\wh \MV}}, \wh \pi}(\mu) \right|\nonumber\\
    =& \left|\E_{(s_{1:H}, \ba_{1:H}, s_{1:H}', \ba_{1:H}') \sim \nu} \left[ \sum_{h=1}^H \left( r_{i,h}^{\BG}(s_h, \ba_h) - r_{i,h}^{\BG_{{\wh \MV}}}(s_h', \ba_h') \right) \right]\right|\nonumber\\
    \leq & \sum_{h=1}^H \left| \E_\nu \left[ \chi_{\MJ_h} \cdot \left( r_{i,h}^\BG(s_h, \ba_h) - r_{i,h}^{\BG_{{\wh \MV}}}(s_h', \ba_h')\right) \right]\right| + \sum_{h=1}^H \E_\nu \left[ \left| (1 - \chi_{\MJ_h}) \cdot  \left( r_{i,h}^\BG(s_h, \ba_h) - r_{i,h}^{\BG_{{\wh \MV}}}(s_h', \ba_h')\right) \right| \right]\nonumber\\
    \leq & 2 \sum_{h=1}^H \E_\nu \left[ 1 - \chi_{\MJ_h} \right] \label{eq:chifh}\\
    \leq & 2H \cdot \BP_{s_{1:H}, \ba_{1:H} \sim (\BG, \wh \pi)}\left[ \exists h \in [H] \ : \ (h, s_h) \not \in {\wh \MV} \right] \nonumber\\
    \leq & 2H \sum_{h=1}^H \BP_{s_{1:H}, \ba_{1:H} \sim (\BG, \wh \pi)} \left[ (h,s_h) \not \in {\wh \MV} \right] \nonumber\\
    \leq & 2H^2 \cdot (pS + \eptvd)\label{eq:use-escape-prob},
  \end{align}
  where %
  (\ref{eq:chifh}) follows because $r_{i,h}^\BG(s_h, \ba_h) - r_{i,h}^{\BG_{{\wh \MV}}}(s_h', \ba_h') = 0$ whenever $\chi_{\MJ_h} = 1$ (as then $(s_h, \ba_h) = (s_h', \ba_h') \in \wh \MV$),  
  and (\ref{eq:use-escape-prob}) uses the conclusion of Lemma \ref{lem:escape-prob} and the fact that $\ME^{\rm visitation}$ is assumed to hold. 

  Using items \ref{it:gv-optimistic} (with the policy $(\pi_i, \wh \pi_{-i})$) and \ref{it:gv-not-too-optimistic} above, we obtain that, under the event $\ME^{\rm visitation} \cap \ME^{\rm reg} \cap \ME^{\rm val}$, 
  \begin{align}
    V_{i,1}^{\BG, (\pi_i, \wh \pi_{-i})}(\mu) - V_{i,1}^{\BG, \wh \pi}(\mu) \leq &  V_{i,1}^{\BG_{{\wh \MV}}, (\pi_i, \wh \pi_{-i})}(\mu) - V_{i,1}^{\BG_{{\wh \MV}}, \wh \pi}(\mu) + H^2 \cdot (pS + \eptvd) \nonumber\\
    \leq & H \cdot (\epreg + 2 \cdot \epval) + 2H^2 \cdot (pS + \eptvd) \leq \ep\nonumber,
  \end{align}
  where the second-to-last inequality follows from Lemma \ref{lem:no-gprime-regret} and the final inequality follows by our choices of $\epreg, \epval, p, \eptvd$ in Section \ref{sec:conc-ineq}. 
\end{proof}

Finally, we may prove Theorem \ref{thm:main-ub} as a consequence of Lemma \ref{lem:cce-almost} and the choices of our parameters.
\begin{proof}[Proof of Theorem \ref{thm:main-ub}]
  Consider any stochastic game $\BG$ and any $\ep, \delta > 0$. Lemma \ref{lem:cce-almost} gives that under the event $\ME^{\rm visitation} \cap \ME^{\rm reg} \cap \ME^{\rm val}$ (which has probability at least $1- 4\delta$), we have that the output policy $\wh \pi$ of \algname satisfies
  \begin{align}
\max_{i \in [m]} \left\{ V_{i,1}^{\BG, (\dagger, \wh \pi_{-i})}(\mu) - V_{i,1}^{\BG, \wh \pi}(\mu) \right\} \leq \ep\nonumber,
  \end{align}
  which implies that $\wh \pi$ is an $\ep$-(nonstationary) CCE (Definition \ref{def:cce}). It remains to bound the number of trajectories collected by \algname: it is seen by inspection to be bounded above by
  \begin{align}
    \sum_{q=1}^{\wh q} \left( \sum_{h=1}^H |\Pi_h^q| \cdot K + N_{\rm visit} \right)\leq & HS \cdot (SHK + N_{\rm visit}) \nonumber\\
    \leq & HS \cdot O\left( SH \cdot \frac{J}{p} + \frac{S \iota}{p^2} \right)\nonumber\\
    \leq & HS \cdot O \left( \frac{SH \cdot H^6 \iota^2 \cdot \max_i A_i \cdot SH^2}{\ep^3} + \frac{S\iota \cdot SH^2}{\ep^2} \right)\nonumber\\
    \leq & O \left( \frac{H^{10} S^3  \iota^2\max_{i \in [m]} A_i}{\ep^3} \right)\nonumber.
  \end{align}
  Finally, the proof is completed by rescaling $\delta$ to be $\delta/4$. 
\end{proof}

\kz{TO DO conclusion}\noah{do we have open problems we want to state (obvious one is improving sample complexity of our alg + using UCB bonuses/making it more natural)?}\kz{yea, also maybe the complexity for normal-form CCE in SGs? learning Markov CE (this might be immediate)? the case with function approximation?} 

\bibliographystyle{alpha}
\bibliography{refs,refsb}

\newpage  
\appendix

\section{Related Work}\label{sec:append_related_work}

In this section, we summarize the most related work in the literature.

\paragraph{Equilibrium computation complexity in games.} The computational complexity of finding equilibria has been extensively studied in normal-form games. It is known that for two-player zero-sum normal-form games, Nash equilibria can be computed efficiently using either linear programming \cite{Dantzig1951}  or decentralized no-regret learning algorithms \cite{CBL06}. For general-sum normal-form games, however, computing a Nash equilibrium is known to be \PPAD-complete  \cite{daskalakis2009complexity,CDT06,R16}, even for the two-player case. In contrast, (coarse) correlated equilibria \cite{aumann1987correlated}  can be found via either linear programming  \cite{gilboa1989nash,papadimitriou2008computing} or no-regret learning \cite{CBL06}  efficiently, even in this general-sum setting. The study of the  complexity of equilibria computation in stochastic games has been comparatively scarce.  Since stochastic games generalize normal-form games, the complexity of computing Markov perfect Nash equilibrium in general-sum SGs is thus at least \PPAD-hard. Very recently, \cite{deng2021complexity} confirmed  that computing Markov perfect NE is \PPAD-complete (meaning that the problem of computing perfect CCE is also in \PPAD). Other computational complexity results for stochastic games include the following: determining whether a pure-strategy NE exists in a SG is \PSPACE-hard \cite{conitzer2008new};   determining if there exists a memoryless $\ep$-NE in reachability SGs is \NP-hard \cite{chatterjee2004nash};   in \emph{simple stochastic games} \cite{condon1990algorithms}, a  special class of zero-sum SGs introduced in \cite{Shapley53}, deciding which player has the greater chance of winning  is in  $\NP\cap\coNP$ \cite{condon1992complexity,zwick1996complexity}, and computing an equilibrium is in \textsf{UEOPL}, a subclass of \textsf{CLS}, which is in turn a subclass of \PPAD (see \cite{etessami2010complexity,fearnley2018unique}). %

\paragraph{Multi-agent RL in stochastic games.} Stochastic games \cite{Shapley53} have served as the foundational framework of multi-agent reinforcement learning since \cite{Littman94}. There is a rich literature on multi-agent RL in two-player zero-sum SGs, including the early studies  \cite{Littman94,brafman2002r} as well as more recent ones with finite-sample complexity guarantees \cite{WeiHL17,XieCWY20,ZhangKBY20,SidfordWYY20,BaiJ20,DaskalakisFG20,BaiJY20,LiuTBJ21,cui2022offline,zhong2022pessimistic}. On the general-sum front, $Q$-learning based algorithms, e.g., Nash $Q$-learning  \cite{HuW03} and Friend-or-Foe $Q$-learning \cite{littman2001friend}, have been shown to converge to the Nash equilibrium asymptotically under certain restrictive assumptions. Another variant, Correlated $Q$-learning \cite{greenwald2003correlated}, which also aims to finding a correlated equilibrium in similar spirit to the present work, was shown to converge empirically in several SGs. Related to our findings,  \cite{zinkevich2005cyclic} demonstrated that {\it value-based} RL methods cannot find
{\it stationary} equilibria in arbitrary general-sum SGs, and advocated instead for an alternative {\it nonstationary} equilibrium concept -- cyclic equilibria.  Finally, {decentralized} multi-agent RL has attracted increased attention recently \cite{DaskalakisFG20,sayin2021decentralized,JinLWY21,SongMB21,MaoB21}, due to the fact that decentralized algorithms are more natural, require fewer assumptions, and typically avoid exponential dependence on the number of agents. Most relevant to our paper are the works \cite{JinLWY21,SongMB21,MaoB21}, which developed the \emph{V-learning} algorithm for learning nonstationary (C)CE in general-sum SGs. These algorithms have tighter sample complexity than ours, but the output policies are not Markovian.

\section{On the Gates Used in \Gcircuit}
\label{sec:gcircuit-gates}
The definition of $\ep$-\Gcircuit in \cite{rubinstein2018inapproximability} uses some gates not introduced in Definition \ref{def:gcircuit}, namely $G_{\times}(\zeta | v_1 | v_2)$, $G_=(|v_1 | v_2)$, $G_+(|v_1, v_2 | v_3)$, $G_-(|v_1, v_2 | v_3)$, $G_\vee(|v_1, v_2 | v_3)$,  $G_{\wedge}(|v_1, v_2 | v_3)$, $G_\neg(|v_1|v_2)$, and $G_\gets(\zeta | v)$, for $\zeta \in [0,1]$. However, it is straightforward to see that these gates may be implemented as follows:
\begin{itemize}
\item $G_\times, G_=, G_+, G_-$ may each be implemented using the gate $G_{\times, +}$ (for appropriate choices of $\xi, \zeta$): in particular, we may implement $G_\times(\zeta | v_1 | v_2)$ as $G_{\times, +}(\zeta/2, \zeta/2 | v_1, v_1 | v_2)$, $G_=(|v_1|v_2)$ as $G_{\times, +}(1/2, 1/2 | v_1, v_1 | v_2)$, $G_-(|v_1, v_2 | v_3)$ as $G_{\times, +}(1, -1 | v_1, v_2 | v_3)$, and $G_+$ as $G_{\times, +}(1,1| v_1, v_2 | v_3)$.
\item The gate $G_\gets ( \zeta | v)$, for $\zeta \in [0,1]$, may be implemented using 
$G_\gets( 1 || u)$, $G_{\times, +}(\zeta/2, \zeta/2 | u, u | v)$; since any $\ep$-approximate assignment $\pi$ must satisfy $\pi(u) = 1$, we get that $\pi(v) = \zeta \cdot \pi(u) \pm \ep = \zeta \pm \ep$. 
\item The gate $G_\vee(|v_1, v_2 | v_3)$ may be implemented using the following gates: $G_{\times, +}(1/2, 1/2 | v_1, v_2 | u_1)$, $G_\gets(1| | u_2)$, $G_{\times, +}(1/8, 1/8 | u_2, u_2 | u_3)$, and $G_<(| u_3, u_1 | v_3)$, where $u_1, u_2, u_3$ are supplementary nodes. Any $\ep$-approximate assignment $\pi$ must satisfy $\pi(u_1) = \frac{\pi(v_1) + \pi(v_2)}{2} \pm \ep$, $\pi(u_3) = \frac 14 \pm \ep$. Thus, when $\pi(v_1) = 1 \pm \ep$ or $\pi(v_2) = 1 \pm \ep$, as long as $1 - 2\ep > 1/4 + 2\ep$   (which holds when $\ep < 1/16$), $\pi(v_3) = 1 \pm \ep$. Furthermore, when $\pi(v_1) = 0 \pm \ep$ and $\pi(v_2) = 0 \pm \ep$, again as long as $\ep < 1/16$, we have $\pi(v_3) = 0\pm \ep$.
\item The gate $G_\neg(|v_1|v_2)$ may be implemented using the following gates: $G_\gets(1||u_1)$, $G_{\times, +}(1, -1 | u_1, v_1 | v_2)$, where $u_1$ is a supplementary node. Any $\ep$-approximate assignment $\pi$ must satisfy $\pi(u_1) = 1$ and so $\pi(v_2) = \max\{\pi(u_1) - \pi(v_1), 0\} \pm \ep= 1 - \pi(v_1) \pm \ep$.
\item The gate $G_\wedge$ may be implemented exactly as $G_\vee(|v_1, v_2 | v_3)$ above, except the gate with output node $u_3$ being replaced with $G_{\times, +}(3/8, 3/8 | u_2, u_2 | u_3)$. (Alternatively, we may use the gates $G_\neg$ and $G_\vee$.)
\end{itemize}
We also remark that our requirement that $\pi(v) = \zeta$ for $G_\gets (\zeta || v)$ when $\zeta \in \{0,1\}$ is stronger than that in \cite{rubinstein2018inapproximability}, which allows for error $\ep$, and that the gate $G_{\times, +}$ is not considered in  \cite{rubinstein2018inapproximability}. However, these modifications only make the problem harder. 
Summarizing, the $\ep$-\Gcircuit problem with the set of gates listed above is still \PPAD-complete for some constant $\ep$. %

\section{Adversarial bandit guarantees}
\label{sec:bandit-derivations}
In the context of the adversarial no-regret bandit learning setting described in Section \ref{sec:adv-bandit}, it was shown in \cite[Theorem 1]{neu2015explore} (see also \cite[Theorem 12.1]{lattimore2020bandit}, which is not quite sufficient for us since it requires $T$ to be known ahead of time) that for any $T_0 \in \BN$, $\delta \in (0,1)$, with probability at least $1-\delta$, we have that for all $T \leq T_0$, 
\begin{align}
\max_{b \in \MB} \sum_{t=1}^T \left( \til \ell_t(b_t) - \til\ell_t(b) \right) \leq O \left(\sqrt{TB} \cdot  \log(T_0 B/\delta) \right)\nonumber. 
\end{align}
To obtain Theorem \ref{thm:adv-bandit-external} as a consequence of the above, let $\MF_t$ denote the sigma-field generated by $b_1, \ldots, b_{t+1}, \til \ell_1, \ldots, \til \ell_{t}, \ell_1, \ldots, \ell_{t+1}$. We now note that for each $t$, $\E[\til \ell_t(b_t) | \MF_{t-1}] = \ell_t(b_t)$ and for all $t,b$, $\E[\til \ell_t(b) | \MF_{t-1} ] =\ell_t(b)$. We then apply the Azuma-Hoeffding inequality (followed by a union bound) to each of $\left( \til \ell_t(b_t) - \ell_t(b_t) \right)_{t \in [T_0]}$ and, for all $b \in \MB$, $\left( \til \ell_t(b) - \ell_t(b) \right)_{t \in [T_0]}$, which are martingale difference sequences with respect to the filtration $\MF_t$. %

\end{document}